\newcommand{\regplus}{\oplus}
\newcommand{\normalcounters}{H}
\newcommand{\atomcounters}{P}
\newcommand{\registers}{\mathcal{R}}
\newcommand{\Loc}{L}
\newcommand{\botval}{\bot\!\!\!\!\bot}
\newcommand{\powerset}[1]{\mathcal{P}(#1)}
\newcommand{\vect}[1]{\mathbf{#1}}
\newcommand{\vr}[1]{\vect{#1}}
\newcommand{\Conf}{\text{\sc Conf}}
\newcommand{\countersA}{\atomcounters {\times} \A}
\newcommand{\counters}{(\normalcounters \cup \atomcounters {\times} \A)}
\newcommand{\datavec}{\cg {\counters}}
\newcommand{\datavecN}{\cgN {\counters}}
\newcommand{\rxvaluation}[1]{#1 \to (\A \cup \{\bot\})}
\newcommand{\rvaluation}{\rxvaluation{\registers}}
\newcommand{\Autpar}[2]{\textnormal{Aut}_{#1}(#2)}
\newcommand{\Autparco}[2]{\text{Aut}_{#2\setminus#1}(#2)}
\newcommand{\Aut}[1]{\Autpar{}{#1}}
\newcommand{\cg}[1]{{#1}\to_\text{\tiny fin}\Z}
\newcommand{\cgN}[1]{{#1}\to_\text{\tiny fin}\N}
\newcommand{\vass}{\text{\sc vass}\xspace}
\newcommand{\dvass}{\text{\sc dvass}\xspace}
\newcommand{\dvas}{\text{\sc dvas}\xspace}
\newcommand{\pseudorunarrow}{\dashrightarrow}
\newcommand{\runarrow}{\longrightarrow}
\newcommand{\backpseudorunarrow}{\dashleftarrow}
\newcommand{\backrunarrow}{\longleftarrow}
\newcommand{\orbsize}[1]{||#1||}
\newcommand{\myTheta}{\mathbf{\Phi}}
\newcommand{\rank}[1]{\text{\sc Rank}(#1)}
\newcommand{\orbitof}[1]{\text{\sc Orbit}(#1)}
\newcommand{\csg}[1]{\text{\sc Graph}(#1)}
\newcommand{\supp}[1]{\text{\sc supp}(#1)}
\newcommand{\pair}[2]{\langle #1,#2 \rangle}
\newcommand{\wqorel}{\sqsubseteq}
\newcommand{\coverset}[1]{\text{\sc Cover}(#1)} %\newcommand{\coverset}[2]{\text{\sc Cover}(#2, #1)}
\newcommand{\expspace}{{\sc ExpSpace}\xspace}
\newcommand{\Ack}{{\sc Ackermann}\xspace}
\newcommand{\singl}{*}
\newcommand{\cond}[2]{\text{\sc State-Eq}(#1, #2)}
\newcommand{\picunit}{3cm}
\newcommand{\halfunit}{1.1cm}
\newenvironment{slexample}
    {\begin{example}\rm}
    {\hfill $\triangleleft$ \end{example}}
\newcommand{\wqo}{\text{\sc wqo}\xspace}
\newcommand{\size}[1]{|#1|}
\newcommand{\A}{\mathbb{A}}
\newcommand{\Z}{\mathbb{Z}}
\newcommand{\N}{\mathbb{N}}
\newcommand{\Nom}{\N_\omega}
\newcommand{\V}{\mathcal{V}}
\newcommand{\down}[1]{#1\!\downarrow\,}
\newcommand{\set}[1]{\left\{#1\right\}}
\newcommand{\setof}[2]{\left\{#1 \, \middle\vert \, #2\right\}}
\newcommand{\setfromto}[2]{\set{#1\ldots #2}}
\newcommand{\setto}[1]{\setfromto 0 {#1}}
\newcommand{\prettyexists}[2]{\exists #1 : #2}
\newcommand{\defeq}{\stackrel{\text{\tiny def}}{=}}
\newcommand{\para}[1]{\subparagraph*{\rm \bf #1.}}
\newcommand{\probdef}[3]{
\begin{itemize} %[leftmargin=2.0cm,labelsep=0.5cm]
    \item[] \textsc{#1}
    \item[] \textit{Input:} \quad\, #2.
    \item[] \textit{Question:} #3?
\end{itemize}
}
\title{Bi-reachability in Petri nets with data}
\author{{\L}ukasz Kami{\'n}ski}{University of Warsaw, Poland}{}{https://orcid.org/0009-0004-1641-9049}
{Partially supported by NCN grant 2021/41/B/ST6/00535.}
\author{S{\l}awomir Lasota}{University of Warsaw, Poland}{}{https://orcid.org/0000-0001-8674-4470}
{Partially supported by the ERC grant INFSYS, agreement no. 950398.}
\authorrunning{{\L}.~Kami{\'n}ski and S.~Lasota} 
\keywords{Petri nets, Petri nets with data, reachability, bi-reachability, reversible reachability, mutual reachability, orbit-finite sets}
\begin{document}

\maketitle

\begin{abstract}
We investigate Petri nets with data, an extension of plain Petri nets 
where tokens carry values from an infinite data domain, 
and executability of transitions is conditioned by equalities between data values.
We provide a decision procedure for the bi-reachability problem: given a Petri net and its two configurations,
we ask if each of the configurations is reachable from the other.
This pushes forward the decidability borderline, as
the bi-reachability problem subsumes the coverability problem (which is known to be decidable) and is subsumed
by the reachability problem (whose decidability status is unknown).
\end{abstract}

\section{Introduction}

We investigate the model of Petri nets with data, where tokens carry values from some fixed data domain, 
and executability of transitions is conditioned by relations between data values involved.
We study Petri nets with  \emph{equality} data~\cite{Lasota16,LNORW07,RF11}, i.e., a countable 
infinite data domain 
with  equality as the only  relation.
Other data domains have been also studied, for instance Petri nets with
\emph{ordered} data~\cite{LNORW07}, i.e., a countable infinite, densely and totally ordered data domain
(the model subsumes Petri nets with equality data).
One can also consider an abstract setting of Petri nets with an arbitrary fixed data domain \cite{Lasota16}.

As an illustrating example, consider a Petri net with equality data which has two places $p_1, p_2$ and two transitions $t_1, t_2$, as depicted in Fig.~\ref{fig:pn}.
Transition $t_1$ outputs two tokens with arbitrary but distinct data values onto place $p_1$. 
Transition $t_2$ inputs two tokens with the same data value, say $a$, one from $p_1$ and one from $p_2$, and outputs three tokens: two tokens with arbitrary but equal data values $b$, where $b\neq a$, one onto $p_1$ and the other onto $p_2$, plus one token with a data value $c \neq a$ onto $p_1$.
Note that transition $t_2$ does not specify whether $b = c$ or not, and therefore both 
options are allowed.
\begin{figure}[t] %[tbp]
\centering
\begin{tikzpicture}
[auto,place/.style={circle,draw=black!50,fill=black!20,thick,inner sep=2mm},
transition/.style={rectangle,draw=blue!50,fill=blue!20,thick,rounded corners=0.5pt,inner sep=1mm}]
 
\node (P) [place,label=above:$p_1$] {}
  [children are tokens]
  child{node[token]{}}
  child{node[token,red]{}}
  child{node[token,red]{}};
\node (Q) [place,right=\picunit of P,label=above:$p_2$] {}
  [children are tokens]
  child{node[token,blue]{}}
  child{node[token]{}};
\node (T) [transition, left=\halfunit of P,label=above:\ ] {$t_1$}
  edge[post] node{$x_1, x_2$}(P);
\node (S) [transition, right=\halfunit of P,label=above:\ ] {$t_2$}
  edge[pre,bend right,above] node{$y_1$}(P)
  edge[pre,bend left,above] node{$y_2$}(Q)
  edge[post,bend left,below] node{$z_1, z_3$}(P)
  edge[post,bend right,below] node{$z_2$}(Q);
\node[rectangle,draw] at (-1.60 ,-1.25) (phi1) {$x_1 \neq x_2$};
\draw  [-] (T.south)      -- (phi1);
\node[rectangle,draw] at (2.15 ,-1.25) (phi2) {$z_1 = z_2 \neq y_1 = y_2 \neq z_3$};
\draw  [-] (S.south)      -- (phi2);
\end{tikzpicture}
\caption{A Petri net with equality data, with places $\set{p_1, p_2}$ and transitions $\set{t_1, t_2}$. 
The shown configuration engages 5 tokens, carrying 3 different data values, 
depicted through different colors.
\label{fig:pn}}
\end{figure}
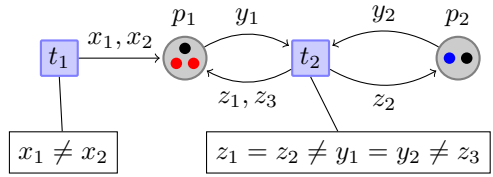

The most fundamental decision problem for Petri nets, the \emph{reachability} problem, asks,
given a net together with source and target configurations, if there is a run from source to target.
It is well known that the reachability problem is undecidable for Petri nets with ordered data \cite{LNORW07}, 
while the decidability status of this problem for equality data still remains an intriguing open question.
The same applies to two other major extensions of plain Petri nets, namely pushdown Petri nets 
\cite{LT17} and branching Petri nets \cite{Figueira17,Verma05}.
On the other hand, the \emph{coverability} problem (where we ask if there is a run from source to a configuration
that possibly extends target by some extra tokens) 
is decidable for both equality and ordered data \cite{Lasota16}.
As widely known, coverability easily reduces to reachability.
Furthermore, the reachability problem is decidable, also for equality and ordered data,
in the special case of \emph{reversible} Petri nets (where transitions are closed under reverse), 
as recently shown in \cite{GL24}.

In this paper we do a step towards decidability of reachability in Petri nets with equality data, and
study a relevant decision problem sandwiched between reachability and
the two latter decidable problems: the \emph{bi-reachability} problem 
(also called \emph{mutual reachability problem} or \emph{reversible reachability problem} \cite{L13}).
It asks, for a net and two its configurations, if each of the configurations is reachable from the other one.
In other words, the problem asks if two given configurations are in the same bi-reachability equivalence class.
Here are all know reductions, valid for Petri nets with either equality or ordered data,
as well as for plain Petri nets (without data):
\[
\xymatrix{
\text{ coverability } \ar[rr] && \text{ bi-reachability } \ar[rr] && \text{ reachability } \\
&& \txt{reachability in\\reversible Petri nets}  \ar[u] &&
}
\]
As our main result we prove decidability of this problem for equality data domain.
This result pushes further the decidability border, subsuming decidability
of coverability, and of reachability in reversible Petri nets with equality data.
Our approach is specific to equality data, and thus we leave unresolved the status of bi-reachability 
in case of ordered data.

The decision procedure for bi-reachability 
is inspired by the classical decomposition approach used to decide reachability
in plain Petri nets \cite{Kosaraju82,Lambert92,Mayr81}.
There, it is often more convenient to work with 
\emph{vector addition systems with states} (\vass) instead of Petri nets \cite{Kosaraju82,Mayr81}.
Following this line, for technical convenience we prefer to work with the model of 
\emph{data \vass} (\dvass) \cite{HLLLST16} 
rather than with Petri nets.
In short, our approach consists of two ingredients.
First, we provide a sufficient condition for a \dvass to admit bi-reachability
(resembling $\Theta_1$ and $\Theta_2$ conditions of \cite{Kosaraju82,Mayr81}), which is effectively testable.
Second, in case the condition fails, we provide an effective way of reducing a \dvass to an equivalent
one, with respect to bi-reachability, which has smaller \emph{rank}.
As ranks are well founded, the reduction step guarantees correctness and termination.
Importantly, the decision procedure manipulates \dvass, and does not need to resort to manipulation of 
more general
structures (like generalised \vass of \cite{Kosaraju82,Mayr81}, or 
graph-transition sequences of \cite{Lambert92}, or witness graph sequences of \cite{LS15}, or
KLM sequences of \cite{LS19}).
This allows us to avoid similar generalisations in the data setting, and 
allows to keep the algorithm relatively simple.

Our work leaves two exciting open questions:
can one extend our approach to bi-reachability in case of ordered data,
or to reachability in case of equality data.
Clearly, if attempting to solve the latter problem, one unavoidably will be faced with some  
generalisation of the above-mentioned structures to the data setting.

\para{Related research}
Petri nets with equality data are a well established and widely studied model of concurrent systems, 
as data allow to model important aspects of such systems not captured by plain Petri nets, 
e.g. process identity \cite{Blondin23,Cervesato99}. 
The model can be also seen as a reinterpretation of the classical definition of Petri nets 
with a relaxed notion of finiteness, namely orbit-finiteness, where one allows for orbit-finite 
sets of places and transitions instead of just finite ones; this is along the lines of~\cite{BKL11full,BKLT13}. 
Similar net models have been proposed already in the early 80ies: 
high-level Petri nets~\cite{GL81} and colored Petri nets~\cite{J81}.
Since then, similar formalisms seem to have been rediscovered, for instance constraint multiset rewriting~\cite{CDLMS99,D02a,D02b}.

In plain Petri nets, bi-reachability is decidable as a consequence of decidability of reachability
\cite{Kosaraju82,Lambert92,Mayr81}.
Later, exact \expspace complexity was established in \cite{L13}.
In our setting, the problem is \Ack-hard due to \cite{LT17}.
In pushdown Petri nets, decidability of reachability
in the reversible subclass has been shown only recently \cite{GMPSZ22}, while decidability status of
bi-reachability is still open.
Indeed, it is known that reachability in pushdown Petri nets
with $d$ places reduces to coverability in pushdown Petri nets with $d+1$ places,
and the latter problem reduces to bi-reachability in pushdown Petri nets.
Hence, decidability of bi-reachability would imply decidability of reachability
in case of pushdown Petri nets.

\section{Preliminaries: orbit-finite sets and vectors}
In the sequel, let $\A$ denote a fixed countable infinite set of data values (called also \emph{atoms}).
By $\Aut \A$ we denote the set of all permutations of $\A$ (called also \emph{automorphisms}).
For a~subset $S \subseteq \A$ we define the subgroup 
$\Autpar S \A = \setof{\sigma \in \Aut \A}{\sigma(s) = s \mbox{ for all } s \in S}$.
Permutations in $\Autpar S \A$ we call \emph{$S$-automorphisms}.

\para{Orbit-finite sets}
In the following we study actions of the group $\Aut \A$ on different sets.
An action of $\Aut \A$ on a set $Z$ is a group homomorphism $\iota$ from $\Aut \A$ to functions $Z\to Z$.
%$\iota : \Aut \A \times Z \to Z$.
We write $\sigma(z)$ instead of $\iota(\sigma)(z)$ for $\sigma \in \Aut \A$ and $z \in Z$.
In the sequel we always use the natural action of $\Aut \A$ that,
regardless of $Z$, renames atoms $a\in\A$ but leaves other elements intact.
Here are two specific examples of such action that will serve later as building blocks in the definition of
our model:
\begin{slexample} \label{ex:action}
Let $\bot\notin\A$.
For any finite sets $\Loc$ and $\registers$ of \emph{locations} and \emph{register names}, respectively,
the group $\Aut \A$ acts naturally on the set of \emph{states}
$Q = \Loc \times \left(\rvaluation\right)$,
namely given $\sigma \in \Aut \A$ and $q = (\ell, \nu) \in Q$,
we put 
\[
\sigma(q) := (\ell, \sigma(\nu)) \qquad \qquad \text{ where } 
\sigma(\nu)(r) = \begin{cases} 
\sigma(\nu (r)), & 
\text{ if } \nu(r) \in \A \\ 
\bot & \text{ if }\nu (r) = \bot.
\end{cases}
\]
Furthermore, for any finite sets $\normalcounters$, $\atomcounters$ of plain \emph{places} and atom \emph{places},
respectively,
$\Aut \A$ acts naturally on functions $\normalcounters \cup \atomcounters\times \A\to \Z$, namely
given $\sigma\in\Aut \A$ and $\vr v : \normalcounters \cup \atomcounters\times\A\to\Z$ we put
$\sigma(\vr v)(h) := \vr v (h)$ for $h \in \normalcounters$, and
$\sigma(\vr v)(p, \sigma(a)) := \vr v(p, a)$ for $p \in \atomcounters$.
\end{slexample}

Roughly speaking, a~set is \emph{orbit-finite}
if it has a~finite number of elements up to automorphisms of atoms.
We define 
the~\emph{orbit} of an~element $z \in Z$:
\begin{align*}
    \orbitof{z} := \setof{\sigma(z)}{\sigma\in {\Aut \A}}.
\end{align*}
As different orbits are necessarily disjoint, $Z$ partitions uniquely into orbits.
A subset $X\subseteq Z$ is \emph{orbit-finite} if it is a~finite union
of orbits.
Clearly all orbits, and hence also all finite unions thereof, are
closed under the action of $\Aut \A$.
Orbit-finite sets are closed under finite unions and products \cite[Lem.~3.24]{atombook}.

\begin{slexample} \label{ex:of}
We continue Example \ref{ex:action}.
The whole set $Q = \Loc \times \left(\rvaluation\right)$ is orbit-finite, since the orbit of a state 
$q=(\ell, \nu) \in Q$ is determined by its location $\ell$, the inverse image of $\bot$, namely 
$\setof{r\in \registers}{\nu(r)=\bot}$,
and the \emph{equality type} of $\nu$, namely the set $\setof{(r,r')\in \registers^2}{\nu(r)=\nu(r')\neq \bot}$.
Indeed, for every two states $q=(\ell, \nu)$ and $q' = (\ell', \nu')$ such that $\ell =\ell'$,
and $\nu$ and $\nu'$ have the same inverse image of $\bot$ and the same equality type, 
there is an automorphism $\sigma\in\Aut \A$ such that $\sigma(q)=q'$.

On the other hand, the function space $(\normalcounters \cup \atomcounters \times\A)\to\Z$ is not orbit-infinite.
\end{slexample}

\para{Vectors}
Given a set $X$, by $\cg X$ we denote the commutative group freely generated by $X$,
and the group operation we denote by $\oplus$.
We write $\vr v \ominus \vr w$ instead of $\vr v \oplus \vr w^{-1}$. 
Equivalently, $\cg X$ can be identified with the set of all functions 
$\vect v : X\to \Z$ which map almost all elements of $X$ to 0, i.e., those functions
where the set $\setof{x\in X}{\vr v(x)\neq 0}$ is finite.
Elements of $\cg X$ we call \emph{$X$-vectors}, or simply \emph{vectors} if  
the generating set $X$ is clear from the context.
The zero vector we denote by $\vr 0$, irrespectively of $X$, and a single-generator vector $x\in X$ we
denote by $\vr 1_x$.
Seen as a function $X\to\Z$, the vector $\vr 1_x$ maps $x$ to 1 and all other elements of $X$ to 0.
When $X$ is finite, we call $X$-vectors finite as well.
Nonnegative vectors, denoted $\cgN X$, 
are elements of the commutative monoid freely generated by $X$, or functions
$\vect v : X\to \N$ which map almost all elements of $X$ to 0 or, equivalently, finite multisets of
elements of $X$.
We write $\oplus W$ to denote the sum of a finite set  $W$ of vectors.

In the sequel the generating set $X$ is most often of the form $\normalcounters \cup \atomcounters {\times}\A$ 
for some finite sets $\atomcounters$, $\normalcounters$. 
Clearly, $\datavec$ is isomorphic to $(\cg \normalcounters) \times (\cg{\atomcounters{\times}\A})$, as every vector $\vr v : \datavec$
decomposes uniquely as the sum $\vr v = \vr u \oplus \vr w$, where
$\vr u : \cg \normalcounters$,  $\vr w : \cg{\atomcounters{\times}\A}$.
Given $\vr v : \datavec$, we define its support $\supp {\vr v} \subseteq \A$, 
as the (necessarily finite) set of those atoms 
which are sent by $\vr v$ to a nonzero value:
    \[\supp {\vr v} \ := \ \setof{a\in \A}{\prettyexists{p\in P}{\vr v(p,a)\neq 0}}.\]
Intuitively, $\supp {\vr v}$ contains those atoms that 'appear' in $\vr v$.
We observe that $\sigma(\vr v) = \vr v$ as long as $\sigma(a)=a$ for all $a\in\supp {\vr v}$.
The natural action of $\Aut \A$ given in Example \ref{ex:action} restricts to the set of vectors 
$\datavec$, which is still not orbit-infinite.

\begin{slexample} \label{ex:vect}
Transitions $t_1, t_2$ of Petri net in Figure \ref{fig:pn} are semantically 
orbit-finite sets of $(\atomcounters {\times}\A)$-vectors, where $P=\set{p_1,p_2}$
(i.e., $\normalcounters = \emptyset$). 
Indeed, the effect of firing $t_1$ amounts to adding two arbitrary but different atoms $a\neq b$ to place $p_1$,
i.e., is described by an $X$-vector 
\[
\vr v_1 \ = \ (p_1, a) \oplus (p_1, b) \qquad\qquad (a\neq b).
\]
As the choice of atoms $a\neq b$ is arbitrary, all possible effects of firing the transition span one orbit of vectors:
$V_1 = \orbitof{\vr v_1}$.
The effect of firing $t_2$ amounts to removing some arbitrary atom $a$ from both $p_1$ and $p_2$, and
adding two further atoms $b,c$ not equal to $a$: one of them is added to both $p_1$ and $p_2$,
while the other one only to $p_1$.
As it is not specified  whether $b = c$ or not, we describe $t_2$ by two $X$-vectors:
\begin{align} \label{eq:v1v2}
& \vr v_2 \ = \ (p_1, c) \oplus (p_1, b) \oplus (p_2, b) \ominus (p_1, a) \ominus (p_2, a) \qquad\qquad
(a\neq b \neq c \neq a)\\
& \vr v'_2 \ = \ (p_1, b) \oplus (p_1, b) \oplus (p_2, b) \ominus (p_1, a) \ominus (p_2, a) \qquad\qquad
(a\neq b). 
\end{align}
As before, the choice of atoms is arbitrary, and hence all possible effects of firing $t_2$ span
the union of two orbits of vectors:
$V_2 = \orbitof{\vr v_2} \cup \orbitof{\vr v'_2}$.
Intuitively, different orbits in $T_2$ correspond to different \emph{equality types} of a tuple of atoms $(a,b,c)$:
one defined by inequalities $a\neq b\neq c\neq a$, and another defined by $a\neq b = c$.
This example illustrates a transformation of Petri nets to data \vass, the model we work with in this paper.%
\footnote{
As in a transformation from plain Petri nets to \vass,
in case of Petri net with \emph{tight loops}, i.e., transitions that simultaneously input and output the same atom from/to
the same place, we would have to split every such transition into input and output part.
}
\end{slexample}

\para{Multiset sum problem}
The following core decision problem, parametrised by an orbit-finite set $X$, will be useful later:

\probdef{Multiset Sum}
{an orbit-finite set $M$ of $X$-vectors, and an $X$-vector $\vr b$}
{is $\vr b$ equal to the sum of a finite multiset of vectors from $M$}

\noindent
In other words, we ask if $\vr b$ is a nonnegative integer linear combination of vectors from $M$.
We assume that $M$ is represented by a finite set of representatives, one per orbit.

\begin{lemma}[{\cite[Thm.~17]{GHL23}}] \label{lem:HLT17}
{\sc Multiset Sum} is decidable.
\end{lemma}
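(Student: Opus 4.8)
To solve {\sc Multiset Sum} I would first recast it as the solvability over the naturals of an orbit-finite system of linear equations. Writing $M = \orbitof{m_1} \cup \dots \cup \orbitof{m_r}$ for the given orbit representatives, a finite multiset of vectors from $M$ is exactly a finitely supported function $\vr n : M \to \N$, and the question is whether some such $\vr n$ satisfies $\bigoplus_{m \in M} \vr n(m)\cdot m = \vr b$. The unknowns are indexed by the orbit-finite set $M$, the single constraint is equivariant, and the right-hand side $\vr b$ has finite support $S := \supp{\vr b}$. The plan is to reduce this orbit-finite system to an ordinary, finite integer program.

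The first step is a symmetry reduction. Since $M$ is closed under $\Aut \A$ and $\vr b$ is fixed by every $S$-automorphism, the set of solutions is invariant under the action of $\Autpar S \A$: if $\vr n$ is a solution, then so is the function $m \mapsto \vr n(\sigma^{-1}(m))$ for every $\sigma \in \Autpar S \A$. Each summand $\sigma(m_i)$ occurring in a solution is classified by its \emph{type}, i.e.\ the equality pattern of its support together with the record of which of its atoms belong to $S$; as $S$ and every $\supp{m_i}$ are finite, there are only finitely many types. The crux is then a finiteness lemma: \emph{if a solution exists, then one exists in which the atoms outside $S$ touched by the summands number at most $N$}, for some $N$ computable from $m_1, \dots, m_r$ and $S$ alone, and in particular independent of the magnitude of $\vr b$.

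Proving this bound is the step I expect to be the main obstacle. Atoms outside $S$ must cancel completely in the total sum, so they behave like the internal nodes of a \emph{circulation} that the multiset routes through; the multiplicities $\vr n(m)$ may be arbitrarily large, but the number of distinct fresh atoms needed to carry such a circulation should be governed by combinatorial width only, not by magnitude. The argument I would use is an exchange/folding step built on invariance under $\Autpar S \A$: fresh atoms that are used in exactly the same way across the multiset can be merged into, or permuted onto, one another without changing the value $\vr b$, and iterating this collapses any solution to one of bounded fresh-atom width. The delicate point is that a single fresh atom may be shared among several summands, so the cancellation constraints couple the fresh atoms together; the folding must therefore be carried out at the level of the whole multiset rather than summand by summand.

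Once $N$ is fixed, the problem becomes finite. Fix a set $F$ of $N$ atoms disjoint from $S$; by invariance under $\Autpar S \A$ it suffices to search for solutions all of whose summands have support inside $S \cup F$. There are only finitely many vectors of $M$ with support in $S \cup F$ (at most one for each way of injecting some $\supp{m_i}$ into $S \cup F$), so introducing one integer unknown per such vector turns the constraint into a finite system of linear equations over $\N$: one equation per element of $S \cup F$, demanding that the total mass equal $\vr b$ on $S$ and vanish on $F$. Feasibility of such a system is an instance of integer linear programming and is therefore decidable, which completes the decision procedure.
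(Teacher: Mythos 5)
You should first note what you are up against: the paper does not prove this lemma at all --- it imports it wholesale as \cite[Thm.~17]{GHL23} --- so the real question is whether your three-step program (symmetry reduction, a fresh-atom width bound, finite integer programming) amounts to an actual proof of that cited theorem. The outer two steps are sound: the solution set is indeed invariant under $\Autpar{S}{\A}$, and once a computable bound $N$ on the number of atoms outside $S$ is granted, renaming those atoms into a fixed set $F$ and solving a finite system over the finitely many elements of $M$ supported in $S \cup F$ is routine (modulo a small slip: the equations should range over the plain coordinates $\normalcounters$ and over the pairs $(p,a)$ with $p \in \atomcounters$, $a \in S \cup F$, not over ``elements of $S \cup F$'' --- your write-up silently forgets the plain places that $X$-vectors carry in this paper).

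The genuine gap is that the finiteness lemma is the entire mathematical content of the theorem, and your sketch does not prove it. The specific mechanism you propose --- merge fresh atoms ``used in exactly the same way'' --- has no pigeonhole to feed it: the usage pattern of a fresh atom records multiplicities, which are unbounded, over the summand instances it touches, so in a given solution no two fresh atoms need share a pattern, and the space of patterns is infinite. Merging an arbitrary pair can identify two distinct slots of a single summand, mapping it outside its orbit and hence outside $M$; what the folding really needs is a map from the fresh atoms of an arbitrary solution onto a bounded palette that is injective on the support of every summand, i.e.\ a proper coloring with boundedly many colors of the co-occurrence graph of fresh atoms --- and nothing bounds the chromatic number of that graph for an arbitrary solution. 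One would have to show that \emph{some} solution with bounded-chromatic co-occurrence structure exists, which is precisely the hard combinatorial core; you flag this as ``the main obstacle,'' correctly, but flagging is not proving, so as it stands your proposal is a reduction of the cited theorem to an unproven width lemma (which, even if extractable from \cite{GHL23}, is comparable in difficulty to the theorem itself). For perspective, the cited result rests on the nontrivial machinery of solvability of orbit-finite linear systems developed by Ghosh, Hofman and Lasota, not on an elementary atom-counting argument; if you want to use this lemma in the present paper, the right move is the one the authors make --- cite it --- and if you want to reprove it, the folding step needs an actual argument, not an appeal to symmetry.
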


\section{Data vector addition systems with states}

Classical Petri nets are equivalent, with respect to most decision problems, to vector addition systems
(\vass).
Likewise, we introduce here a formalism equivalent to Petri nets with data, called data vector addition systems
with states (\dvass).
It is an extension of (stateless) data vector addition systems (\dvas) studied in \cite{HLLLST16}.

\para{Data \vass}
A data \vass (\dvass) $\V = (\Loc, \registers, \normalcounters, \atomcounters, T)$ consists of 
pairwise disjoint finite sets of locations $\Loc$, register names $\registers$, 
plain places $\normalcounters$, atom places $\atomcounters$, and an~\emph{orbit-finite}
set 
\[ 
T \subseteq Q \times \left(\datavec \right) \times Q
\] 
of \emph{transitions},
where $Q = \Loc \times \left(\rvaluation\right)$ is the~set of \emph{states}.
The set $T$ is thus assumed to be a finite union of orbits, under
the natural action of $\Aut \A$ on transitions that extends
the action on vectors and states given in Example \ref{ex:action}:
for $t=(q, \vr v, q') \in T$ we put $\sigma(t) := (\sigma(q), \sigma(\vr v), \sigma(q'))$.
In particular, $T$ is closed under the action of $\Aut \A$.
Given a~state $q=(\ell, \nu) \in Q$, the function
$\nu$ is called \emph{register valuation}.
Intuitively, $\nu(r)=a$ means that register $r$ contains atom $a$,
while $\nu(r)=\bot$ means that $r$ is empty.
The vector $\vr v$ is called the \emph{effect} of transition $(q, \vr v, q')$.

The model of (plain) \vass corresponds to the special case where $\registers =\emptyset$ and $\atomcounters = \emptyset$, i.e., \dvass without registers and atom places.
In this case the set $T \subseteq Q \times (\cg \normalcounters) \times Q$, being orbit-finite, is necessarily finite.
The model of \dvas corresponds to the special case when $\Loc=\set{\singl}$ is a singleton and $\registers = \emptyset$
and $\normalcounters=\emptyset$, i.e., \dvass without locations, registers and plain places.

\smallskip

A~\emph{pseudo-configuration} of $\V$ is a pair $(q, \vr v) \in Q \times (\datavec)$,
written $q(\vr v)$.
A~\emph{pseudo-run}
from $q_1(\vect{v}_0)$ to $q_k(\vect{v}_k)$ is a~sequence of pseudo-configurations
$\pi = q_0(\vect{v}_0)\, q_1(\vect{v}_1)\, \dots\, q_k(\vect{v}_k)$ such that 
$t_i = (q_{i-1}, \vect{v}_i-\vect{v}_{i-1}, q_i) \in T$ for every $i = 1, \dots, k$. 
We say that the pseudo-run $\pi$ \emph{uses} the transitions $t_1, \ldots, t_k \in T$.
The~support of a~state $q = (\ell, \nu)$ is the set of all atoms used in registers, i.e. 
$
\supp q = \nu (\registers) \cap \A.
$
The support of a transition $t=(q,\vr v, q')$ is $\supp t = \supp q \cup \supp {\vr v} \cup \supp {q'}$.
We also define the support of a pseudo-run as the union of supports of all its pseudo-configurations:
$\supp \pi = \supp {q_0} \cup \supp {\vr v_0} \cup \supp {q_1} \cup \ldots \cup \supp {q_k} \cup \supp {\vr v_k}$.
We again extend the action of $\Aut \A$, this time to pseudo-runs, in an expected way:
\[
\sigma \big(q_0(\vect{v}_0)\, q_1(\vect{v}_1)\, \dots\, q_k(\vect{v}_k)\big) \ := \ 
\sigma(q_0)(\sigma(\vect{v}_0))\ \sigma(q_1)(\sigma(\vect{v}_1))\ \dots\ \sigma(q_k)(\sigma(\vect{v}_k)).
\]
The set of pseudo-runs is closed under the action of $\Aut \A$.

\smallskip

\emph{Configurations} are those
pseudo-configurations $q(\vr v)$ where the vector $\vr v$ is nonnegative, i.e., $\vr v : \datavecN$.
Let $\Conf = Q\times\big(\datavecN\big)$ denote the set of all configurations.
A~\emph{run} is a~pseudo-run where every pseudo-configuration $q_i(\vr v_i)$ is actually a~configuration.
We write $q(\vr v) \pseudorunarrow q'(\vr v')$
(resp.~$q(\vr v) \runarrow q'(\vr v')$) if there is a~pseudo-run (resp.~a run)
from $q(\vr v)$ to $q'(\vr v')$.

\begin{slexample} \label{ex:dvass}
Continuing Example \ref{ex:vect}, Petri net in Figure \ref{fig:pn} is equivalent to a \dvas
$\V = (\set{\singl}, \emptyset, \emptyset, \set{p_1, p_2}, T)$, whose transitions are
(as $\registers=\emptyset$, we omit register valuations)
\[
T \ = \ \set{\singl}\times(V_1 \cup V_2)\times\set{\singl}.
\]
The initial configuration shown in Figure \ref{fig:pn} is $\singl(\vr v)$,
where 
$\vr v = (p_1, a) \oplus (p_1, c) \oplus (p_1, c) \oplus (p_2,a) \oplus (p_2, b)$ for some distinct atoms
$a, b, c \in \A$.
In order to illustrate \dvass, we drop the first inequality in the constraint on $t_2$, % in Figure \ref{fig:pn},
and consider the relaxed constraint
$z_1 = z_2 \land y_1 = y_2 \neq z_3$ instead.
This adds a third orbit of possible effects of firing $t_2$, when the atom $b$ added to places
$p_1$ and $p_2$ is the same as the atom $a$ removed 
(c.f.~\eqref{eq:v1v2} in Example \ref{ex:vect}):
\begin{align*}
& \vr v''_2 \ = \ (p_1, c) \oplus (p_1, a) \oplus (p_2, a) \ominus (p_1, a) \ominus (p_2, a) \qquad\qquad
(a\neq c). 
\end{align*}
The modified Petri net is equivalent to a
\dvass $\V' = (\Loc, \emptyset, \emptyset, \atomcounters, T')$ 
with two locations $\Loc=\set{\ell,\ell'}$,
still no registers, 
a~larger set of atom places $P = \set{p_1, p_2, \overline p}$, and transitions:
\[
T' \ = \ \set{\ell}\times(V_1 \cup V_2)\times\set{\ell}
\ \ \cup \ \ \set{\ell}\times\orbitof{\vr w}\times\set{\ell'}
\ \ \cup \ \ \set{\ell'}\times\orbitof{\vr w'}\times\set{\ell},
\]
where vectors $\vr w$, $\vr w'$ are splitting $\vr v''_2$ into input and output part, using an
auxiliary place $\overline p$ to temporarily store atom $a$:
\begin{align} \label{eq:w}
\vr w \ = \ (\overline p, a)  \ominus (p_1, a) \ominus (p_2, a) \qquad
\vr w' \ = \ (p_1, c) \oplus (p_1, a) \oplus (p_2, a) \ominus (\overline p, a) \quad
(a\neq c). 
\end{align}
Transitions in $T'$ corresponding to $V_1\cup V_2$ go from $\ell$ to $\ell$, while the other transitions go
from $\ell$ to $\ell'$, or from $\ell'$ to $\ell$.
The initial configuration of $\V'$ is $\ell(\vr v)$.
Instead of place $\overline p$ one could also use a single register $\registers=\set{r}$, 
and transitions of the form
\[
  ((\ell, \bot), \ \ominus (p_1, a) \ominus (p_2, a), \ (\ell', a)) \qquad 
    ((\ell', a), \ (p_1, c) \oplus (p_1, a) \oplus (p_2, a), \ (\ell, \bot))
    \qquad
(a\neq c),
\]
to the same effect as in \eqref{eq:w}. The initial configuration would be
then $q(\vr v)$, where $q=(\ell, \bot)$.
\end{slexample}

\begin{remark} \label{rem:dvass2dvas}
Our model of \dvass syntactically extends \dvas by locations, registers and plain places.
The extended model is convenient for our decidability argument, while being equivalent to \dvas
with respect to most of decision problems.
Indeed, a \dvass may be transformed into an essentially equivalent \dvas in three steps
(as in the proof of Lemma \ref{lem:cover}):
\begin{align} \label{eq:dvass2dvas}
\xymatrix{
\txt{locations\ } \ar_2[r]  & \txt{\ plain places} \ar^3[d] \\
\txt{registers\ } \ar^1[r] \ar^1[u] & \txt{\ atom places}
}
\end{align}
First, we eliminate registers using locations and atom places, then we encode locations into plain places,
and finally we encode plain places into atom ones.
\end{remark}

\para{State graph}
We define the \emph{state graph} $\csg T = (Q, E)$ of a \dvass $\V$.
Its nodes are states $Q$, and its edges $E\subseteq Q\times Q$ are pairs of states related by some
transition of $\V$:  
\[
E \ = \ \setof{(q,q')\in Q^2}{(q, \vr v, q')\in T \text{ for some vector } \vr v : \datavec}.
\]
When $\registers$ is non-empty, the sets of nodes and edges of $\csg T$ are infinite but orbit-finite.

\para{Bi-reachability problem}
We say that a configuration $q'(\vr v')$ is \emph{reachable} from $q(\vr v)$ if 
there is a run $q(\vect{v}) \runarrow q'(\vect{v}')$.
Two configurations $q(\vr v)$, $q'(\vr v')$ are \emph{bi-reachable} if each of them is reachable from the other:
$q(\vect{v}) \runarrow q'(\vect{v}')$ and $q(\vect{v}) \backrunarrow q'(\vect{v}')$.

\probdef{\dvass bi-reachability}
{a \dvass $(\Loc, \registers, \normalcounters, \atomcounters, T)$ and two configurations, $q(\vr v)$ and $q'(\vr v')$
}
{are $q(\vr v)$, $q'(\vr v')$ bi-reachable}

\noindent
As before, we assume that the orbit-finite set $T$ of transitions is represented by a finite set
of representatives, one per orbit.
As our main result we prove:
\begin{theorem} \label{thm:bireach}
\dvass bi-reachability problem is decidable.
\end{theorem}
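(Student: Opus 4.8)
The plan is to mimic the Kosaraju--Lambert--Mayr decomposition used for reachability in plain \vass, recast as a well-founded recursion on a \emph{rank} $\rank{\V}$ assigned to each \dvass instance. At the top level the procedure, given $\V$ together with the two configurations $q(\vr v)$ and $q'(\vr v')$, either certifies bi-reachability directly, refutes it, or replaces the instance by an equivalent one (with respect to bi-reachability) of strictly smaller rank. Since ranks live in a well-founded order, the recursion terminates, and the two base cases (direct certificate / refutation) decide the problem.

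First I would isolate an effectively testable sufficient condition $\myTheta$ for bi-reachability, playing the role of the classical $\Theta_1 \wedge \Theta_2$. It splits into two parts. The \emph{integer part} asks that the displacement $\vr v' \ominus \vr v$ be realizable, in both directions, as an integer combination of transition effects compatible with a walk in the state graph $\csg T$ from $q$ to $q'$ (and back); concretely this is a solvability question over the orbit-finite transition set, which I would decide by reducing it to \probname{Multiset Sum} and invoking Lemma \ref{lem:HLT17}. The \emph{pumping part} asks that $\csg T$ be strongly connected on the orbits relevant to $q, q'$, and that every coordinate can be pumped arbitrarily high somewhere along the cycle through $q$ and $q'$ --- the data analogue of $\Theta_1$, stated uniformly over the orbit-finitely many atom-place coordinates. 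When $\myTheta$ holds I would build an explicit bi-reachability witness: take an integer solution from the integer part, route it through the state graph using strong connectivity to obtain a pseudo-run, then insert pumping loops to raise all intermediate values high enough that the pseudo-run becomes an honest nonnegative run; the symmetric construction gives the reverse run, so $q(\vr v) \runarrow q'(\vr v')$ and $q(\vr v) \backrunarrow q'(\vr v')$.

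Next I would handle the case $\neg\myTheta$ by a rank-decreasing reduction. If the integer part fails there is no pseudo-run in one direction, hence no run, so the configurations are not bi-reachable and I answer \textbf{no}. Otherwise the pumping part fails, which exposes structure to be absorbed into the finite control: either $\csg T$ fails to be strongly connected, in which case any bi-reachability cycle stays inside a single strongly connected component and I restrict $\V$ to that component; or some coordinate is \emph{bounded} along all candidate cycles, in which case I fold that bounded coordinate into the locations and registers. In each situation I would prove that the resulting \dvass $\V''$, with suitably adjusted configurations, is bi-reachable exactly when $\V$ is, and that $\rank{\V''} < \rank{\V}$. The rank itself I would design as a lexicographic tuple recording, in order of significance, the number of unbounded plain and atom places and the size of the relevant portion of $\csg T$, so that each of the three moves strictly decreases it and folding a bounded place into the control lowers the dominant component.

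The main obstacle is the data layer in the reduction step. In contrast with plain \vass, atom places are indexed by the infinite set $\A$, so \emph{boundedness} of a coordinate is genuinely subtle: one must distinguish a bound on the value stored per active atom from a bound on the number of active atoms, and show that either form of boundedness can be encoded into finitely many registers and locations while preserving bi-reachability and strictly lowering the rank --- all of this uniformly across orbits, since the transition set is only orbit-finite rather than finite. Making the pumping argument lift orbit-finite families of pseudo-runs to genuine nonnegative runs, and verifying that the three reduction moves are exhaustive whenever $\myTheta$ fails, is where essentially all of the work concentrates; the decidability of the integer part is, by contrast, delivered cleanly by Lemma \ref{lem:HLT17}.
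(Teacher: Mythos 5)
Your overall architecture matches the paper's (a rank-decreasing recursion in the Kosaraju--Lambert--Mayr style, an effectively testable sufficient condition playing the role of $\Theta_1\land\Theta_2$, restriction to a strongly connected component, folding bounded places into locations and registers, and {\sc Multiset Sum} for the linear part), but there is a genuine gap in the sufficient condition, and your case analysis on its failure is not exhaustive. Your ``integer part'' only asks that the displacement be realizable in both directions, i.e., that pseudo-runs exist; the paper's $\myTheta_1$ demands pseudo-runs that use a transition from \emph{every orbit} of $T$. This stronger form is what makes sufficiency go through: the paper saturates the two pseudo-runs under the finitely many automorphisms fixing atoms outside a finite support $S$, concatenates the results, and thereby obtains, for every $m$, pseudo-runs using every transition of the restricted plain \vass at least $m$ times --- exactly the unbounded-usage hypothesis $\Theta_1$ of Lemma~\ref{lem:suffvass}. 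With only one pseudo-run per direction plus pumpability, your ``insert pumping loops'' step does not close: the amount $N\vr\Delta$ added by forward pumps at $q$ need not be exactly cancelled by backward pumps at $q'$ (the pump effects are unrelated vectors), and the correction requires adjusting transition multiplicities unboundedly, which is precisely what the every-orbit clause buys. Consequently there is a failure mode your three moves (answer no / restrict to SCC / fold a bounded place) miss entirely: pseudo-runs may exist, the state graph may be strongly connected, all places pumpable, and yet some orbit of transitions is \emph{useless} in the Kirchhoff sense; the paper handles this by a third reduction move, deleting a useless orbit, with $\orbsize{T}$ as the last rank component and usefulness decided by a {\sc Multiset Sum} instance with a marker coordinate (Lemmas~\ref{lem:theta1} and~\ref{lem:eff1}). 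Without an analogue, your recursion can get stuck with neither a certificate nor a reduction.

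The second gap is effectiveness of the pumping part: you never say how to \emph{decide} whether a place is pumpable, nor how to \emph{compute} the bound that you fold into the control, and Lemma~\ref{lem:HLT17} delivers neither. The paper needs two further tools here: a \wqo argument (Lemmas~\ref{lem:wqo} and~\ref{lem:bounded}) showing that a place which is not pumpable is in fact bounded on all respective pumps, and, crucially, computability of coverability sets for \dvas from \cite{HLLLST16}, lifted to \dvass (Lemma~\ref{lem:cover}); both the pumpability test and the concrete bound $B$ are read off the simple-ideal representation of the coverability set (Lemma~\ref{lem:eff}). Your worry about atom-place boundedness is well placed, and the paper resolves it by bounding the \emph{total} token count $\vr v(p)$ on the place and storing those at most $B$ tokens in $B$ fresh registers. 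Finally, note that the paper first normalizes (Lemma~\ref{lem:wlog}) to source and target $q(\vr 0)$, $q'(\vr 0)$ with empty register valuations, making $q,q'$ invariant under all automorphisms; this invariance is used silently every time a run or pump is replayed under an automorphism (in Lemmas~\ref{lem:suff} and~\ref{lem:bounded}), so your formulation with general $\vr v,\vr v'$ and a displacement $\vr v'\ominus\vr v$ would break those replaying steps unless you add the same normalization.
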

Since our model of \dvass includes plain places, we can assume w.l.o.g.~a convenient form of source 
and target configuration that consists, essentially, of just a location.
Let $\botval$ denote the empty register valuation: $\botval(r)=\bot$ for every $r\in\registers$. 
%!TEX encoding = UTF-8 Unicode
\begin{lemmarep} \label{lem:wlog}
    In {\sc \dvass bi-reachability} problem we may assume,
    w.l.o.g., that $q=(\ell,\botval)$,
    $q'=(\ell',\botval)$,
    and $\vect{v}=\vect v' = \vect{0}$.
\end{lemmarep}
%
%The proof is in Appendix.
%
\begin{proof}
    Consider a \dvass $\V = (\Loc, \registers, \atomcounters, \normalcounters, T)$ 
    and two configurations $q(\vr v), q'(\vr v')$, where $q = (\ell, \nu)$, $q'=(\ell',\nu')$.
    We proceed in three steps, as shown in the diagram (cf.~diagram \eqref{eq:dvass2dvas} in 
    Remark \ref{rem:dvass2dvas}):
\begin{align*} %\label{eq:st2st}
\xymatrix{
\txt{locations\ }  & \txt{\ plain places} \ar^3[l]  \\
\txt{registers\ } \ar^1[r] \ar^1[u] & \txt{\ atom places} \ar^2[u] 
}
\end{align*}
    As the first step we redo the first step of the proof of Lemma \ref{lem:cover} which yields
    a \dvass $\V_1 = (\Loc_1, \emptyset, \normalcounters, \atomcounters_1, T_1)$ without registers
    (which implies $q=(\ell, \botval)$ and $q'=(\ell', \botval)$).
    We choose initial and final location 
    \[ 
    \overline \ell \ := \ (\ell, \nu^{-1}(\bot)) \qquad \qquad \overline\ell'  \ := \ (\ell', (\nu')^{-1}(\bot)) \in \Loc_1
    \] 
    and, identifying a register valuation $\nu$ with the vector $\oplus \setof{(r,a)}{\nu(r)=a\neq\bot}$,
    we choose
    initial and final vector $\cgN{\normalcounters\cup\atomcounters_1{\times}\A}$,
    \[
    \overline {\vr v} \ := \ \vr v\oplus\nu \qquad \qquad
    \overline {\vr v}' \ := \ \vr v'\oplus\nu',
    \]
    and claim that the reachability is preserved (we omit registers, and write
    e.g.~$\overline\ell(\overline {\vr v})$):
    \begin{claim}
    The configurations $q(\vr v)$, $q'(\vr v')$ are bi-reachable in $\V$
    if and only if $\overline\ell(\overline {\vr v})$, $\overline\ell'(\overline {\vr v}')$ are bi-reachable in $\V_1$.
    \end{claim}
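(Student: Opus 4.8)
The plan is to exhibit an explicit encoding $\eta$ of the configurations of $\V$ as configurations of $\V_1$ and to show that $\eta$ is a step-by-step bijection on runs, so that reachability---and therefore bi-reachability---is preserved in both directions. Recall that the register-elimination step of Lemma~\ref{lem:cover} replaces each register name $r\in\registers$ by a dedicated atom place (so that $\registers\subseteq\atomcounters_1$) and records in the enlarged location the set of currently empty registers, so that a location of $\V_1$ is a pair $(\ell,S)$ with $\ell\in\Loc$ and $S\subseteq\registers$. Accordingly I set
\[
\eta\big((\ell,\nu)(\vr u)\big) \ := \ (\ell,\nu^{-1}(\bot))\big(\vr u\oplus\nu\big),
\]
where, as in the statement, $\nu$ is read as the vector $\oplus\setof{(r,a)}{\nu(r)=a\neq\bot}$ over the atom places $\registers$. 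By construction $\eta(q(\vr v))=\overline\ell(\overline{\vr v})$ and $\eta(q'(\vr v'))=\overline\ell'(\overline{\vr v}')$, so the claim reduces to the reachability equivalence $c_1\runarrow c_2$ in $\V$ iff $\eta(c_1)\runarrow\eta(c_2)$ in $\V_1$, applied to the two ordered pairs $(q(\vr v),q'(\vr v'))$ and $(q'(\vr v'),q(\vr v))$.

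The heart of the argument is a per-transition correspondence. A transition $t=((\ell_1,\nu_1),\vr w,(\ell_2,\nu_2))\in T$ is simulated by the transition $t'=((\ell_1,\nu_1^{-1}(\bot)),\ \vr w\oplus\nu_2\ominus\nu_1,\ (\ell_2,\nu_2^{-1}(\bot)))\in T_1$, whose effect augments $\vr w$ by the net change $\nu_2\ominus\nu_1$ of register contents on the atom places $\registers$; conversely, by construction every transition of $T_1$ is of this form. Firing $t$ from $(\ell_1,\nu_1)(\vr u)$ is valid precisely when $\vr u\oplus\vr w\geq\vr 0$, and firing $t'$ from $\eta((\ell_1,\nu_1)(\vr u))=(\ell_1,\nu_1^{-1}(\bot))(\vr u\oplus\nu_1)$ is valid precisely when $\vr u\oplus\vr w\oplus\nu_2\geq\vr 0$; since $\vr u,\vr w$ vanish on the register places and $\nu_2$ is nonnegative, the two conditions coincide, and the resulting configuration is exactly $\eta((\ell_2,\nu_2)(\vr u\oplus\vr w))$. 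A straightforward induction on run length lifts this to runs in both directions.

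The one point requiring care---and the main, albeit mild, obstacle---is to rule out spurious runs of $\V_1$ that wander outside the image of $\eta$: a priori an atom place $r\in\registers$ could accumulate two tokens, or hold a token while the location marks $r$ as empty, which corresponds to no register valuation. I would therefore establish the invariant that, along any run of $\V_1$ started at a configuration in the image of $\eta$, the atom place $r$ holds exactly one token when $r\notin S$ and none when $r\in S$, where $(\cdot,S)$ is the current location. This invariant is preserved by every $t'\in T_1$ precisely because $t'$ descends from a register-respecting transition of $\V$, and the nonnegativity constraint on the place $r$ simultaneously enforces that the stored token agrees with $\nu_1(r)$, thereby faithfully simulating register reads. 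It guarantees that every reachable configuration is again in the image of $\eta$, so the per-step correspondence is a genuine bijection rather than a one-way simulation; both directions of the reachability equivalence then follow, and the claim is immediate.
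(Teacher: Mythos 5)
Your per-transition correspondence does not match the construction of Lemma~\ref{lem:cover}, and --- more importantly --- it is unsound on its own terms. You simulate $t=((\ell_1,\nu_1),\vr w,(\ell_2,\nu_2))$ by a single transition with effect $\vr w\oplus\nu_2\ominus\nu_1$. But effects live in the free commutative group, so whenever a transition \emph{keeps} a register's content, $\nu_1(r)=\nu_2(r)=a\neq\bot$, the contributions $\ominus(r,a)$ and $\oplus(r,a)$ cancel and the effect is zero on place $r$. At that point your key assertion --- that ``the nonnegativity constraint on the place $r$ simultaneously enforces that the stored token agrees with $\nu_1(r)$'' --- fails: the simulated transition fires equally well when place $r$ holds any atom $b\neq a$. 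Your proposed invariant controls only the \emph{number} of tokens on the register places (one token iff $r\notin S$), which is indeed preserved, but it says nothing about \emph{which} atom sits there, and that is exactly what a register read must check. Concretely, take $\registers=\set{r}$, $\atomcounters=\set{p}$, and the orbit of transitions $((\ell_1,\, r\mapsto a),\ \ominus(p,a),\ (\ell_2,\, r\mapsto a))$, which consumes from $p$ a token carrying the atom stored in $r$. Your simulated transitions have effect $\ominus(p,a)$ from $(\ell_1,\emptyset)$ to $(\ell_2,\emptyset)$ for \emph{every} $a$, so in your $\V_1$ one may consume a token $(p,c)$ while the register place holds $b\neq c$ --- a step impossible in $\V$. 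Starting from register content $b$ and a single token $(p,c)$ with $b\neq c$, the target with $p$ emptied is reachable in your $\V_1$ but not in $\V$, so reachability (hence bi-reachability) is not preserved.

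This is precisely the ``tight loop'' issue flagged in the footnote to Example~\ref{ex:vect}, and it is why the construction of Lemma~\ref{lem:cover} (which the present claim refers to) is two-phase: one sets $\atomcounters_1=\atomcounters\cup\registers\cup\overline\registers$ with a \emph{disjoint} copy $\overline\registers$, and locations $(\Loc\cup\overline\Loc)\times\powerset{\registers}$. Each transition of $\V$ becomes a step with effect $\vr v\ominus\mu\oplus\overline{\mu'}$ into an intermediate barred location --- removal of $\mu$ from the places $\registers$ and deposit of $\mu'$ on the disjoint places $\overline\registers$, so no cancellation can occur and nonnegativity genuinely verifies the stored atoms --- followed by a flush-back transition with effect $\nu\ominus\overline{\nu}$ from $\overline{\ell'}$ to $\ell'$. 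With that construction, your overall strategy (encode configurations as $\vr u\oplus\nu$, establish a step-by-step correspondence, and confine runs of $\V_1$ to encodings by an invariant that now must also cover the intermediate barred configurations) does go through; as written, however, your proof has a genuine gap at its central step, and the system you actually analyze is not reachability-equivalent to $\V$.
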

 
    Second, consider a register-less \dvass $\V_1 = (\Loc_1, \emptyset, \normalcounters_1, \atomcounters_1, T_1)$ 
    and two configurations $\ell(\vr v) = q(\vr u \oplus \vr w)$ and $\ell'(\vr v') = q'(\vr u' \oplus \vr w')$,
    where 
    $\vr u, \vr u' : \cgN {\normalcounters_1}$ and $\vr w, \vr w' : \cgN{\atomcounters_1{\times}\A}$.
    We argue that w.l.o.g.~we can assume
    $\vr w = \vr w' = \vr 0$.
    Let $S = \supp {\vr v} \cup \supp {\vr v'}$ 
    be the set of those atoms which appear in $\vr w$ or $\vr w'$.
    Intuitively, we move the set $\atomcounters_1\times S$ to plain places. 
    We take $\normalcounters_2 = \normalcounters_1 \cup \atomcounters_1{\times} S$ as plain places,
    and consider atoms $\A' = \A\setminus S$ instead of $\A$. Clearly,
    \[
    \normalcounters_1\cup (\atomcounters_1{\times}\A) \ = \ \normalcounters_2 \cup (\atomcounters_1{\times\A'})
    \]
    and therefore we may take the same transitions $T_1$ as transitions of the new \dvass
    $\V_2 = (\Loc_1, \emptyset, \normalcounters_2, \atomcounters_1, T_1)$.
    As $S$ is finite, the set $T_1$ is still orbit-finite with respect to $\Aut{\A'}$.
    \begin{claim}
    The configurations $\ell({\vr v})$, $\ell'({\vr v}')$ are bi-reachable in $\V_1$
    if and only if $\ell({\vr v})$, $\ell'({\vr v}')$ are bi-reachable in $\V_2$.
    \end{claim}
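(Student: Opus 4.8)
The plan is to show that $\V_1$ and $\V_2$ in fact share the very same configuration space and the very same run relation, so that bi-reachability of the fixed pair $\ell(\vr v), \ell'(\vr v')$ transfers trivially in both directions; there is nothing to simulate, because the two systems are operationally identical and differ only in how the generating set is bookkept.

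First I would observe that the two systems have identical sets of configurations. Both are register-less, so their state sets are both $\Loc_1$. By the displayed identity $\normalcounters_1\cup(\atomcounters_1{\times}\A) = \normalcounters_2\cup(\atomcounters_1{\times}\A')$, the generating sets of their vector components coincide, hence $\cgN{\normalcounters_1\cup\atomcounters_1{\times}\A}$ and $\cgN{\normalcounters_2\cup\atomcounters_1{\times}\A'}$ are literally the same set of nonnegative vectors. Thus $\Conf = \Loc_1\times\cgN{\normalcounters_1\cup\atomcounters_1{\times}\A}$ is a common configuration space, and $\ell(\vr v)$, $\ell'(\vr v')$ are configurations of both $\V_1$ and $\V_2$. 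Concretely, a component $(p,a)$ with $a\in S$ is counted as an atom place in $\V_1$ and as a plain place of $\normalcounters_2$ in $\V_2$, but it is the same coordinate of the same vector.

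Next I would note that both systems carry the same transition set $T_1$, and that a transition $(q,\vr v,q')\in T_1$ fires by the same additive rule $\vr c\mapsto\vr c\oplus\vr v$ in either system: re-labelling the coordinates $(p,a)$ with $a\in S$ from atom places to plain places leaves the integer effect of $\vr v$, and hence the nonnegativity condition defining legal firings, completely unchanged. Consequently the pseudo-run relation $\pseudorunarrow$ and the run relation $\runarrow$ are identical in $\V_1$ and $\V_2$, and the same holds for $\backrunarrow$. Therefore $\ell(\vr v)$ and $\ell'(\vr v')$ are bi-reachable in $\V_1$ exactly when they are bi-reachable in $\V_2$, which is the claim.

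The only genuinely substantive point — and the closest thing to an obstacle, though it has already been dispatched in the statement — is to confirm that $T_1$ is a legitimate transition set for $\V_2$, i.e. that it is orbit-finite under the restricted group $\Aut{\A'}$ of automorphisms fixing $S$. This holds because $T_1$, being closed under all of $\Aut\A\supseteq\Aut{\A'}$, is a fortiori closed under $\Aut{\A'}$, and each of its finitely many $\Aut\A$-orbits splits into only finitely many $\Aut{\A'}$-orbits since $S=\supp{\vr v}\cup\supp{\vr v'}$ is finite. Beyond this bookkeeping there is nothing to prove: the equivalence is an identity of reachability relations rather than a correspondence between distinct runs, so the bulk of the argument is simply recognizing that the two descriptions name the same dynamical system.
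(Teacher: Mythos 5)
Your proposal is correct and matches the paper's (essentially implicit) justification: the paper treats the claim as immediate precisely because $\normalcounters_1\cup(\atomcounters_1{\times}\A) = \normalcounters_2\cup(\atomcounters_1{\times}\A')$ makes the two systems the same dynamical system with the same configurations, the same transition set $T_1$, and hence the same run relation. Your closing remark on orbit-finiteness of $T_1$ under the smaller group is also exactly the point the paper notes ("As $S$ is finite, the set $T_1$ is still orbit-finite with respect to $\Aut{\A'}$"), just spelled out in more detail.
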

    
    In the last third step, consider a \dvass $\V = (\Loc_2, \emptyset, \normalcounters_2, \atomcounters_2, T_2)$ without registers
        and two configurations
    $\ell(\vr u)$ and $\ell'(\vr u')$,
    where $\vr u, \vr u' : \cgN {\normalcounters_2}$.
    We 
    eliminate the initial and final values $\vr u$, $\vr u'$ on plain places
    in a classical way, by introducing new initial and final locations $\overline \ell$, $\overline \ell'$
    and adding to $T_2$ the following four transitions, 
    and thus defining $\V_3 = (\Loc_2\cup\set{\overline {\ell}, \overline {\ell}'}, \emptyset, \normalcounters_2, \atomcounters_2, T_3)$:
    \begin{align} \label{eq:4tran}
    (\overline \ell, \vr u, \ell) \qquad   (\ell', - \vr u', \ \overline \ell') \qquad     (\ell, - \vr u,\  \overline \ell) \qquad
    (\overline \ell', \vr u', \ell').
    \end{align}
    \begin{claim}
    The configurations $\ell(\vr u)$,  $\ell'(\vr u')$ are bi-reachable in $\V_2$ 
    if and only if $\overline \ell(\vr 0)$,  $\overline \ell'(\vr 0)$ are bi-reachable in $\V_3$.
    \end{claim}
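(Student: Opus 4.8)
The plan is to regard the four transitions of~\eqref{eq:4tran} as a loading/unloading gadget attached to the rest of $\V_3$ only through $\ell$ and $\ell'$. Since $\vr u, \vr u' : \cgN{\normalcounters_2}$ live on plain places and so have empty support, and the fresh locations $\overline\ell,\overline\ell'$ carry no registers, each of these transitions has a singleton orbit; hence $T_3$ is a legal (orbit-finite) transition set, and in it $\overline\ell$ has exactly one outgoing transition $(\overline\ell, \vr u, \ell)$ and one incoming transition $(\ell, -\vr u, \overline\ell)$, while $\overline\ell'$ has exactly $(\overline\ell', \vr u', \ell')$ outgoing and $(\ell', -\vr u', \overline\ell')$ incoming. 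No $T_2$-transition touches $\overline\ell$ or $\overline\ell'$, since these locations are new. This rigidity drives both directions.

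For the forward implication I would compose runs. From a witnessing run $\ell(\vr u) \runarrow \ell'(\vr u')$ in $\V_2$ I prepend $(\overline\ell, \vr u, \ell)$, which sends $\overline\ell(\vr 0)$ to $\ell(\vr u)$, and append $(\ell', -\vr u', \overline\ell')$, which sends $\ell'(\vr u')$ to $\overline\ell'(\vr 0)$; every intermediate pseudo-configuration is a genuine configuration, being either one of the original nonnegative ones or $\overline\ell(\vr 0)$, $\overline\ell'(\vr 0)$. This gives $\overline\ell(\vr 0)\runarrow\overline\ell'(\vr 0)$ in $\V_3$, and the symmetric composition applied to the reverse $\V_2$-run gives $\overline\ell(\vr 0)\backrunarrow\overline\ell'(\vr 0)$.

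The backward implication is the interesting one: from a run $\overline\ell(\vr 0)\runarrow\overline\ell'(\vr 0)$ in $\V_3$ I would extract a $\V_2$-run $\ell(\vr u)\runarrow\ell'(\vr u')$. The first step is forced to be $(\overline\ell, \vr u, \ell)$, reaching $\ell(\vr u)$, and the last step is forced to be $(\ell', -\vr u', \overline\ell')$, whose source is necessarily $\ell'(\vr u')$ because subtracting $\vr u'$ must yield $\vr 0$. The one subtlety is that the run may re-enter $\overline\ell$ or $\overline\ell'$ in between. I claim that each such intermediate visit is a no-op detour: a visit to $\overline\ell$ is entered by $(\ell, -\vr u, \overline\ell)$ from some $\ell(\vr x)$ with $\vr x \ominus \vr u$ nonnegative, and the unique exit $(\overline\ell, \vr u, \ell)$ returns to $\ell\big((\vr x\ominus\vr u)\oplus\vr u\big) = \ell(\vr x)$, so the excursion has identity effect, leaving even the atom-place tokens carried by $\vr x$ untouched since the gadget transitions act only on plain places; the same holds for $\overline\ell'$. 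Excising all detours leaves a run that factors as $\overline\ell(\vr 0)\to\ell(\vr u)\runarrow\ell'(\vr u')\to\overline\ell'(\vr 0)$ whose middle segment stays within the locations of $\V_2$ and hence uses only $T_2$-transitions, yielding $\ell(\vr u)\runarrow\ell'(\vr u')$ in $\V_2$; treating a reverse run symmetrically yields $\ell'(\vr u')\runarrow\ell(\vr u)$, so the two configurations are bi-reachable in $\V_2$.

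The main obstacle I anticipate is exactly this detour-excision argument: one must verify that every excursion into $\overline\ell$ or $\overline\ell'$ returns to precisely the configuration it departed from, so that splicing it out preserves legality of the $\V_3$-run. The rigidity of the gadget, namely its unique in/out transitions, empty support, and action confined to plain places, is what makes this go through.
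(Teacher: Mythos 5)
Your proposal is correct and takes essentially the same route as the paper: the forward direction composes the $\V_2$-run with the appropriate gadget transitions, and the converse uses the forced first/last steps together with excision of intermediate visits to $\overline\ell$, $\overline\ell'$, which the paper phrases as removing immediately adjacent pairs of gadget transitions whose effects cancel. Your configuration-level check that each detour returns exactly to $\ell(\vr x)$ is simply a more explicit rendering of the paper's ``effects of each pair cancel out,'' so the two arguments coincide.
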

    Indeed, a run $\ell(\vr u) \runarrow \ell'(\vr u')$ in $\V_2$ extended with
    the first two transitions in \eqref{eq:4tran} yields a run
    $\overline \ell(\vr 0) \runarrow \overline \ell'(\vr 0)$ in $\V_3$, and likewise 
    a run $\ell'(\vr u') \runarrow \ell(\vr u)$ in $\V_2$ extended with
    the last two transitions in \eqref{eq:4tran} yields a run
    $\overline \ell'(\vr 0) \runarrow \overline \ell(\vr 0)$ in $\V_3$.
    Conversely, consider a run $\pi : \overline \ell(\vr 0) \runarrow \overline \ell'(\vr 0)$ in $\V_3$.
    It necessarily starts with the first transition in \eqref{eq:4tran}, and ends with the second one.
    If transitions \eqref{eq:4tran} are used elsewhere in $\pi$, they are necessarily used in pairs,
    namely the second one followed immediately by the fourth one, or the third  one is followed immediately
    by the first one. Effects of each such pair cancel out, and thus each pair can be safely removed from $\pi$.
    Finally, removing the first and the last transition makes $\pi$ into a run
    $\ell(\vr u) \runarrow \ell'(\vr u')$ in $\V_2$, as required.
    Likewise we transform a run $\overline \ell'(\vr 0) \runarrow \overline \ell(\vr 0)$ in $\V_3$.
\end{proof}

\section{Toolset}

Our decision procedure relies on a number of existing tools.
One of them is solvability of {\sc Multiset sum} (Lemma \ref{lem:HLT17}).
Here we introduce two further tools:
a sufficient condition for \vass reachability, and computability of 
coverability sets in \dvas.

\para{Sufficient condition for \vass reachability}

We recall a condition that guarantees existence of a run in a \vass.
It is a simplification of the classical condition of \cite{Kosaraju82,Lambert92,Mayr81}
which guarantees existence of a run in a \emph{generalised} \vass.
Consider a \vass $\V$ with plain places $\normalcounters$.
For $\vr v : \cgN {\normalcounters}$ 
we write $\vr v \gg \vr 0$ to mean that
for every $h \in \normalcounters$ we have $\vr v(h) > 0$.

\begin{itemize}
    \item$\Theta_1$: 
    For every $m\in \N$, there is a pseudo-run $q(\vect{0}) \pseudorunarrow q'(\vect{0})$
    using every transition at least $m$ times.
    \item$\Theta_2$: 
    For some vectors $\vr \Delta, \vr \Delta' \gg \vr 0$, there are runs:
        $q(\vect{0}) \runarrow q(\vr \Delta)$ and
        $q'(\vr \Delta') \runarrow q'(\vect{0})$.
\end{itemize}

\begin{lemma}[Thm.~2 in \cite{Kosaraju82}, Prop.~1 in \cite{Las3steps}] 
\label{lem:suffvass}
For every \vass,
$\Theta_1 \land \Theta_2$ implies $q(\vect{0}) \runarrow q'(\vect{0})$.
\end{lemma}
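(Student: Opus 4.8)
The plan is to realize the desired run $q(\vect 0) \runarrow q'(\vect 0)$ as a three-phase trajectory: first pump the counters up at $q$, then traverse an effect-zero pseudo-run supplied by $\Theta_1$ at an elevated counter level so that it never dips below zero, and finally pump the counters back down at $q'$. Concretely, I would fix a pseudo-run $\rho : q(\vect 0) \pseudorunarrow q'(\vect 0)$ from $\Theta_1$ and let $c \in \N$ bound its deficit from below, i.e.\ every intermediate vector of $\rho$ is componentwise $\geq -c\,\vect 1$. Since $\rho$ begins and ends with the zero vector, its net effect is $\vect 0$; shifting $\rho$ uniformly upward by any $\vr w \geq c\,\vect 1$ turns it into a genuine run $q(\vr w) \runarrow q'(\vr w)$, as all intermediate vectors then stay nonnegative.

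For the two boundary phases I would use $\Theta_2$. Iterating the run $q(\vect 0) \runarrow q(\vr\Delta)$ on top of itself, each time shifted by the amount already accumulated (which only relaxes the nonnegativity constraints), yields genuine runs $q(\vect 0) \runarrow q(N\vr\Delta)$ for every $N$; symmetrically, iterating $q'(\vr\Delta') \runarrow q'(\vect 0)$ yields $q'(N'\vr\Delta') \runarrow q'(\vect 0)$. Because $\vr\Delta, \vr\Delta' \gg \vr 0$, for large $N, N'$ these boundary values exceed $c\,\vect 1$, so the elevated middle phase is executable once we have climbed to a high enough plateau, and the net effect of the whole concatenation telescopes to $\vect 0$, as required since both endpoints carry the zero vector.

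The main obstacle is that the mass deposited at the source lies on the ray $\N\vr\Delta$, whereas the mass drainable at the target lies on the ray $\N\vr\Delta'$, and these two rays need not meet: a naive pump-up/shift/pump-down scheme ends at $q'(N\vr\Delta \ominus N'\vr\Delta')$, which equals $q'(\vect 0)$ only when $\vr\Delta$ and $\vr\Delta'$ happen to be parallel. Reconciling the two amounts while terminating \emph{exactly} at $q'(\vect 0)$ is precisely where the full strength of $\Theta_1$ must enter: since $\rho$ uses every transition arbitrarily many times, the subwalks of $\rho$ between successive visits to a fixed state are pseudo-cycles spanning a rich set of effects, and once the counters sit on a high plateau all of these become genuinely executable. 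I would therefore interleave such internal cycles into the elevated traversal to absorb exactly the surplus contributed by the source pump and to deliver exactly the deficit expected by the draining phase, keeping every intermediate vector nonnegative throughout. Showing that this balancing is always possible — that the effect-zero, full-support pseudo-run together with the two one-sided pumps generates enough slack to hit the target on the nose — is the technical heart of the argument, and is exactly the specialization to a single effect-zero component of the classical iteration lemma of \cite{Kosaraju82,Lambert92,Mayr81}.
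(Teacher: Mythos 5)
First, note that the paper does not prove this lemma at all: it is quoted verbatim from the literature (Thm.~2 of \cite{Kosaraju82}, Prop.~1 of \cite{Las3steps}), so the comparison is against that classical argument. Your skeleton --- pump up at $q$, run the $\Theta_1$ pseudo-run at an elevated plateau, pump down at $q'$ --- is indeed the right shape, the uniform-shift observation is correct, and you correctly isolate the crux: the concatenation ends at $q'(N\vr\Delta \ominus N'\vr\Delta')$, which is not $q'(\vect 0)$ unless the two pump effects are parallel. But at exactly that point your proof stops being a proof: the final paragraph asserts that the balancing ``is always possible'' and defers to ``the classical iteration lemma'' --- which is, in substance, the statement being proved. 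That is a genuine gap, not a cosmetic one.

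Moreover, the mechanism you sketch for closing it would fail. You propose to absorb the surplus by inserting \emph{contiguous internal cycles} of the pseudo-run $\rho$ into the elevated traversal, on the grounds that they ``span a rich set of effects.'' Nothing guarantees richness: all internal cycles of $\rho$ may have effect $\vect 0$ (e.g., when $\rho$ is built from zero-effect loops), in which case no amount of insertion can cancel $N\vr\Delta \ominus N'\vr\Delta'$ for non-parallel $\vr\Delta, \vr\Delta'$. The classical resolution is different, and it is the one place where the ``every transition at least $m$ times'' clause of $\Theta_1$ does real work (in your argument it is used only rhetorically): since the transitions of the two pump runs each occur at least $m$ times in the transition \emph{multiset} of the $\Theta_1$ pseudo-run $\pi_m$, one can extract $k$ copies of the multiset of each pump from the multiset of $\pi_m$ --- as multisets, not as contiguous subwords --- and then re-order the \emph{entire} multiset, by an Euler-path argument (removing cycles preserves the flow conditions at every state, and the support graph stays connected because every transition still occurs), into a walk of the form $(\text{pump-up})^k;\rho';(\text{pump-down})^k$ from $q$ to $q'$. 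Because this walk uses exactly the multiset of $\pi_m$, its total effect is $\vect 0$ by bookkeeping, so it ends at $q'(\vect 0)$ automatically --- no ``hitting the target on the nose'' is ever needed; the only remaining work is nonnegativity of the middle portion at elevation $k\vr\Delta$, which is where the residual care in \cite{Kosaraju82} goes. Rearrangement of a fixed multiset, rather than insertion of extra cycles, is the missing idea.
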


\para{Coverability sets in \dvass}

Let $\V = (\Loc, \registers, \normalcounters, \atomcounters, T)$ be a \dvass, 
and let $X = \normalcounters \cup(\atomcounters {\times}\A)$.
We define the pointwise order on nonnegative vectors $\cgN X$: $\vr v \leq \vr v'$ if and only
if for every $x \in X$ we have $\vr v(x) \leq \vr v'(x)$.
We define a quasi-order by relaxation of $\leq$, up to automorphisms:
$\vr v \wqorel \vr v'$ if $\sigma(\vr v) \leq \vr v'$ for some $\sigma \in \Aut \A$.
We extend the relation $\wqorel$ to configurations:
for states $q,q'$
and vectors $\vr v, \vr v' \in \cgN X$ we put
$q(\vr v) \wqorel q'(\vr v)$ if
$\sigma(q) = q'$ and $\sigma(\vr v) \leq \vr v'$ for some $\sigma \in \Aut \A$.
\begin{lemmarep} \label{lem:wqo}
$\wqorel$ is a \wqo on configurations.
\end{lemmarep}
%
%The proof is in Appendix.
%
\begin{proof}
Recall the sets of states $Q = \Loc \times (\registers\to(\A\cup\set{\bot}))$ and configurations
$\Conf = Q\times\big(\datavecN\big)$.
The quasi-order $\wqorel$ is a \wqo on
the set of nonnegative vectors $\cgN{P{\times}\A}$, 
as it is quasi-order-isomorphic to $M(\cg \atomcounters)$,
the set of finite multisets of finite vectors from $\cg \atomcounters$, ordered by multiset inclusion.
Furthermore, $\wqorel$ is a \wqo on $\datavecN$, 
as it is quasi-order-isomorphic to the Cartesian product $(\cgN{\normalcounters}) \times (\cgN \countersA)$
of two \wqo's, and Cartesian product preserves \wqo.

Every register valuation $\nu : \rvaluation$ may be seen as an $\registers'$-vector,
where $\registers' = \nu^{-1}(\A)$ is the set of non-empty registers, namely
$\widehat \nu = \oplus \setof{(r,a)}{\nu(r)=a\neq\bot} : \cgN {\registers'}$.
We use this fact to argue that $\wqorel$ is a \wqo on $Y=(\rvaluation) \times (\datavecN)$.
Indeed, we split this set into $2^{\size \registers}$ subsets, determined by non-empty registers, i.e.,
for every subset $\registers'\subseteq \registers$ we consider a subset
\[
C_{\registers'} \ := \ \setof{(\nu, \vr v)}{\nu^{-1}(\A)=\registers'} \ \subseteq \ Y.
\]
For every fixed $\registers'$, the set $C_{\registers'}$ is essentially a subset of $\cgN{\normalcounters \cup (\atomcounters\cup \registers')\times\A}$, 
due to the bijection $(\nu, \vr v) \mapsto  \widehat \nu \oplus \vr v$,
containing
those vectors which use exactly one generator from $\registers'\times\A$.
Therefore $C_{\registers'}$ is a \wqo.
In consequence, $Y$  is a \wqo too, as
finite sums preserve \wqo.

Finally, the set $\Conf = \Loc \times Y$ is a \wqo, as Cartesian product of the finite set $\Loc$ and
a~\wqo.
\end{proof}

The \emph{coverability set} of a configuration $q(\vr v)$ is defined as the downward closure, 
with respect to $\wqorel$, of the reachability set:
\[
\coverset  {q(\vr v)} \ = \ \setof{\overline s(\overline{\vr w}) \in \Conf}{\prettyexists{s(\vr w) \in \Conf}{\overline s(\overline{\vr w}) \wqorel s(\vr w) \ \wedge \ q(\vr v) \runarrow s(\vr w)}}.
\]
It is known that the coverability set is representable by a finite union of \emph{ideals} (downward closed directed sets) \cite{FG09,FG12}.
Let's complete $\N$ with a top element,
$\Nom \defeq \N\cup\set{\omega}$, which is larger than all numbers: $n< \omega$ for all $n\in\N$.
We consider pairs $(q, f) \in Q \times (X\to \Nom)$, written $q(f)$, and called \emph{$\omega$-configurations}.
Each such pair determines a set of configurations
(we extend $\wqorel$ to all  $\omega$-configurations in the expected way):
\[
\down {q(f)} \ := \ \setof{s(\vr v) \in \Conf}{s(\vr v) \wqorel q(f)},
\]
which is downward closed (whenever $s(\vr v) \in\down {q(f)}$ and $s'(\vr v')\wqorel s(\vr v)$ then 
$s'(\vr v')\in\down {q(f)}$) 
and directed (for every two $s(\vr v), s'(\vr v') \in \down {q(f)}$ there is 
$\overline{s} (\overline{\vr v}) \in\down{q(f)}$ such that $s(\vr v) \wqorel \overline{s}(\overline{\vr v})$ 
and $s'(\vr v') \wqorel \overline{s}(\overline{\vr v})$). 
The set $\down {q(f)}$ is thus an \emph{ideal}.
We call an $\omega$-configuration $q(f)$ \emph{simple} if for every $p\in \atomcounters$, either $f(p, a) = 0$ for 
\emph{almost all} $a\in\A$
(i.e., for all $a\in\A$ except finitely many),
or $f(a, p) = \omega$ for almost all $a\in\A$.
Simple $\omega$-configurations are thus finitely representable.
Ideals determined by simple $\omega$-configurations we call simple too.

\begin{slexample}
In the \dvass $\V'$ in Example \ref{ex:dvass},
$
\coverset  {\ell(\vr v)} = \down {\ell(f)} \cup \ \down {\ell(g)}  \cup \  \down {\ell'(f')} \cup \  \down {\ell'(g')},\
$
where $f(p_1, c) = g(p_1, c) = f'(p_1, c) = g'(p_1, c) = \omega$ for every $c\in\A$,
\begin{align*}
& f(p_2, a) = f(p_2, b) = 1 &&  g(p_2, a) = 2 \\ 
& f'(p_2, a) = f'(\overline p, b) = 1 &&  g'(p_2, a) = g'(\overline p, a) = 1
\end{align*}
for some $a\neq b\in\A$,
and all other arguments are mapped by $f$, $g$, $f'$ and $g'$ to 0.
Indeed, due to transition $t_1$, place $p_1$ can be filled up with arbitrary many tokens with any atoms.
On the other hand place $p_2$  has two tokens in the initial
configuration $\ell(\vr v)$, and hence will invariantly have, in location $\ell$, 
two tokens whose atoms may be equal or not.
Furthermore, in location $\ell'$, places $p_2$ and $\overline p$ have always one token each, with atoms equal or not.
\end{slexample}
Simple $\omega$-configurations provide finite representations of simple ideals.
Relying on the result of \cite{HLLLST16}, the coverability set in a \dvas is a union of a finite set of simple ideals, 
which is computable.
We lift this result to the model of \dvass:
\begin{lemma} \label{lem:cover}
Given a \dvass and its configuration $q(\vr v)$, 
one can compute a finite set of simple $\omega$-configurations
$\set{s_1(f_1), \ldots, s_n(f_n)}$ such that 
$\coverset {q(\vr v)} = \down {s_1(f_1)} \cup \ldots \cup \down{s_n(f_n)}$.
\end{lemma}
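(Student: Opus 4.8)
The plan is to reduce the \dvass to a \dvas through the three-step encoding sketched in Remark~\ref{rem:dvass2dvas}, invoke the computability of \dvas coverability sets from \cite{HLLLST16} stated just above, and then translate the resulting simple ideals back to the original model. The point is that, once everything is pushed onto atom places, the lemma becomes exactly the \dvas statement we are allowed to assume, so the whole work lies in showing that the encodings preserve coverability sets and the \emph{simplicity} of ideals in both directions.

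First I would carry out the three syntactic transformations of \eqref{eq:dvass2dvas} in order: (1) eliminate the registers $\registers$ by recording in the location both the set of non-empty registers and their equality type, while storing the actual atom contents on fresh atom places; (2) fold the finite location set $\Loc$ into plain places, using one indicator plain place per location together with the usual gadget guaranteeing that exactly one such place is marked along any run; (3) encode each plain place $h\in\normalcounters$ as an atom place carrying a single fixed atom, yielding a \dvas $\V'$. Each step comes with an encoding map $\enc{\cdot}$ on configurations, and I would check that $\enc{\cdot}$ is a bijection between the configurations of the source model and a set of configurations of the target model, and that $q_1(\vr v_1)\runarrow q_2(\vr v_2)$ holds in the source iff $\enc{q_1(\vr v_1)}\runarrow\enc{q_2(\vr v_2)}$ holds in the target. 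This is the same bookkeeping already used in Lemma~\ref{lem:wlog}.

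Next, since the coverability set of $\enc{q(\vr v)}$ in $\V'$ is, by \cite{HLLLST16}, a computable finite union of simple ideals $\down{s_1'(f_1')}\cup\ldots\cup\down{s_n'(f_n')}$, I would pull these ideals back along the composite encoding. The crucial point to verify is that $\enc{\cdot}$ is an order-embedding for the respective coverability quasi-orders $\wqorel$ of Lemma~\ref{lem:wqo}, and that the image of the encoding is recovered by the preimage, so that the coverability set of $q(\vr v)$ in the \dvass is exactly the $\enc{\cdot}$-preimage of the coverability set of $\enc{q(\vr v)}$ in $\V'$. Decoding each $\down{s_i'(f_i')}$ then produces a (finite union of) simple $\omega$-configuration(s) of the \dvass. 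Simplicity survives because every transformation alters only finitely many coordinates---a finite location component, finitely many register atom places, and the plain places---and leaves the ``almost all atoms'' dichotomy that defines simplicity on the genuine atom places untouched.

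I expect the main obstacle to be the register-elimination step and its interaction with $\wqorel$. Registers impose an ``exactly one token'' constraint and contribute to the \emph{state} (hence to the equality type that $\wqorel$ compares), whereas after encoding they become ordinary tokens on atom places, compared by the pointwise order $\leq$ up to automorphism. I would therefore have to argue carefully that the order on states induced by register contents matches, under $\enc{\cdot}$, the order on the corresponding atom places, and that the back-translation of an $\omega$-marking on a register atom place (which can only ever carry a single token) collapses to a genuine, simple register configuration rather than an unbounded one. Getting this bounded behaviour to reflect back as a finite, simple description---rather than leaking spurious $\omega$'s into the decoded configuration---is the delicate part of the argument.
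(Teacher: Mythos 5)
Your overall plan --- the three-step reduction of Remark~\ref{rem:dvass2dvas}, an appeal to the \dvas result of \cite{HLLLST16}, then a back-translation of simple ideals --- is exactly the paper's strategy, and your steps (1) and (2) essentially match the paper's constructions (it records the set of empty registers in locations and uses a barred copy of the register places with ``flash-back'' transitions, and tilde copies of location places; the register step you flag as delicate is handled by this gadget and causes no spurious $\omega$'s, since the decoding claim only quantifies over configurations of the form $(\ell',(\mu')^{-1}(\bot))(\vr v'\oplus\mu')$, so register places are read off at exactly-one-token valuations). The genuine gap is your step (3): encoding a plain place $h\in\normalcounters$ as ``an atom place carrying a single fixed atom'' is not expressible in this model. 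The transition set of a \dvas must be closed under the action of $\Aut \A$, so no transition can mention a distinguished atom: the orbit of a transition producing $(h,a_0)$ contains the transitions producing $(h,b)$ for every $b\in\A$. Nor can you enforce agreement with an atom already sitting on the place, because transitions are pure effect vectors --- there are no read arcs, a consume-and-restore pair has net effect zero (cf.\ the paper's footnote on tight loops), and splitting it into two transitions requires a location or register to remember the atom, which are precisely the features eliminated in steps (1) and (2).

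The paper's fix is the idea your proposal is missing: it does not constrain the atoms on the new places at all. It sets $\atomcounters_3:=\normalcounters_2\cup\atomcounters_1$, introduces the projection $\pi$ that forgets atoms on the places coming from $\normalcounters_2$, and defines $T_3:=\pi^{-1}(T_2)$, which is orbit-finite as an orbit-finite union of orbit-finite sets. Correctness rests on the claim that coverability sets commute with the projection, $\coverset{\singl(\pi(\vr w))}=\pi(\coverset{\singl(\vr w)})$, and the decoding of a simple $\omega$-configuration sums token counts over all atoms, $g_i(h):=\sum_{a\in\A}f_i(h,a)$ (with the convention $\omega+n=\omega+\omega=\omega$); simplicity guarantees each such sum is finite or $\omega$, so no unboundedness ``leaks''. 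In other words, the third step is a many-to-one projection rather than the bijective order-embedding your argument is built on, and the configuration-wise ``iff'' you rely on must be replaced by this image equation. Without this (or an equivalent device) your step (3) fails and the reduction does not go through.
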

\begin{proof}
Let $\V = (\Loc, \registers, \normalcounters, \atomcounters, T)$  be a \dvass, let $q(\vr v)$ be its configuration, where
$q = (\ell, \eta)$.
Theorem 3.5 in \cite{HLLLST16} proves the claim in the special case of \dvas.
We reduce \dvass to \dvas in three steps, as shown in the diagram \eqref{eq:dvass2dvas} in Remark
\ref{rem:dvass2dvas}.

As the first step we get rid of registers 
by considering them as additional atom places that store at most one token, while
keeping track, in locations, of the set of currently empty registers.
We set $\atomcounters_1 := \atomcounters \cup \registers\cup\overline\registers$, where 
$\overline\registers = \setof{\overline r}{r\in\registers}$ is distinct a copy of $\registers$,
and 
$\Loc_1 = (\Loc \cup \overline \Loc) \times \powerset{\registers}$, where 
$\overline \Loc = \setof{\overline \ell}{\ell\in\Loc}$, and
define the new set of transitions $T_1$ by transforming transitions from $T$ 
as follows.
In the construction we identify a register valuation $\mu$ with a vector
$\mu = \oplus \setof{(r,a)}{\mu(r)=a\neq \bot}$,
or with a vector $\overline\mu = \oplus \setof{(\overline r,a)}{\mu(r)=a\neq \bot}$.
Every transition $t=((\ell, \mu), \vr v, (\ell', \mu')) \in T$ gives rise to a transition in $T_1$
\[
\Big((\ell, \mu^{-1}(\bot)), \ \vr v \ominus \mu \oplus \overline \mu', \ (\overline\ell', (\mu')^{-1}(\bot))\Big)
\]
that starts in location $(\ell, \mu^{-1}(\bot))$, ends in a location $(\overline\ell', (\mu')^{-1}(\bot))$ and whose
effect is $\vr v$ plus,
intuitively speaking, removing $\mu$ from places $\registers$ and putting 
$\mu'$ to places $\overline \registers$.
In addition, all transitions of the form
\[
\Big((\overline\ell', \nu^{-1}(\bot)), \  \nu \ominus \overline {\nu}, \ (\ell', \nu^{-1}(\bot))\Big)
\]
are added to $T_1$, where $\nu : \registers \to (\A \cup \set{\bot})$ is any register valuation.
Intuitively, these transitions flash back all tokens
from places $\overline \registers$ to the corresponding places $\registers$.
This yields a \dvass $\V_1 := (\Loc_1, \emptyset, \normalcounters, \atomcounters_1,T_1)$
computable from $\V$,
and its location
$\ell_1 = (\ell,\eta^{-1}(\bot))$ % and $\ell'_1 = (\ell', (\nu')^{-1}(\bot))$ 
corresponding to state $q$ % and $q'$, respectively,
such that the coverability sets in $\V$ (on the left) is computable from the one in $\V_1$ (on the right):
\begin{claim}
$\coverset {q(\vr v)} \ =  \setof{(\ell', \mu')(\vr v')}{(\ell', (\mu')^{-1}(\bot))(\vr v' \oplus \mu') \in \coverset {\ell_1(\vr v \oplus \eta)}}$.
%$\coverset {q'}{q(\vr v)} \ = \ \setof{\vr v'}{\prettyexists{\sigma \in {\Aut\A}}{\vr v' \oplus \sigma(\nu') \in \coverset {\ell'_1}{\ell_1(\vr v \oplus \nu)}}}$.
%$\coverset {q'}{q(\vr v)} \ = \ \pi \big(
%\coverset {q'_1}{q_1(\vr v_1)}
%\big)$,
%$\coverset {q'}{q(\vr v)} \ = \ \pi_{\atomcounters} \left(
%\coverset {(\ell', (\nu')^{-1}(\bot))}{(\ell, \nu^{-1}(\bot))(\vr v \oplus \nu)}
%\right)$,
%where $\pi$ is the projection mapping
%$\big(\cgN{(\normalcounters \cup \atomcounters_1 \times \A)}\big) \to 
%\big(\cgN{(\normalcounters \cup \atomcounters \times \A)}\big)$.
\end{claim}
%
%Indeed, given a representation
%$\coverset {\ell'_1}{\ell_1(\vr v \oplus \nu)} \ = \ \down{f_1} \cup \ldots \cup \down{f_n}$ in $\V_1$,
%we compute a representation of $\coverset {q'}{q(\vr v)}$ as follows: for each 
%function $f_i : \atomcounters_1 \to \Nom$ that, when restricted to $\registers$,
%agrees with $\sigma(\nu')$ for some $\sigma \in \Aut \A$, we add 
%the restriction of $f_i$ to $\atomcounters$
%to the representation of $\coverset {q'}{q(\vr v)}$.

As the second step, we dispose of locations $\Loc_1$ by moving them to plain places.
We set $\normalcounters_2:= \normalcounters \cup L_1 \cup \widetilde{L_1}$,
where $\widetilde{L_1} = \setof{\widetilde{\ell}}{\ell \in L_1}$,
and transform each transition $t= (\ell, \vr v, \ell') \in T_1$ into a transition in $T_2$:
\[
%t = (\ell, \vr v, \ell') \quad \mapsto \quad 
(\singl, \vr v  \ominus \ell \oplus \widetilde{\ell'}, \singl).
\]
We also add a new transition $(\singl, \ell \ominus \widetilde{\ell}, \singl)$ for every $\ell \in L_1$.
This yields a \dvass $\V_2 := (\set{\singl}, \emptyset, \normalcounters_2, \atomcounters_1, T_2)$
computable from $\V_1$, and the corresponding configuration $\singl(\vr v\oplus \ell)$
such that the coverability sets in $\V_1$ (on the left) is computable from the one in $\V_2$ (on the right):
\begin{claim}
$\coverset {\ell(\vr v)} \ = \ \setof{\ell'(\vr v')}{\vr v' \oplus \ell' \in \coverset {\singl(\vr v \oplus \ell)}}$.
\end{claim}

Eventually, as the last step we get rid of plain places $\normalcounters_2$ by moving them to atom ones, 
and considering atoms residing on these atom places irrelevant.
Let $\atomcounters_3 := \normalcounters_2 \cup \atomcounters_1$.
In order to transfer transitions $T_2$ from plain places to the new atom places, we introduce the projection
mapping
$\pi : (\normalcounters_2\cup \atomcounters_1){\times}\A \to \normalcounters_2\cup (\atomcounters_1{\times}\A)$,
\[
(h,a) \mapsto h \qquad (p,a) \mapsto (p,a) \qquad\qquad  (h\in \normalcounters_2, p\in \atomcounters_1, a\in\A),
\]
that forgets, intuitively speaking, about atoms on the new atom places.
It extends uniquely to a commutative group homomorphism $\pi$ from 
$\cg{(\normalcounters_2\cup \atomcounters_1){\times}\A}$ to $\cg{\normalcounters_2\cup (\atomcounters_1{\times}\A)}$.
We define transitions as the inverse image of $T_2$ along $\pi$:
\[
T_3 \ := \ \pi^{-1}(T_2).
\] 
This yields a \dvas $\V_3 := (\set{\singl}, \emptyset, \emptyset, \atomcounters_3,  T_3)$.
We observe that $\pi^{-1}(\vr v)$ is orbit-finite for every vector $\vr v$, and therefore
$\pi^{-1}(T_2)$, being orbit-finite union of orbit-finite sets, is itself orbit-finite \cite[Ex.~62]{atombook}.
Therefore $\V_3$ is computable from $\V_2$.
The coverability set in $\V_2$ is computable from the one in $\V_3$,
since coverability sets commute the projection
(the coverability set on the left is in $\V_2$, while the one on the right is in $\V_3$):
\begin{claim}
$\coverset {\singl(\pi(\vr w))} \ = \ 
\pi(\coverset {\singl(\vr w)})$.
\end{claim}
Indeed, in order to compute a representation 
$\coverset {\singl(\vr v)} \ = \ \down{g_1} \cup \ldots \cup \down{g_n}$ in $\V_2$, we take
any $\vr w$ with $\pi(\vr w) = \vr v$, compute a representation 
$\coverset {\singl(\vr w)} \ = \ \down{f_1} \cup \ldots \cup \down{f_n}$ in $\V_3$
using \cite[Thm.~3.5]{HLLLST16}, and 
modify the functions $f_i$ by summing up, for every $h\in \normalcounters_2$, namely
(under the proviso that $\omega+n=\omega+\omega=\omega$):
\[
g_i(h) \ := \ \sum_{a\in\A} f_i(h,a) \qquad
g_i(p,a) \ := \ f_i(p,a)
\qquad\qquad  (h\in \normalcounters_2, p\in \atomcounters_1, a\in\A).
\]
This concludes the proof.
% we use previous claims
%together with the following fact:
%\begin{claim}
%    If a~set $C$ of $X$-vectors is a finite sum of simple ideals,
%    then for any~vector $\vr w$, the
%    set $\setof{\vr u}{\vr u \oplus \vr w \in C}$
%    is also a~finite sum of simple ideals.
%    Furthermore, for every orbit-finite $Y \subseteq X$
%    the projection $\pi_Y(C)$ is also a~finite sum of simple ideals.
%\end{claim}
%\lukasz{moze treść tego claimu gdzie indziej?}
\end{proof}

\section{Sufficient condition for \dvass bi-reachability}

In this and in the next section we prove Theorem \ref{thm:bireach}.
Throughout the rest of the paper let
$\V = (\Loc, \registers, \normalcounters, \atomcounters, T)$ be an input \dvass.
Relying on Lemma \ref{lem:wlog} we investigate bi-reachability of 
$q(\vr 0)$ and $q'(\vr 0)$, for states
$q = (\ell, \botval)\in Q$ and $q'=(\ell',\botval')\in Q$ with empty register valuations.
The states $q, q'$ are invariant
under the action of $\Aut \A$, which is crucial in the sequel:
\begin{claim} \label{claim:invq}
    For every $\sigma\in\Aut \A$, we have $\sigma(q) = q$ and $\sigma(q') = q'$.
\end{claim}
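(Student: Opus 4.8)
The plan is simply to unwind the definition of the action of $\Aut \A$ on states from Example~\ref{ex:action}. That action fixes the location and acts only on the register valuation, renaming those register values that lie in $\A$ while leaving the empty-marker $\bot$ untouched: for $q = (\ell, \nu)$ one has $\sigma(q) = (\ell, \sigma(\nu))$, where $\sigma(\nu)(r) = \sigma(\nu(r))$ when $\nu(r) \in \A$ and $\sigma(\nu)(r) = \bot$ when $\nu(r) = \bot$. So the entire content of the claim is to check that this renaming does nothing on the particular states at hand.

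First I would observe that, by Lemma~\ref{lem:wlog}, the source state $q = (\ell, \botval)$ and target state $q' = (\ell', \botval')$ carry the empty register valuation, i.e.\ $\botval(r) = \botval'(r) = \bot$ for every $r \in \registers$. Consequently, for every register $r$ the second case of the definition applies, giving $\sigma(\botval)(r) = \bot = \botval(r)$; hence $\sigma(\botval) = \botval$. Feeding this back yields $\sigma(q) = (\ell, \sigma(\botval)) = (\ell, \botval) = q$, and the identical computation gives $\sigma(q') = q'$.

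I expect no genuine obstacle: the claim holds precisely because automorphisms fix $\bot$ and the chosen source and target states contain no atoms in their registers (equivalently, $\supp{q} = \supp{q'} = \emptyset$), so there is simply nothing for $\sigma$ to rename. This is exactly the reason for passing, via Lemma~\ref{lem:wlog}, to source and target configurations that are essentially bare locations; the invariance of $q$ and $q'$ under $\Aut \A$ recorded here is what will later allow us to apply an arbitrary automorphism to a pseudo-run between $q(\vr 0)$ and $q'(\vr 0)$ and obtain another pseudo-run with the same endpoints.
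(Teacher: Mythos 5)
Your proof is correct and is essentially the same argument the paper relies on: the paper states the claim without an explicit proof, precisely because, after the normalization of Lemma~\ref{lem:wlog}, both $q$ and $q'$ carry the empty register valuation, and the action of $\Aut \A$ from Example~\ref{ex:action} fixes locations and $\bot$, so there are no atoms to rename. Your unwinding of the definition simply spells out this intended (and correct) reasoning.
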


We now formulate a sufficient condition for bi-reachability of $q(\vr 0)$ and
$q'(\vr 0)$,
as an adaptation of the classical $\Theta_1$ and $\Theta_2$ conditions.
In the context of bi-reachability, it is enough to rely on a simplified version of these conditions
given by Lemma \ref{lem:suffvass}.
We write below $\vr v \gg \vr 0$ to mean that
$\vr v(h) > 0$ for every $h \in \normalcounters$, and
for every $p \in \atomcounters$ there is some $a\in \A$ such that $\vr v(p, a) > 0$.

\begin{itemize}
    \item$\myTheta_1$: 
    There are pseudo-runs, each of them using some transition from every orbit in $T$:
\begin{align} 
\begin{aligned}
\label{eq:psruns}
    &q(\vect{0}) \pseudorunarrow q'(\vect{0}) \qquad\qquad
    q(\vect{0}) \backpseudorunarrow q'(\vect{0}).
\end{aligned}
\end{align}
%    \smallskip
    \item$\myTheta_2$: 
    For some vectors $\vr \Delta, \vr \Delta', \vr \Gamma, \vr \Gamma' \gg \vr 0$, there are runs:
\begin{align}
\begin{aligned} \label{eq:runs}
        &q(\vect{0}) \runarrow q(\vr \Delta) \qquad\quad &&
        q'(\vr \Delta') \runarrow q'(\vect{0}) \\
        &q(\vect{0}) \backrunarrow q(\vr \Gamma)  &&
        q'(\vr \Gamma') \backrunarrow q'(\vect{0}).
\end{aligned}
\end{align}
\end{itemize}

\begin{lemma} \label{lem:suff}
$\myTheta_1 \land \myTheta_2$ implies $q(\vect{0}) \runarrow q'(\vect{0})$ and $q(\vect{0}) \backrunarrow q'(\vect{0})$.
\end{lemma}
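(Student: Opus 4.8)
The plan is to reduce the statement to the plain-\vass sufficient condition of Lemma~\ref{lem:suffvass}. I focus on the forward claim $q(\vect 0)\runarrow q'(\vect 0)$; the backward claim $q(\vect 0)\backrunarrow q'(\vect 0)$ is entirely symmetric, obtained by replacing the forward objects of $\myTheta_1$ and $\myTheta_2$ with their backward counterparts (using $\vr\Gamma,\vr\Gamma'$ in place of $\vr\Delta,\vr\Delta'$, and the second pseudo-run of $\myTheta_1$). First I would collect all finitely many pseudo-runs and runs witnessing $\myTheta_1$ and $\myTheta_2$ and let $S\subseteq\A$ be the (finite) union of their supports. Restricting atoms to $S$ turns $\V$ into a genuine finite \vass $\V_S$: its states are $Q_S = \Loc\times((S\cup\set\bot)^\registers)$, its places are $\normalcounters\cup(\atomcounters{\times}S)$, and its transitions are $T_S = \setof{t\in T}{\supp t\subseteq S}$, a finite set since each orbit of $T$ contributes only finitely many transitions with support inside the finite set $S$. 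By construction $q,q'\in Q_S$, and a (pseudo-)run of $\V$ all of whose pseudo-configurations have support inside $S$ is exactly a (pseudo-)run of $\V_S$, and conversely. It thus suffices to check that $\V_S$ satisfies the classical conditions $\Theta_1$ and $\Theta_2$ of Lemma~\ref{lem:suffvass}, since then $q(\vect 0)\runarrow q'(\vect 0)$ holds in $\V_S$, hence in $\V$.

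The only real gap to bridge is that the \dvass notion $\vr\Delta\gg\vect 0$ is existential over atoms (one positive atom per place $p\in\atomcounters$), whereas $\Theta_2$ for $\V_S$ requires a vector that is positive on \emph{every} place $(p,a)$ with $a\in S$. Here I would use that $q$ is fixed by every automorphism (Claim~\ref{claim:invq}), together with the fact that runs are closed under the action of $\Aut\A$ and that runs looping at $q$ compose additively (concatenate, shifting the second run by the nonnegative effect of the first to preserve nonnegativity). Concretely, for each $p\in\atomcounters$ fix an atom $a_p\in S$ with $\vr\Delta(p,a_p)>0$, and for each target atom $a\in S$ apply the transposition $\tau=(a_p\ a)$ to the run $q(\vect 0)\runarrow q(\vr\Delta)$; since $\tau$ fixes $\A\setminus S$ the resulting run stays inside $\V_S$, ends at $q(\tau(\vr\Delta))$, and $\tau(\vr\Delta)(p,a)>0$. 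Summing these $\size\atomcounters\cdot\size S$ runs yields a single run $q(\vect 0)\runarrow q(\vr\Delta_S)$ in $\V_S$ with $\vr\Delta_S$ positive on all of $\normalcounters\cup(\atomcounters{\times}S)$, i.e.~$\vr\Delta_S\gg\vect 0$ in the \vass sense; the symmetric construction from $q'(\vr\Delta')\runarrow q'(\vect 0)$ gives $\vr\Delta'_S$. This establishes $\Theta_2$ for $\V_S$.

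For $\Theta_1$ I must produce, for every $m\in\N$, a pseudo-run $q(\vect 0)\pseudorunarrow q'(\vect 0)$ in $\V_S$ using each transition of $T_S$ at least $m$ times. Let $\pi$ (resp.~$\bar\pi$) be the forward (resp.~backward) pseudo-run of $\myTheta_1$; both have support in $S$ and hence lie in $\V_S$, and $\pi$ uses at least one transition from every orbit. Given $t'\in T_S$, pick a transition $t$ used by $\pi$ lying in the same orbit as $t'$, so $\sigma(t)=t'$ for some $\sigma$. Since $\supp t$ and $\supp t'$ are equinumerous subsets of $S$, I can extend the bijection $\sigma|_{\supp t}\colon\supp t\to\supp t'$ to a permutation $\sigma'$ mapping $S$ onto $S$ and fixing $\A\setminus S$; because the image of a transition depends only on the automorphism's restriction to its support, $\sigma'(t)=t'$. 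Then $\sigma'(\pi)$ is a pseudo-run $q(\vect 0)\pseudorunarrow q'(\vect 0)$ with support $\sigma'(\supp\pi)\subseteq S$, i.e.~a pseudo-run of $\V_S$ using $t'$. Concatenating, for each $t'\in T_S$, these witnesses interleaved with copies of $\bar\pi$ (to return from $q'$ to $q$), repeated $m$ times and closed off by a final copy of $\pi$, gives the desired pseudo-run; pseudo-runs concatenate freely since only the matching of states (all equal to $q$ or $q'$, automorphism-invariant) and additivity of effects matter, with no nonnegativity constraint. This establishes $\Theta_1$.

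With both $\Theta_1$ and $\Theta_2$ verified for the finite \vass $\V_S$, Lemma~\ref{lem:suffvass} yields $q(\vect 0)\runarrow q'(\vect 0)$ in $\V_S$ and therefore in $\V$. The main obstacle, and the place where the equality-data structure is genuinely used, is the reconciliation of the weak \dvass $\gg$ with the strong \vass $\gg$ and the need to keep all automorphic copies within the fixed finite atom set $S$; both are handled by choosing support-minimal automorphisms (transpositions, and permutations fixing $\A\setminus S$) and by exploiting that $q$ and $q'$ are invariant under all of $\Aut\A$.
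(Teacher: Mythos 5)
Your proof is correct and follows essentially the same route as the paper: both arguments restrict atoms to the finite support set $S$ of the witnessing (pseudo-)runs, form the plain \vass $\V_S$ over places $\normalcounters\cup\atomcounters{\times}S$, verify $\Theta_1$ and $\Theta_2$ for it using automorphisms fixing $\A\setminus S$ (exploiting Claim~\ref{claim:invq} and, for matching same-orbit transitions inside $S$, the analogue of Claim~\ref{claim:v}), and then invoke Lemma~\ref{lem:suffvass} for each direction. The only difference is cosmetic: where you choose individual transpositions $(a_p\ a)$ and per-transition extension permutations, the paper uniformly symmetrizes by summing or concatenating over all of $\Autparco{S}{\A}$, to the same effect.
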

\begin{proof}
    Assume $\V$ satisfies $\myTheta_1 \land \myTheta_2$.
    Let $S$ be the union of supports of
    the two pseudo-runs \eqref{eq:psruns} and the four runs \eqref{eq:runs}.
    Recall that $\Autparco S \A \subseteq \Aut \A$ denotes the subset of those automorphisms $\sigma$ that
    are identity outside $S$: $\sigma(a) = a$ for every $a\notin S$. 
    When restricted to $S$, each such automorphism is a permutation,
    i.e.,
    $\sigma(S) = S$.
    In the sequel we will apply permutations $\sigma\in \Autparco S \A$ to atoms from $S$ only, and therefore
    the value $\sigma(a)=a$, for $a\notin S$, will be irrelevant.
    The set $\Autparco S \A$ is finite,
    $\size{\Autparco S \A} = \size S !$.

    We define a (plain) \vass $\V_S$ by, intuitively speaking, restricting the set of atoms to the finite set 
    $S$.
    The set of locations of $\V_S$ is $L_S := \Loc \times (\registers\to S \cup \{\bot\})$, 
    its places 
    are $\normalcounters_S := \normalcounters \cup \atomcounters {\times} S$,
    and its transitions $T_S\subseteq T$ are all transitions of $\V$ that use only atoms from $S$.
    Formally, $\V_S = (L_S, \emptyset, \normalcounters_S, \emptyset, T_S)$, where
    $
    T_S := \setof{t\in T}{\supp t \subseteq S}.
    $
    We claim that the \vass satisfies the conditions $\Theta_1$ and $\Theta_2$ of Lemma \ref{lem:suffvass}.
    
\smallskip

    We consider $\Theta_1$ first. 
    Let $\pi : q(\vr 0) \pseudorunarrow q'(\vr 0)$ and $\pi' : q'(\vr 0) \pseudorunarrow q(\vr 0)$ 
    be the pseudo-runs in \eqref{eq:psruns}.
    By applying all permutations $\sigma\in\Autparco S \A$ to their concatenation $\pi; \pi' :
    q(\vr 0) \pseudorunarrow q'(\vr 0)  \pseudorunarrow q(\vr 0)$ , and concatenating all the $\size S !$ 
    resulting cyclic pseudo-runs, we get a cyclic pseudo-run
    $\delta : q(\vr 0) \pseudorunarrow q(\vr 0)$.
    This pseudo-run uses every transition from $T_S$ at least once, since $\pi$ uses a representative of every
    orbit of $T$, and the following fact holds:
    \begin{claim} \label{claim:v}
    Let $t, t'\in T$ be two transitions in the same orbit such that 
    $\supp {t}, \supp {t'}\subseteq S$.
    Then $t' = \sigma(t)$ for some $\sigma \in \Autparco S \A$.
    \end{claim}
    Likewise we get a cyclic pseudo-run
    $\delta' : q'(\vr 0) \pseudorunarrow q'(\vr 0)$ that uses every transition from $T_S$ at least once. 
    Furthermore, for every $m\in\N$, the $m$-fold concatenation of $\delta$ or $\delta'$ yields a cyclic pseudo-run that
    uses every transition from $T_S$ at least $m$ times. 
    We thus have two pseudo-runs 
\begin{align*} 
    \delta^m; \pi \ : \  
    q(\vect{0}) \pseudorunarrow q'(\vect{0}) 
    \qquad\qquad\qquad
    (\delta')^m; \pi' \ : \  
    q(\vect{0}) \backpseudorunarrow q'(\vect{0})
\end{align*}
   each of them using every transition from $T_S$ at least $m$ times.  Thus the \vass $\V_S$ satisfies 
   two instances of 
   $\Theta_1$, one towards a run $q(\vect{0}) \runarrow q'(\vr 0)$ and
   the other one towards 
   a run $q'(\vect{0}) \runarrow q(\vr 0)$.

\smallskip

    Now we concentrate on $\Theta_2$.
    We proceed similarly as before, namely apply all automorphisms 
    $\sigma \in \Autparco S \A$ to $\vr \Delta$, and sum up all the resulting vectors:
    \[
    \vr \Delta_S \ := \ \oplus \setof{\sigma(\vr \Delta)}{\sigma \in \Autparco S \A}.
    \]
    Let $\overline {\vr \Delta}_S : \cgN{\normalcounters \cup \atomcounters {\times}S}$ be the restriction of 
    $\vr\Delta_S$ to $\normalcounters \cup \atomcounters{\times S}$.
    Knowing that $\vr\Delta \gg \vr 0$, we deduce that
    $\vr \Delta_S(h)>0$ for every $h \in \normalcounters$, and
    $\vr\Delta_S(p, a) > 0$ for every $(p, a) \in \atomcounters \times S$.
    In other words, $\overline{\vr\Delta}_S \gg \vr 0$.
    By applying all automorphisms $\sigma \in \Autparco S \A$ to the run $q(\vect{0}) \runarrow q(\vr \Delta)$
    in $\myTheta_2$, and concatenating
    all the resulting runs, we get a run 
    $q(\vect{0}) \runarrow q(\vr\Delta_S)$ in $\V$. 
    Clearly, only atoms from $S$ appear in this run, and therefore it is also a run
    $q(\vect{0}) \runarrow q(\overline{\vr\Delta}_S)$ in $\V_S$.
    In a similar way we define vectors $\overline{\vr\Delta}_S'$, $\overline{\vr\Gamma}_S$ and $\overline{\vr\Gamma}_S'$, 
    and the corresponding runs in $\V_S$:
\begin{align}
%\begin{aligned} 
\label{eq:runsS1}
        &q(\vect{0}) \runarrow q(\overline{\vr\Delta}_S) \qquad\qquad
        q'(\overline{\vr \Delta}'_S) \runarrow q'(\vect{0}) \\
\label{eq:runsS2}
        &q(\vect{0}) \backrunarrow q(\overline{\vr\Gamma}_S)  \qquad\qquad
        q'(\overline{\vr\Gamma}'_S) \backrunarrow q'(\vect{0}).
%\end{aligned}
\end{align}
    Therefore, the \vass $\V_S$ satisfies two instances of $\Theta_2$,
    one towards a run $q(\vect{0}) \runarrow q'(\vr 0)$ and the other one
    towards a run $q'(\vect{0}) \runarrow q(\vr 0)$.

\smallskip

   Finally, using Lemma \ref{lem:suffvass}  we deduce two runs in $\V_S$, which
   are automatically also runs in $\V$. This completes the proof.
\end{proof}

\section{Reduction algorithm}

As the \emph{rank} of a \dvass $\V = (\Loc, \registers, \normalcounters, \atomcounters, T)$ 
we take the triple $\rank \V = (\size {\atomcounters}, \size {\normalcounters}, \orbsize T)$,
consisting of the number of atom places, the number of plain places, 
and the number of orbits $\orbsize T$ the set $T$ partitions into.
Ranks are compared lexicographically.

Given a \dvass $\V$, the algorithm verifies the conditions $\myTheta_1$ and $\myTheta_2$.
If they are all satisfied, it answers positively, relying on Lemma \ref{lem:suff}.
Otherwise, depending on which of the conditions is violated, the algorithm either immediately answers negatively,
or applies a reduction step, as outlined below in Sections \ref{sec:Theta1} and \ref{sec:Theta2}.
Each of the steps produces a new \dvass $\widehat \V$ of strictly smaller rank, which guarantees termination.
Finally, when both $\atomcounters$ and $\normalcounters$ are empty,
the problem reduces to reachability from $q$ to $q'$ in state graph $\csg \V$, which 
is decidable due to:
\begin{lemma} \label{lem:ofg}
For a set $E\subseteq Q\times Q$ of edges between states, given as a finite union of orbits,
and a pair $(s, s')\in Q\times Q$,
it is decidable if there is a path from $s$ to $s'$ in the graph $(Q, E)$.
\end{lemma}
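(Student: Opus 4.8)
The plan is to read this as a reachability question in the orbit-finite directed graph $(Q,E)$, and to settle it by computing the set $R$ of states reachable from $s$ through a fixpoint iteration. The key to making this effective is to work \emph{relative to the atoms appearing in $s$}: relative to this finite set of constants, $R$ turns out to consist of only finitely many orbits, which is what will force the iteration to terminate.

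First I would fix the finite set $S := \supp s$ and pass to the action of the subgroup $\Autpar S \A$ of $S$-automorphisms. The set of states $Q = \Loc \times (\rvaluation)$ remains orbit-finite under $\Autpar S \A$, since the $S$-orbit of a state $(\ell,\nu)$ is still pinned down by finitely much information: its location $\ell$, the set of empty registers $\nu^{-1}(\bot)$, the equality type of $\nu$, and, for each register, whether and to which atom of $S$ its value is equal. For the same reason $E$, being a finite union of $\Aut \A$-orbits, is also a finite union of $\Autpar S \A$-orbits, as each $\Aut \A$-orbit splits into only finitely many $\Autpar S \A$-orbits. The crucial observation is that the reachability set $R := \setof{q \in Q}{\text{there is a path from } s \text{ to } q}$ is closed under $\Autpar S \A$: every $\sigma \in \Autpar S \A$ fixes $s$ (it fixes all atoms of $\supp s = S$, hence fixes the register valuation of $s$), while $E$ is closed under all of $\Aut \A$; so pushing a path out of $s$ through $\sigma$ again yields a path out of $s$, giving $\sigma(q) \in R$. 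Thus $R$ is a union of $\Autpar S \A$-orbits.

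I would then compute $R$ as the limit of the increasing chain $R_0 = \set s$ and $R_{k+1} = R_k \cup \setof{q'}{\prettyexists{q \in R_k}{(q,q') \in E}}$, so that $R_k$ is exactly the set of states reachable from $s$ in at most $k$ steps. An easy induction shows each $R_k$ is a finite union of $\Autpar S \A$-orbits ($R_0 = \set s$ is a single orbit by $\sigma(s)=s$, and the inductive step preserves $S$-closure since $E$ is $S$-closed), and each step---an image under the orbit-finite relation $E$ followed by a union---is effective using the standard computational toolkit for orbit-finite sets~\cite{atombook}. The step I expect to carry the argument is termination: because $Q$ splits into only finitely many $\Autpar S \A$-orbits, the increasing chain $R_0 \subseteq R_1 \subseteq \cdots$ draws its members from this finite pool of orbits and hence stabilises after finitely many iterations, necessarily at $R$. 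It then remains to test whether $s' \in R$, which is decidable since $R$ is produced as an explicitly represented orbit-finite set.
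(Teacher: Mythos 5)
Your proof is correct, but it takes a genuinely different route from the paper's. The paper stays with the full group $\Aut \A$ and works at the level of orbits of \emph{pairs} of states: it observes that the orbit of an edge is determined by finite data (locations, inverse images of $\bot$, equality types), saturates the transitive closure $E^*$ of $E$ by repeatedly adding every orbit forced by composing two orbits already present (terminating because $Q\times Q$ is orbit-finite), and then tests whether the orbit of $(s,s')$ lies in $E^*$. You instead relativize to the stabilizer subgroup $\Autpar S \A$ for $S = \supp s$ and run a forward fixpoint on \emph{single} states: since every $\sigma \in \Autpar S \A$ fixes $s$ while $E$ is closed under all of $\Aut \A$, the reachability set from $s$ is $\Autpar S \A$-closed, and orbit-finiteness of $Q$ under $\Autpar S \A$ forces the increasing chain $R_0 \subseteq R_1 \subseteq \cdots$ to stabilize at the full reachability set. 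Both arguments hinge on the same finiteness phenomenon, and the ingredients you invoke (orbit-finiteness is preserved when passing to the stabilizer of a finite set, images under orbit-finite relations are computable) are standard from \cite{atombook}, so your argument is complete. As for what each buys: the paper's saturation is uniform in the query --- it computes reachability for \emph{all} pairs of states at once, and the fact that $E^*$ saturates after finitely many composition rounds directly delivers the bounded-path-length consequence recorded in Corollary \ref{cor:ofg}; your version answers one source at a time but is arguably more modular, being just breadth-first search over orbit-finite sets, and it also yields the corollary with a small extra step, since the iteration stabilizes within as many rounds as there are $\Autpar S \A$-orbits of $Q$ --- a quantity bounded in terms of $\Loc$ and $\registers$ alone, as $\size S \leq \size \registers$ --- so witnessing paths have bounded length and hence bounded support.
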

\begin{proof}
The orbit of an edge $((\ell,\nu), (\ell',\nu'))\in E$ is determined by the following data: 
locations $\ell, \ell'$;
the inverse images 
$
\nu^{-1}(\bot), \ 
(\nu')^{-1}(\bot);
$
and the equality type of the remaining entries in $\nu$ and $\nu'$, that is:
\[
\setof{(r,r')}{\nu(r)=\nu(r')\neq\bot} \quad
\setof{(r,r')}{\nu'(r)=\nu'(r')\neq\bot} \quad
\setof{(r,r')}{\nu(r)=\nu'(r')\neq\bot}.
\]
Using equational reasoning, one computes the transitive closure $E^*$ of $E$, by consecutively
adding to $E^*$ every new orbit which is forced to be included in $E^*$ 
by some two orbits already included in $E^*$, until saturation.
Termination is guaranteed as $Q\times Q$ is orbit-finite.
The transitive closure is thus forcedly a finite union of orbits.
Finally, one tests if the orbit of $(s,s')$ is included in $E^*$. 
\end{proof}

For future use we note an immediate consequence of the above proof:
for every pair of states, if there is a path from one to the other, then there is also a path of bounded length.
This implies a bound on the number of atoms involved:
\begin{corollary} \label{cor:ofg}
There is an effective bound $b(Q)\in\N$ such that whenever there is a path from $s\in Q$ to $s'\in Q$ 
in the graph $(Q, E)$, 
then there is such a path $\pi$ with $\size{\supp \pi}\leq b(Q)$.
\end{corollary}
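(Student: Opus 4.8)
Corollary \ref{cor:ofg} follows from extracting a quantitative bound from the proof of Lemma \ref{lem:ofg}, so let me plan how to make that extraction explicit.
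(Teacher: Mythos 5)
Your proposal names the right source---the paper indeed derives this corollary as an immediate consequence of the saturation proof of Lemma~\ref{lem:ofg}---but it stops at a declaration of intent: no bound is extracted, and no argument is given, so as it stands this is not a proof. Two concrete steps are missing, and they are the entire content of the corollary. First, you need the saturation to yield a bound on path \emph{length}. Every orbit added to the transitive closure $E^*$ is forced by the composition of two orbits already present, and each saturation step adds a fresh orbit of $Q\times Q$, so the procedure runs for at most $K$ steps, where $K$ is the number of orbits of $Q\times Q$---a quantity effectively computable from $\Loc$ and $\registers$, since an orbit of a pair of states is determined by the two locations, the inverse images of $\bot$, and the equality type of the two register valuations (cf.\ the data listed in the proof of Lemma~\ref{lem:ofg}). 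Unfolding the derivation of the orbit of $(s,s')$ then produces a concrete path: since $E$ and every orbit in $E^*$ are closed under the action of $\Aut\A$, at each composition step one can apply an automorphism to the witnessing triple so that its endpoints match the states already fixed. The witness length at most doubles per level of this derivation tree (base case: length $1$ for orbits of $E$), giving a path of length at most $2^K$.

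Second, you must convert the length bound into the support bound actually claimed. Each state $(\ell,\nu)$ on the path has support of size at most $\size{\registers}$, so a path $\pi$ with at most $2^K$ edges satisfies $\size{\supp \pi}\leq (2^K+1)\cdot\size{\registers}$, and one may take $b(Q)$ to be this value; effectiveness of $b(Q)$ follows from computability of $K$. Neither step is hard, and your approach is exactly the paper's, but ``plan how to make the extraction explicit'' is the task statement, not its solution---spelling out the two steps above is what the proof consists of.
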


Below we describe the two reduction steps, proving their progress property (decreasing rank), 
correctness and effectiveness.

\subsection{Violation of $\myTheta_1$} \label{sec:Theta1}

Suppose $\V$ violates $\myTheta_1$. 
If states $q,q'$ are not in the same strongly connected component of $\csg T$,
which is testable using Lemma \ref{lem:ofg},
the configurations $q(\vr 0)$, $q'(\vr 0)$ are clearly not bi-reachable and the algorithm answers negatively.
Otherwise,
the algorithm constructs a \dvass $\widehat \V$ of smaller rank, as defined below, 
such that bi-reachability of $q(\vr 0)$ and $q'(\vr 0)$
in $\V$ is equivalent to their bi-reachability in $\widehat \V$.

Let $\csg T = (Q, E)$.
Transitions witnessing bi-reachability of $q(\vr 0)$ and $q'(\vr 0)$, namely used in some cyclic run
%
%\begin{align*} %\label{eq:bicycle}
$q(\vect{0}) \runarrow q'(\vect 0) \runarrow q(\vect 0)$,
%\end{align*}
%
form a cycle in $\csg T$.
As a consequence, a transition $(s, \vr v, s')\in T$ may be useful for bi-reachability 
only if the edge $(s, s')$ belongs to the strongly connected component of $\csg T$ containing $q$ and $q'$.
Therefore, bi-reachability of $q(\vr 0)$, $q'(\vr 0)$ in $\V$ reduces to bi-reachability of $q(\vr 0)$, $q'(\vr 0)$
in $\widehat \V$ obtained by restriction to the strongly connected component of $q$ and $q'$.
This component is computed by enumerating all orbits included in $E$.
For every orbit $o\subseteq E$ one chooses a representative $(s, s')\in o$, 
and uses Lemma \ref{lem:ofg} to test reachability, in $\csg T$, for  
the four pairs: $(q, s)$, $(s', q)$, $(q', s)$ or $(s', q')$.
Then one removes from $T$ all orbits of transitions $(s, \vr v, s')$ such that
reachability test fails for any of the four pairs above. 
The resulting set of transitions is still a finite union of orbits.
Consequently, from now on we may assume, w.l.o.g., that 
$\csg T$ is strongly connected (we ignore isolated vertices).

\para{Useful transitions}

By a finite multiset of transitions we mean a nonnegative vector $\vr f : \cgN T$. 
Given such a finite multiset, 
let $\cond q {q'}$ denote conjunction of the following conditions:

\begin{enumerate}
    \item[(a)]
    the sum of effects of all transitions in the multiset is $\vr 0$,
    \item[(b)]
    for every state $s\notin\set{q,q'}$, 
    the number of transitions incoming to $s$ equals 
    the number of ones outgoing from $s$,
    \item[(c)]
    the number of transitions outgoing from $q$ exceeds by one 
    the number of incoming ones,
    \item[(d)]
    the number of transitions incoming to $q'$ exceeds by one 
    the number of outgoing ones;
\end{enumerate}

    \noindent
    Symmetrically, let $\cond {q'} q$ denote the conjunction of (a), (b) and the symmetric versions of (c) and (d)    
    with
    $q$ and $q'$ swapped.
Let $O = \setof{\orbitof t}{t \in T}$ be the set of all orbits in $T$.
We call an orbit $o\in O$ \emph{useful} if there are two finite multisets of transitions $\vr f, \vr f'$ 
satisfying $\cond q {q'}$ and $\cond {q'} {q}$, respectively, each of them containing some transition from $o$.

\begin{lemma} \label{lem:theta1}
$\myTheta_1$ holds if and only if all orbits of transitions are useful.
\end{lemma}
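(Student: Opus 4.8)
The plan is to prove the two directions of the equivalence separately, exploiting the correspondence between pseudo-runs and Eulerian-type conditions on multisets of transitions (conditions (a)--(d) of $\cond q {q'}$), together with the strong connectivity of $\csg T$ that we have already arranged. The central observation is that conditions (b), (c), (d) are exactly the \emph{flow conditions} guaranteeing that a multiset of transitions can be arranged into a pseudo-path (in the underlying state graph) from $q$ to $q'$, provided the used edges form a connected subgraph, while condition (a) guarantees that the arranged pseudo-run returns every counter to its starting value, i.e.\ that the total effect is $\vr 0$ so that the endpoints $q(\vr 0)$ and $q'(\vr 0)$ match.

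For the forward direction, suppose $\myTheta_1$ holds, so there are pseudo-runs $q(\vr 0)\pseudorunarrow q'(\vr 0)$ and $q(\vr 0)\backpseudorunarrow q'(\vr 0)$, each using some transition from every orbit. First I would read off from the pseudo-run $q(\vr 0)\pseudorunarrow q'(\vr 0)$ the multiset $\vr f$ of transitions it uses: since the pseudo-run starts and ends with total effect $\vr 0$, condition (a) holds; since it is a genuine walk from $q$ to $q'$ in the state graph, the incoming/outgoing balance yields (b) at every internal state, and (c), (d) at $q$, $q'$ respectively. Thus $\vr f$ satisfies $\cond q {q'}$, and by hypothesis $\vr f$ contains a transition from every orbit. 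Symmetrically, the reversed pseudo-run yields $\vr f'$ satisfying $\cond {q'} q$ and likewise touching every orbit. Hence every orbit $o\in O$ is useful, witnessed by this single pair $\vr f,\vr f'$.

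For the converse, suppose every orbit is useful. For each orbit $o$ fix witnessing multisets $\vr f_o$ (satisfying $\cond q {q'}$) and $\vr f_o'$ (satisfying $\cond {q'} q$), each containing a transition from $o$. I would take $\vr f := \oplus_{o\in O} \vr f_o$, the multiset sum over all orbits; by linearity each of conditions (a) and (b) is preserved under summation, and (c), (d) accumulate additively, so summing $\size O$ witnesses makes the excess in (c) and (d) equal to $\size O$ rather than $1$. To recover the exact balance required by a single pseudo-run from $q$ to $q'$, I would append $\size O - 1$ copies of the cyclic correction obtained by concatenating a $\cond {q'} q$ witness with a $\cond q {q'}$ witness (each such concatenation contributes a net flow that is balanced at both $q$ and $q'$ and has total effect $\vr 0$), thereby cancelling the surplus; equivalently, one can first argue abstractly that any multiset satisfying (a) together with the balance conditions for endpoints $q,q'$ and supported on a connected subgraph can be threaded into an actual pseudo-run. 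Strong connectivity of $\csg T$ is what guarantees the used edges can be completed to a connected Eulerian structure so that the multiset realizes as a single connected walk. The resulting $\vr f$ then realizes as a pseudo-run $q(\vr 0)\pseudorunarrow q'(\vr 0)$ that uses every orbit (since each $\vr f_o$ contributes a transition from $o$), and symmetrically for the backward pseudo-run, establishing $\myTheta_1$.

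The main obstacle is the realizability step in the converse direction: passing from a multiset of transitions satisfying the numerical flow conditions (a)--(d) to an actual pseudo-run. This is the data-setting analogue of the classical fact that an Eulerian balance condition plus connectivity yields an Eulerian path, and the care needed here is twofold---ensuring the chosen multiset induces a \emph{connected} subgraph of $\csg T$ (which is where strong connectivity is invoked, possibly after adding balanced cyclic padding to reach every needed edge), and checking that the additive combination of per-orbit witnesses can be corrected to have endpoint-excess exactly one without destroying conditions (a), (b) or the property of touching every orbit. I expect the bookkeeping of the surplus cancellation, and the precise connectivity argument threading the multiset into a single walk, to be the technically delicate part, whereas the forward direction is essentially immediate from reading off the multiset of a given pseudo-run.
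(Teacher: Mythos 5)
Your forward direction coincides with the paper's and is fine. In the converse, however, two things go wrong. The smaller one is arithmetic: your ``cyclic correction'' (a $\cond {q'} q$ witness concatenated with a $\cond q {q'}$ witness) has endpoint excess \emph{zero} at both $q$ and $q'$, so appending any number of copies of it leaves the surplus in (c)--(d) at $\size O$; it cancels nothing. To bring the excess down to one you would instead add $\size O - 1$ copies of a $\cond {q'} q$ witness \emph{alone} (excess $-1$ at $q$ each), or equivalently keep a single $\vr f_o$ and use the remaining witnesses only as effect-zero, flow-balanced padding. This is fixable in a line.

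The genuine gap is the connectivity step, which is precisely where the data-specific content of the lemma lives and which your proposal leaves to the classical finite-graph intuition. Conditions (a)--(d) are purely numerical flow conditions and do \emph{not} force the support of a witness $\vr f_o$ to be connected: it may consist of a $q$-to-$q'$ path together with disjoint balanced cycles, and the required orbit-$o$ transition may sit on such a stray cycle, so it cannot be discarded. Your proposed remedy --- ``balanced cyclic padding to reach every needed edge,'' invoking strong connectivity of $\csg T$ --- fails for two reasons: a cycle in the state graph is flow-balanced but its total \emph{vector effect} is in general nonzero, so padding with arbitrary connecting cycles destroys condition (a); and $\csg T$ is an infinite (orbit-finite) graph, so the Euler argument cannot be run on it directly in any case. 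The paper supplies exactly the missing mechanism: restrict atoms to the finite set $S = \supp{\vr f} \cup \supp{\vr f'}$, \emph{enlarge} $S$ using Corollary \ref{cor:ofg} so that the finite state graph $\csg{T_S}$ of the restricted plain \vass $\V_S$ is strongly connected, and then symmetrize by summing the images $\sigma(\vr f)$, $\sigma(\vr f')$ over all $\sigma \in \Autparco S \A$. By Claim \ref{claim:v} the resulting multiset contains \emph{every} transition of $T_S$, so its support is the entire strongly connected graph $\csg{T_S}$ and connectivity comes for free; condition (a) is preserved because each image has effect $\vr 0$, the endpoint conditions survive because $\sigma(q)=q$ and $\sigma(q')=q'$ (Claim \ref{claim:invq}), and every orbit of $T$ is still represented since $S$ contains the support of a transition from each orbit. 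Without this finite-restriction-plus-automorphism-averaging step, your Euler threading has no connected finite structure to act on, so the converse direction does not go through as written.
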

\begin{proof}
    The only if direction of the characterisation is immediate, as the multiset of transitions used in a~pseudo-run
    $q(\vr 0) \pseudorunarrow q'(\vr 0)$
    necessarily contains some transition from every orbit and satisfies all the conditions (a)--(d),
    and likewise for a~pseudo-run $q'(\vr 0) \pseudorunarrow q(\vr 0)$.
    For the opposite direction, suppose that for every orbit $o\in O$ there are finite multisets $\vr f_o, \vr f'_o$ 
    satisfying $\cond q {q'}$
    and $\cond {q'} q$, respectively, each of them containing some transition from $o$.
    Let 
\[
\vr f \ := \ 
\oplus \setof{\vr f_o}{o \in O} \qquad\qquad
\vr f' \ := \ 
\oplus \setof{\vr f'_o}{o \in O}
\qquad\qquad
    S  \ := \ \supp {\vr f} \cup \supp {\vr f'},
\]
    where $\oplus$ denotes the multiset sum operator.
    Thus $S$ is the (finite) set of atoms used in all the transitions appearing in $\vr f$ or $\vr f'$.
    Similarly as in the proof of Lemma \ref{lem:suff} we use the subgroup
    $\Autparco S \A \subseteq \Aut \A$ of automorphisms $\sigma$ of $\A$ that
    are identity outside $S$, and
    define a plain \vass $\V_S = (L_S, \emptyset, \normalcounters_S, \emptyset, T_S)$
    by restricting the set of atoms to $S$.
    Locations of $\V_S$ are $\Loc_S := \Loc \times (\registers \to S \cup \{\bot\})$, its places are 
    $\normalcounters_S := \normalcounters \cup \atomcounters{\times} S$,
    and its transitions $T_S\subseteq T$ are all transitions of $\V$ that use only atoms from $S$:
    $T_S = \setof{t\in T}{\supp t\subseteq S}.$
    The state graph $\csg{T_S}$ is a subgraph of $\csg T$.
    As $\csg T$ is strongly connected,
    we use Corollary \ref{cor:ofg} to deduce that, for sufficiently large $S$, the state graph $\csg{T_S}$
    is also strongly connected.
    Therefore we enlarge $S$, if necessary, to assure that $\csg{T_S}$ is strongly connected.
    
    As finite multisets $\vr f, \vr f'$ are just nonnegative vectors $\cgN{T}$, 
    they inherit the natural (pointwise) action of $\Aut \A$. 
    Basing on $\vr f$, $\vr f'$ we define two larger multisets of transitions
    by applying all automorphisms from $\Autparco S \A$ to $\vr f$ and $\vr f'$, respectively, and summing up
    all the resulting multisets:
    \[
        \vr g \ := \ \oplus \setof{\sigma(\vr f)}{\sigma \in \Autparco S \A}    \qquad\qquad
        \vr g' \ := \ \oplus \setof{\sigma(\vr f')}{\sigma \in \Autparco S \A}. 
    \]
    By Claim \ref{claim:v}, 
    each of $\vr g, \vr g'$ contains all transitions from $T_S$.
    Furthermore, the multiset $\vr h = \vr f \oplus \vr g \oplus \vr g'$ satisfies 
    $\cond {\widehat q} {\widehat q'}$,
    where
    $\widehat q$, $\widehat q'$ are locations (=states) of $\V_S$ corresponding to $q$ and $q'$ respectively.
    Likewise, the multiset $\vr h' = \vr f' \oplus \vr g \oplus \vr g'$ satisfies 
    $\cond {\widehat q'} {\widehat q}$.
    Using the standard Euler argument in the (strongly) connected graph $\csg{T_S}$, 
    and relying on conditions (b)--(d),
    we deduce existence of a pseudo-run in $\V_S$
    that uses exactly transitions
    $\vr h$.
    Due to condition (a), this is a pseudo-run $\widehat q(\vr 0) \pseudorunarrow \widehat q'(\vr 0)$
    in $\V_S$.
    Likewise we deduce a pseudo-run $\widehat q'(\vr 0) \pseudorunarrow \widehat q(\vr 0)$
    in $\V_S$.
    The pseudo-runs are essentially also pseudo-runs in $\V$, both supported by $S$,
    and both using some transition from every orbit in $T$.
    This completes the proof of the characterisation. % (a)--(e).
\end{proof}

\para{Reduction step}
We define a~\dvass $\widehat \V$ by removing some useless orbit of transitions,
$\widehat \V = (\Loc, \registers, \normalcounters, \atomcounters, \widehat T)$.
It has the same locations, registers and places as $\V$.

\begin{lemma}[Progress]
$\orbsize{\widehat T} < \orbsize{T}$, and hence $\rank{\widehat \V} < \rank{\V}$.
\end{lemma}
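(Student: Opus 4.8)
The plan is to observe that this is an essentially immediate consequence of the construction of $\widehat \V$ together with Lemma~\ref{lem:theta1}, so I would keep the proof short. First I would invoke the standing hypothesis of this subsection, namely that $\V$ violates $\myTheta_1$; after the preceding restriction we may assume $\csg T$ is strongly connected, so Lemma~\ref{lem:theta1} applies and tells us that \emph{not} all orbits of transitions are useful. Hence there exists at least one useless orbit $o \subseteq T$, and this is precisely the orbit whose removal defines the reduced system, i.e.\ $\widehat T = T \setminus o$.

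Next I would verify the two asserted inequalities. Since $o$ is a single orbit and $T$ is a finite union of orbits, $\widehat T = T \setminus o$ is again a finite union of orbits, containing exactly one orbit fewer than $T$; in particular orbit-finiteness is preserved. Thus $\orbsize{\widehat T} = \orbsize{T} - 1$, which gives the strict inequality $\orbsize{\widehat T} < \orbsize{T}$. For the rank comparison I would recall that $\rank \V = (\size {\atomcounters}, \size {\normalcounters}, \orbsize T)$ is ordered lexicographically, and that by construction $\widehat \V = (\Loc, \registers, \normalcounters, \atomcounters, \widehat T)$ shares the atom places $\atomcounters$ and the plain places $\normalcounters$ with $\V$. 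Hence the first two coordinates of the two rank triples coincide, while the third strictly decreases; lexicographically this yields $\rank{\widehat \V} < \rank{\V}$.

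There is no genuine obstacle here. The only two points deserving (minor) care are that a useless orbit really exists — which is exactly what the failure of $\myTheta_1$ delivers through the characterisation of Lemma~\ref{lem:theta1} under the standing strong-connectivity assumption — and that deleting one orbit keeps the transition set orbit-finite, which is immediate. I would stress that the substantive content of the reduction, namely that bi-reachability of $q(\vr 0)$ and $q'(\vr 0)$ is preserved when passing from $\V$ to $\widehat \V$, is a separate claim; the present lemma records only the progress ingredient that guarantees termination of the algorithm, and so it follows purely from counting orbits.
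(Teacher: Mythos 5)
Your proof is correct and takes essentially the same route as the paper's: removing one useless orbit (whose existence under the failure of $\myTheta_1$ is exactly what Lemma~\ref{lem:theta1} provides) gives $\orbsize{\widehat T} < \orbsize{T}$, and since $\widehat \V$ keeps the same $\atomcounters$ and $\normalcounters$, the rank triple decreases lexicographically. The additional points you check explicitly — that a useless orbit exists and that $\widehat T$ remains a finite union of orbits — are left implicit in the paper but are correct and harmless.
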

\begin{proof}
As  some useless orbit of transitions is removed from $\widehat T$,
we have $\orbsize{\widehat T} < \orbsize{T}$, and therefore
$\rank {\widehat \V} = (\size {\atomcounters}, \size {\normalcounters}, \orbsize {\widehat T}) < 
(\size {\atomcounters}, \size {\normalcounters}, \orbsize T) = \rank \V$.
\end{proof}

\begin{lemma}[Correctness]
    The configurations 
    $q(\vect{0})$, $q'(\vect{0})$ are bi-reachable
    in $\V$ if and only if they are bi-reachable
    in $\widehat \V$.
\end{lemma}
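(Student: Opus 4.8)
The reduction step removes a single orbit of transitions that has been certified \emph{useless}, i.e.\ not belonging to any multiset satisfying $\cond q {q'}$ or $\cond {q'} q$. So the goal is to show that discarding such transitions preserves bi-reachability of $q(\vr 0)$ and $q'(\vr 0)$. Since $\widehat T \subseteq T$, every run of $\widehat\V$ is already a run of $\V$, so the \emph{if} direction (bi-reachability in $\widehat\V$ implies bi-reachability in $\V$) is immediate. The content is in the \emph{only if} direction: a run witnessing bi-reachability in $\V$ must be shown to be convertible into one that avoids the removed orbit.

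\textbf{Key steps for the nontrivial direction.} First I would take a cyclic run $q(\vr 0)\runarrow q'(\vr 0)\runarrow q(\vr 0)$ in $\V$ (this is exactly what bi-reachability of $q(\vr 0)$ and $q'(\vr 0)$ provides, by concatenating the two runs). The multiset $\vr f$ of transitions used in the first half $q(\vr 0)\runarrow q'(\vr 0)$ satisfies all of conditions (a)--(d): the net effect is $\vr 0$ because start and end vectors are both $\vr 0$, giving (a); and the Eulerian in/out balance at intermediate states together with the $+1$ surplus at $q$ and $q'$ gives (b)--(d). Symmetrically, the transitions of the second half form a multiset satisfying $\cond {q'} q$. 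Now suppose, for contradiction, that this run uses some transition $t$ from the removed orbit $o$. Then $\vr f$ (or the second multiset) contains a transition from $o$, and witnesses — together with the two multisets — that $o$ \emph{is} useful. This contradicts the fact that $o$ was selected as useless. Hence no run witnessing bi-reachability can use any transition from $o$, so every such run already lives in $\widehat\V$, establishing bi-reachability in $\widehat\V$.

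\textbf{The main obstacle.} The delicate point is to confirm that the multiset of transitions extracted from an actual \emph{run} (a nonnegative, physically realizable execution) is precisely a multiset satisfying $\cond q {q'}$, so that usefulness of $o$ as defined by the combinatorial conditions (a)--(d) is genuinely witnessed. This is the bridge between the semantic notion (a concrete run uses $t$) and the syntactic/combinatorial notion (existence of multisets satisfying the flow conditions). Conditions (b)--(d) are the standard Eulerian flow-conservation constraints read off a path in the state graph, and (a) is the effect-sum constraint; all four hold automatically for the transition multiset of any run from $q(\vr 0)$ to $q'(\vr 0)$. Once this correspondence is made explicit, the argument is essentially a contrapositive: \emph{usefulness} of $o$ is, by Lemma~\ref{lem:theta1} and its defining conditions, exactly the property of being usable in such a balanced multiset, and we removed $o$ only because it lacked this property. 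I would therefore structure the proof so that the forward direction reduces to observing that any bi-reachability witness induces multisets certifying usefulness of every orbit it touches — contradicting the removal — and the backward direction is the trivial inclusion $\widehat T\subseteq T$.
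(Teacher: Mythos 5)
Your overall architecture is the same as the paper's: the backward direction is the trivial inclusion $\widehat T\subseteq T$, and for the forward direction the paper's proof is exactly the one-sentence assertion that useless transitions cannot be used in runs between $q(\vect 0)$ and $q'(\vect 0)$ --- the contrapositive you spell out via the flow conditions (a)--(d). Your extraction of the two transition multisets from the halves of a cyclic run $q(\vect 0)\runarrow q'(\vect 0)\runarrow q(\vect 0)$, and the check that they satisfy $\cond{q}{q'}$ and $\cond{q'}{q}$ respectively, is correct.

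There is, however, one genuine gap in your contradiction step. Usefulness of an orbit $o$ demands \emph{two} multisets, one satisfying $\cond{q}{q'}$ and one satisfying $\cond{q'}{q}$, \emph{each of them} containing some transition from $o$. If the witnessing runs use $t\in o$ only in, say, the forward half $\pi : q(\vect 0)\runarrow q'(\vect 0)$, then your $\vr f$ contains an $o$-transition but $\vr f'$ (read off $\pi'$) need not, and the pair $(\vr f,\vr f')$ does \emph{not} witness usefulness; your claim that $\vr f$ ``witnesses, together with the two multisets, that $o$ is useful'' fails in this one-sided case. The repair is short but necessary, and it is precisely where bi-reachability (as opposed to plain reachability) is used: since you have the full cycle, replace $\pi'$ by the run $\pi';\pi;\pi' : q'(\vect 0)\runarrow q(\vect 0)$, whose multiset $\vr f'\oplus\vr f\oplus\vr f'$ still satisfies $\cond{q'}{q}$ --- adding the circulation $\vr f\oplus\vr f'$, which is in/out-balanced at every state and has total effect $\vect 0$, preserves conditions (a)--(d) --- and now does contain $t$. (Symmetrically, pad $\vr f$ when $t$ occurs only in $\pi'$.) With that one observation added, your argument is complete and coincides with what the paper leaves implicit.
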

\begin{proof}
Indeed, useless transitions can not be used in runs between $q(\vect{0})$ and $q'(\vect{0})$.
\end{proof}

\begin{lemma}[Effectiveness] \label{lem:eff1}
    The condition $\myTheta_1$ is decidable.
    When it fails, some useless orbit of transitions is computable.
\end{lemma}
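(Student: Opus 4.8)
The plan is to reduce the decidability of $\myTheta_1$ to finitely many instances of the \textsc{Multiset Sum} problem, leveraging the characterisation provided by Lemma~\ref{lem:theta1}. Recall that $\myTheta_1$ holds if and only if every orbit of transitions is \emph{useful}, i.e.\ for every orbit $o \in O$ there exist finite multisets $\vr f, \vr f'$ satisfying $\cond q {q'}$ and $\cond {q'} q$, respectively, each containing some transition from $o$. Since $O$ is finite, it suffices to decide usefulness of a single orbit $o$, and then iterate over all orbits. If all orbits are useful, $\myTheta_1$ holds; otherwise we output the first orbit found to be useless.

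The core step is to encode the existence of a multiset $\vr f : \cgN T$ satisfying $\cond q {q'}$ \emph{and} containing a transition from $o$ as a \textsc{Multiset Sum} instance over a suitable orbit-finite generating set. The conditions (a)--(d) are all \emph{linear} constraints on the multiplicities of transitions: condition (a) demands that the sum of effects be $\vr 0$, while (b)--(d) are flow-conservation constraints expressible in terms of the incoming/outgoing counts at each state. The plan is to associate with each transition $t = (s, \vr v, s') \in T$ a composite vector recording simultaneously its effect $\vr v \in \datavec$ and its contribution to the Euler/flow balance, namely $+\vr 1_s$ as an outgoing mark and $-\vr 1_{s'}$ as an incoming mark over the orbit-finite set of states $Q$. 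Summing such composite vectors over a multiset then yields, in the $Q$-component, exactly the net flow imbalance, and conditions (b)--(d) prescribe that this imbalance equal $\vr 1_q \ominus \vr 1_{q'}$ (one net unit leaving $q$, one net unit entering $q'$). Thus $\cond q {q'}$ becomes the single requirement that the multiset sum equal a fixed target vector $\vr b = (\vr 0,\ \vr 1_q \ominus \vr 1_{q'})$ over the orbit-finite set $M = \setof{(\vr v,\ \vr 1_s \ominus \vr 1_{s'})}{(s,\vr v,s')\in T}$. To enforce that the multiset contains a transition from the specific orbit $o$, I would split off one representative $t_o \in o$ and ask instead whether the \emph{shifted} target $\vr b \ominus (\text{effect and flow of } t_o)$ is a multiset sum over $M$; since $M$ is closed under $\Aut\A$ and the target states $q,q'$ are invariant (Claim~\ref{claim:invq}), picking one representative of $o$ loses no generality.

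Both $M$ and $\vr b$ are orbit-finite and finitely representable, so Lemma~\ref{lem:HLT17} (decidability of \textsc{Multiset Sum}) applies directly, giving decidability of the existence of each required multiset $\vr f$; the symmetric construction with $q,q'$ swapped handles $\vr f'$ and $\cond {q'} q$. Conjoining the two decisions and iterating over the finitely many orbits decides usefulness of every orbit, hence decides $\myTheta_1$, and when it fails exhibits a witnessing useless orbit as required.

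\textbf{The main obstacle} I anticipate is verifying that the flow-conservation conditions (b)--(d) are faithfully captured by the single $Q$-component balance equation, including the fact that a multiset satisfying the linear balance constraints genuinely corresponds to a realisable collection of transitions; this is exactly the standard Euler argument already invoked in the proof of Lemma~\ref{lem:theta1}, so the linear encoding is sound precisely because (a)--(d) were \emph{defined} to be the Euler conditions. A secondary subtlety is ensuring that the generating set $M$, living over the orbit-finite set $\normalcounters \cup (\atomcounters{\times}\A) \cup Q$ rather than a plain finite set, remains a legitimate \textsc{Multiset Sum} instance; this follows since orbit-finite sets are closed under finite products and unions, so $M$ is orbit-finite and representable by one vector per orbit of $T$.
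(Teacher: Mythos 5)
Your proposal is correct and takes essentially the same approach as the paper: both decide usefulness of each orbit $o$ separately by encoding conditions (a)--(d) as a single \textsc{Multiset Sum} instance over effect vectors extended with state-flow coordinates (net imbalance $+1$ at $q$, $-1$ at $q'$, $0$ elsewhere), and then invoke Lemma~\ref{lem:HLT17}. The only divergence is the gadget enforcing ``contains a transition from $o$'' --- the paper adds an extra coordinate $*$ with auxiliary vectors $\vr x_t$ ($t\in o$) whose $*$-entry is $1$ and sets $\vr b(*)=1$, whereas you shift the target by one fixed representative $t_o\in o$; your variant is sound for exactly the reason you give, namely equivariance of the encoding, closure of $T$ under $\Aut \A$, and invariance of $q,q'$ under all automorphisms (Claim~\ref{claim:invq}).
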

\begin{proof}
    It is sufficient to prove that it is decidable if a given orbit $o\in O$ is useful.
    We show decidability by reduction to {\sc Multiset sum} (recall Lemma \ref{lem:HLT17}), i.e., we use
    the algorithm for {\sc Multiset Sum} to 
    check existence of a finite multiset $\vr f$ that satisfies conditions (a)--(d) and contains a transition from $o$
    (and likewise to check existence of $\vr f'$).
    Let $X = \normalcounters \cup \atomcounters {\times} \A$ and     $Y = X \cup Q \cup \set{*}$.
    The set $Y$ is orbit-finite.
    Let every transition $t = (s, \vr v, s')\in T$ determine a vector $\vr y_t : \cg{Y}$, and let every transition
    $t= (s, \vr v, s')\in o$ determine additionally a vector $\vr x_t : \cg{Y}$, each of them extending
    the vector $\vr v : \cg X$ as follows:
    \begin{align} \label{eq:xy}
    \begin{aligned} 
    &\vr x_t(s) = -1 \qquad\qquad
    \vr x_t(s') = 1 \qquad\qquad    
    \vr x_t(*) = 1, \\
    &\vr y_t(s) = -1 \qquad\qquad
    \vr y_t(s') = 1 \qquad\qquad    
    \vr y_t(*) = 0,
    \end{aligned}
    \end{align}
    and $\vr x_t(r) = \vr y_t(r) = 0$ for other states $r\in Q\setminus\set{s,s'}$.
    Intuitively,     
    the first two columns track contribution of $t$ to the number of transitions
    outgoing from $s$, or incoming to $s'$, 
    while the latter column tracks the number of usages of transitions from $o$.
    Likewise, let the target vector $\vr b : \cg {Y}$ extend $\vr 0 : \cg X$ by
    \begin{align} \label{eq:b}
    \begin{aligned} 
    \vr b(q) = -1 \qquad\qquad
    \vr b(q') = 1 \qquad\qquad
    \vr b(*) = 1,
    \end{aligned}
    \end{align}
    and $\vr b(s) = 0$ for other states $s\in Q \setminus \set{q,q'}$. 
    Let $M = \setof{\vr y_t}{t\in T} \cup \setof{\vr x_t}{t\in o}$, and consider the instance
    let $(M, \vr b)$ of {\sc Multiset Sum}.
    Observe that solutions of $(M, \vr b)$ necessarily use exactly one vector $\vr x_t$ exactly once, 
    as in \eqref{eq:btu} below. 
    It remains to prove that some finite multiset   $\vr f = \set{t, u_1, \ldots, u_m}$ of transitions from $T$, 
    where $t\in o$,
    satisfies the conditions (a)--(d)  exactly when
%    (note that vectors $\vr x_t$ are forcedly used exactly once):
%    $(M, \vr b)$ is a positive instance of {\sc Multiset Sum}, namely
%    $\vr b$ is a sum of a finite multiset of vectors from $M$.
%    Indeed, consider such a sum (note that vectors $\vr x_t$ are forcedly used exactly once):
    %
    \begin{align} \label{eq:btu}
    \vr b \ = \ \vr x_{t} + \vr y_{u_1} + \ldots + \vr y_{u_m}.
    \end{align}
    %
%    corresponding to the multiset $\vr f = \set{t, u_1, \ldots, u_m}$.
    As $\vr b(c) = 0$ for all $c\in \normalcounters \cup \atomcounters{\times}\A$, condition (a) is equivalent to
    the equality \eqref{eq:btu} restricted to $\normalcounters \cup \atomcounters{\times}\A$.
    By the first two columns in \eqref{eq:xy} and \eqref{eq:b}, and since $\vr b(s) = 0$ for $s\in Q\setminus\set{q, q'}$,
    condition (b) is equivalent to
    the equality \eqref{eq:btu} restricted to $Q\setminus\set{q,q'}$, and
    the conditions (c) and (d) are equivalent to
    the equality \eqref{eq:btu} restricted to $\set{q,q'}$.
%    Finally, $\vr f$ forcedly contains some transition from $o$, namely $t$,
%    due to the equality \eqref{eq:btu} restricted to $\set{*}$.
\end{proof}

\subsection{Violation of $\myTheta_2$} \label{sec:Theta2}

Suppose $\V$ violates $\myTheta_2$.
We  define a \dvass $\widehat \V$ of smaller rank and two its states $\widehat q, \widehat q'$  
such that bi-reachability of $q(\vr 0)$ and $q'(\vr 0)$ in $\V$ is equivalent to bi-reachability
of $\widehat q(\vr 0)$ and $\widehat q'(\vr 0)$ in $\widehat \V$.

\para{Pumpability and boundedness}
Any run of the form $q(\vr 0) \runarrow q(\vr \Delta)$ (resp.~$q(\vr \Delta) \runarrow q(\vr 0)$) 
we call \emph{forward} (resp.~\emph{backward}) \emph{pump} from $q$.
Likewise we define forward (resp.~backward) pumps from $q'$.

A plain place $h\in \normalcounters$ (resp.~an atom place $p\in \atomcounters$) we call \emph{forward pumpable} from $q$ if there is a pump
$q(\vr 0) \runarrow q(\vr \Delta)$ such that $\vr \Delta(h) > 0$
(resp.~$\vr \Delta(p,a) > 0$ for some atom $a\in \A$).
Symmetrically, 
a place $h\in \normalcounters$ (resp.~$p\in \atomcounters$) we call 
\emph{backward pumpable} from $q$ if there is a pump
$q(\vr \Delta) \runarrow q(\vr 0)$ such that $\vr \Delta(h) > 0$
(resp.~$\vr \Delta(p,a) > 0$ for some atom $a\in \A$).
Likewise we define places forward (resp.~backward) pumpable from $q'$.
Finally, a (plain or atom) 
place is called \emph{pumpable} if it is forward and backward pumpable both from $q$ and
from $q'$.
Otherwise, the place is called \emph{unpumpable}.

\begin{lemma} \label{lem:theta2}
$\myTheta_2$ holds if and only if all places are pumpable.
\end{lemma}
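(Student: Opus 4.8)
The plan is to prove the two implications separately; the only-if direction is immediate, and the if-direction rests on a monotonicity-and-concatenation argument for pumps.

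For the only-if direction, I would assume $\myTheta_2$ and read off pumpability of each place directly from the four witnessing runs. The run $q(\vr 0)\runarrow q(\vr\Delta)$ with $\vr\Delta\gg\vr 0$ is a single forward pump from $q$ that is positive on every plain place and, for each atom place $p$, positive on some $(p,a)$; hence every place is forward pumpable from $q$. The remaining three runs witness in the same way that every place is backward pumpable from $q$ (via $q(\vr\Gamma)\runarrow q(\vr 0)$), forward pumpable from $q'$ (via $q'(\vr 0)\runarrow q'(\vr\Gamma')$), and backward pumpable from $q'$ (via $q'(\vr\Delta')\runarrow q'(\vr 0)$), so every place is pumpable.

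The heart of the argument is the if-direction. The key tool is monotonicity of runs: if $s(\vr u)\runarrow s(\vr w)$ and $\vr z : \cgN{X}$ is any nonnegative vector (with $X = \normalcounters\cup\atomcounters{\times}\A$), then $s(\vr u\oplus\vr z)\runarrow s(\vr w\oplus\vr z)$, since adding the fixed baseline $\vr z$ to every configuration along the run preserves nonnegativity and leaves all transition effects unchanged. Monotonicity lets me concatenate pumps: from two forward pumps $q(\vr 0)\runarrow q(\vr\Delta_1)$ and $q(\vr 0)\runarrow q(\vr\Delta_2)$ I obtain, by raising the second by the baseline $\vr\Delta_1$, a single forward pump $q(\vr 0)\runarrow q(\vr\Delta_1\oplus\vr\Delta_2)$; backward pumps combine symmetrically, turning $q(\vr\Gamma_1)\runarrow q(\vr 0)$ and $q(\vr\Gamma_2)\runarrow q(\vr 0)$ into $q(\vr\Gamma_1\oplus\vr\Gamma_2)\runarrow q(\vr 0)$.

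Assuming every place is pumpable, I would then, for each place $x\in\normalcounters\cup\atomcounters$, fix a forward pump $\vr\Delta_x$ from $q$ positive on $x$ (on some $(p,a)$ when $x=p$ is an atom place), and, since $\normalcounters$ and $\atomcounters$ are finite, iterate the concatenation to build one forward pump $\vr\Delta := \oplus_x\vr\Delta_x$ from $q$. By construction $\vr\Delta\gg\vr 0$: it is positive on every plain place and, for each atom place $p$, positive on the atom chosen in $\vr\Delta_p$. Running the same construction on the backward pumps from $q$, the forward pumps from $q'$, and the backward pumps from $q'$ produces $\vr\Gamma,\vr\Gamma',\vr\Delta'\gg\vr 0$ together with their runs, which are precisely the four runs demanded by $\myTheta_2$. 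I expect no serious obstacle here: the single delicate point is the monotonicity property, which is the data analogue of the standard monotonicity of plain \vass and holds because adding a fixed baseline vector commutes with the natural action along a run.
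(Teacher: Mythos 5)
Your proposal is correct and takes essentially the same route as the paper: the paper likewise reads the only-if direction off the four runs of $\myTheta_2$ (which witness \emph{simultaneous} pumpability of all places), and its if-direction is precisely your composition of pumps, stated there as the observation that forward (resp.\ backward) pumps from $q$ (resp.\ $q'$) compose. Your monotonicity argument (adding a fixed nonnegative baseline preserves transition effects and nonnegativity, and the pumps all start and end in the same state, so no automorphism is needed) is exactly the standard justification the paper leaves implicit.
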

\begin{proof}
In one direction, 
$\myTheta_2$ amounts to \emph{simultaneous} (= using one pump) 
forward and backward pumpability of all places, from both $q$ and $q'$.
In the converse direction, suppose all places are forward and backward pumpable
from both $q$ and $q'$.
We observe that
forward (resp.~backward) pumps from $q$ compose, and hence all places are \emph{simultaneously} 
forward (resp.~backward) pumpable from $q$. Likewise for $q'$.
\end{proof}

We now introduce a suitable version of boundedness.
In case of plain places $h\in \normalcounters$ boundedness applies, as expected, to the value $\vr v(h)$
in a configuration $s(\vr v)$. 
On the other hand in case of atom places $p\in \atomcounters$, boundedness applies to the number of tokens on a~place.
For uniformity we define the~size of a~place $p\in\atomcounters$ in a~vector $\vr v$ as
\begin{align} \label{eq:size}
\vr v(p) = \sum \setof{\vr v(p,a)}{a\in\A, \ \vr v(p,a)>0} \in \N.
\end{align}
For a family $\mathcal F$ of runs, 
we say that a place $c\in \normalcounters \cup \atomcounters$ is \emph{bounded on $\mathcal F$ by $B\in\N$}
if for every configuration $s(\vr v)$ appearing in every run in the family $\mathcal F$,
$\vr v(c) \leq B$.
A place is \emph{bounded on}  $\mathcal F$, if it is bounded on $\mathcal F$ by some $B$.
Otherwise, the place is called \emph{unbounded} on $\mathcal F$.

As forward (resp.~backward) pumps compose,
every place which is forward (resp.~backward) pumpable from $q$ (resp.~$q'$) is necessarily unbounded
on respective pumps.
We show the opposite implication, namely unpumpable places are bounded on respective pumps:

\begin{lemma} \label{lem:bounded}
    A place which is not forward (resp.~backward) pumpable from $q$ is bounded on 
    forward (resp.~backward) pumps from $q$. Likewise for $q'$.
\end{lemma}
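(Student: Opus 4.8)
The plan is to prove the contrapositive, working out in detail the representative case of forward pumpability from $q$; the three remaining cases (backward pumps from $q$, and forward or backward pumps from $q'$) are entirely symmetric, obtained by reversing the direction of runs and using $\sigma(q')=q'$ in place of $\sigma(q)=q$. The tools are: \emph{monotonicity} of runs (if $s(\vr u)\runarrow s''(\vr w)$ and $\vr u\leq\vr u'$, then $s(\vr u')\runarrow s''(\vr w\oplus(\vr u'\ominus\vr u))$ using the very same transitions, since adding a constant nonnegative vector preserves nonnegativity of all intermediate configurations); equivariance of runs under $\Aut\A$; the invariance $\sigma(q)=q$ from Claim~\ref{claim:invq}; and the well-quasi-order $\wqorel$ from Lemma~\ref{lem:wqo}.

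Suppose, towards a contradiction, that a place $c\in\normalcounters\cup\atomcounters$ is not forward pumpable from $q$ yet is unbounded on the family of forward pumps from $q$. Then there is a sequence of configurations $s^{(n)}(\vr v^{(n)})$, each appearing on some forward pump, with $\vr v^{(n)}(c)\to\infty$ (for an atom place the value $\vr v(c)$ means its size, as in \eqref{eq:size}). Passing to a subsequence I may assume the values $\vr v^{(n)}(c)$ are strictly increasing, and then, invoking Lemma~\ref{lem:wqo}, pass to a further $\wqorel$-nondecreasing subsequence, whose first two elements retain the strict increase in the $c$-value. This yields configurations $\alpha=s(\vr a)$ and $\beta=s'(\vr b)$ with $\alpha\wqorel\beta$ and $\vr a(c)<\vr b(c)$. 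By definition of $\wqorel$ there is $\sigma\in\Aut\A$ with $\sigma(s)=s'$ and $\sigma(\vr a)\leq\vr b$; since $\sigma$ merely permutes atoms it preserves $c$-values (pointwise for a plain place, in total size for an atom place), so $\vr d\defeq\vr b\ominus\sigma(\vr a)\geq\vr 0$ satisfies $\vr d(c)=\vr b(c)-\vr a(c)>0$.

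Because $\alpha$ and $\beta$ lie on forward pumps, we have a prefix $q(\vr 0)\runarrow\beta$ and a suffix $\alpha\runarrow q(\vr\Delta_\alpha)$ with $\vr\Delta_\alpha\geq\vr 0$. Applying $\sigma$ to the suffix and using $\sigma(q)=q$ gives a run $s'(\sigma(\vr a))\runarrow q(\sigma(\vr\Delta_\alpha))$; as $\sigma(\vr a)\leq\vr b$, monotonicity lets us replay the same transitions from $\beta=s'(\vr b)$, producing $\beta\runarrow q(\sigma(\vr\Delta_\alpha)\oplus\vr d)$. Concatenating with the prefix yields a forward pump $q(\vr 0)\runarrow\beta\runarrow q(\vr\Delta)$ with $\vr\Delta=\sigma(\vr\Delta_\alpha)\oplus\vr d$. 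Since $\vr\Delta_\alpha\geq\vr 0$ and $\vr d(c)>0$, we get $\vr\Delta(c)>0$ when $c$ is a plain place, and $\vr\Delta(c)>0$ in the size sense, hence $\vr\Delta(c,a)>0$ for some atom $a$, when $c$ is an atom place. In either case $c$ is forward pumpable from $q$, a contradiction. I expect the main obstacle to be precisely this final construction: one must align the two states through the automorphism $\sigma$ (legitimate only because $q$ is $\Aut\A$-invariant) and then absorb the residual gap $\vr d$ via monotonicity, all the while checking that the surplus on $c$ is genuinely transferred to the final vector of the constructed pump, uniformly for plain and atom places.
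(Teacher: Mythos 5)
Your proof is correct and follows essentially the same route as the paper's: extract, via the \wqo\ $\wqorel$, a dominated pair of configurations with strictly increasing $c$-value along pumps, apply an automorphism $\sigma$ to the suffix (legitimate since $\sigma(q)=q$ by Claim~\ref{claim:invq}), and absorb the nonnegative difference by monotonicity to build a pump with $\vr\Delta(c)>0$. You are in fact slightly more explicit than the paper on two points it leaves implicit — that $\sigma$ preserves the size $\vr v(c)$ of an atom place as in \eqref{eq:size}, and the monotonicity step of replaying the $\sigma$-image of the suffix from the larger configuration — but the argument is the same.
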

\begin{proof}
    W.l.o.g.~we focus on forward pumps from $q$ only.
    (Backward pumps are tackled similarly as forward ones, but in the \emph{reversed} \dvass,
    whose transitions $\setof{(s', -\vr v, s)}{(s,\vr v, s')\in T}$ are inverses of transitions of $\V$.)
    Assuming that a place $c\in \normalcounters \cup \atomcounters$ is unbounded on
    forward pumps from $q$, we 
    show that $c$ is forward pumpable from $q$.
    Unboundedness of $c$ means that for every $i\in\N$ there is a run 
    \begin{align} \label{eq:vm}
    q(\vr 0) \runarrow s_i(\vr v_i) \runarrow q(\vr \Delta_i)
    \end{align}
    such that  
    $n_i := \vr v_i(c) > i$.
    By choosing a subsequence of $(n_i)_i$ we may assume w.l.o.g.~that this sequence is strictly increasing.
    As $\wqorel$ is a \wqo, for some $j<i$ we have $s_j (\vr v_j) \wqorel s_i(\vr v_i)$, i.e.,
    there is some $\sigma\in\Aut\A$ such that  $\sigma(s_j) = s_i$ and
    $\sigma(\vr v_j) \leq \vr v_i$.
    Equivalently, $\sigma(\vr v_j) + \vr \Delta = \vr v_i$ for some nonnegative vector $\vr \Delta$.
    Recall that $q = (\ell, \botval)$ and hence $\sigma(q) = q$.
    Therefore, we can construct a new run, by first using the first half of
    \eqref{eq:vm} to go from $p(\vr 0)$ to $s_i(\vr v_i)$, and then applying $\sigma$ to 
    the second half $s_j(\vr v_j) \runarrow q(\vr \Delta_j)$ to return from 
    $\sigma(s_j(\vr v_j)) = \sigma(s_j)(\sigma(\vr v_j))$:
    \[
    q(\vr 0) \runarrow s_i(\vr v_i) = \sigma(s_j)(\sigma(\vr v_j) + \vr \Delta)  \runarrow q(\sigma(\vr \Delta_j) + \vr \Delta).
    \]
    By strict monotonicity of $(n_i)_i$ we have $n_j < n_i$, i.e., $\vr v_j(c) < \vr v_i(c)$, which implies 
    $\vr \Delta(c) > 0$.
    The place $c$ is thus forward pumpable from $q$, as required.
\end{proof}

\para{Reduction step}
Let $B\in\N$ be the universal bound for all unpumpable places on all respective pumps.
Formally, let's assume that $B$ bounds every place which is not forward (resp.~backward) pumpable 
from $q$, on all forward (resp.~backward) pumps from $q$, and
$B$ also bounds every place which is not forward (resp.~backward) pumpable 
from $q'$, on all forward (resp.~backward) pumps from $q'$.
As $\V$ violates $\myTheta_2$, we know by Lemma \ref{lem:theta2} that there are some unpumpable
places.
We define a \dvass $\widehat \V = (\widehat \Loc, \widehat \registers, \widehat \normalcounters, \widehat \atomcounters, \widehat T)$ 
by, intuitively speaking, removing some such place.
Let $X = \normalcounters \cup \atomcounters{\times}\A$.

\medskip
{\bf Case 1: some plain place is unpumpable.}
Let $\widehat \atomcounters := \atomcounters$ and $\widehat \registers = \registers$.
We choose an arbitrary unpumpable $h\in \normalcounters$, 
and let $\widehat \normalcounters := \normalcounters\setminus\set{h}$.
We keep track of values on $h$ by extending the locations $\widehat \Loc := \Loc \times \setto{B}$.
Finally, transitions $\widehat T$ are obtained from transitions $T$ by replacing each 
$
t \ = \ ((\ell, \mu), \vr v, (\ell', \mu')) \in T
$
by all the transitions of the form
\[
\widehat t \ =  \ ((\pair \ell n, \mu), \ \vr{w}, (\pair {\ell'}{n+\vr v(h)}, \mu')),
\]
where 
$\vr{w}$ is the restriction of $\vr v : \datavec$ to 
$\widehat \normalcounters \cup \atomcounters{\times}\A$.
Let
$\widehat q = (\pair \ell 0, \botval)$ and $\widehat q'=(\pair {\ell'}{0}, \botval)$.

\medskip

{\bf Case 2: some atom place is unpumpable.}
Let $\widehat \Loc := \Loc$ and $\widehat \normalcounters := \normalcounters$.
We choose an arbitrary unpumpable $p\in \atomcounters$, 
remove place $p$, namely $\widehat \atomcounters := \atomcounters\setminus\set{p}$, and
add $B$ new registers, namely $\widehat \registers = \registers \cup \{r_1, \dots, r_{B}\}$, which store,
intuitively speaking, every possible content of the place $p$.
Using the new registers
the transitions $\widehat T$ track, intuitively speaking, 
the effect of transitions from $T$ on the removed place $p$.
Every register valuation $\mu$ naturally induces a vector $\widehat \mu : \cgN{\set{p}{\times}\A}$,
namely
$
\widehat \mu \ := \ \oplus \setof{(p,a)}{\mu(r_i)=a\neq\bot, \ i \in \setfromto 1 B}.
$
Using this notation we define the transitions $\widehat T$ by replacing every transition 
$((\ell, \nu), \vr v, (\ell', \nu')) \in T$ with all the transitions of the form
\[
\big((\ell, \nu\regplus \mu), \ \vr w, \ (\ell', \nu'\regplus \mu')\big),
\]
where $\vr w$ is the restriction of 
$\vr v : \datavec$ to 
$\normalcounters \cup \widehat \atomcounters{\times}\A$, and
$\vr v  = \vr w \oplus \widehat \mu' \ominus \widehat \mu$.
Let $\widehat q, \widehat q'$ be extensions of the states $q, q'$, respectively, by
empty valuation of all the new registers $\{r_1, \dots, r_{B}\}$.

\begin{lemma}[Progress]
$\rank{\widehat \V} < \rank{\V}$.
\end{lemma}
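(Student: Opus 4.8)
The goal is to show that each of the two reduction constructions strictly decreases the rank
$\rank{\V} = (\size{\atomcounters}, \size{\normalcounters}, \orbsize{T})$
in the lexicographic order. Since the rank is a triple compared lexicographically, it suffices in each case to identify the first coordinate that changes and verify it strictly decreases, while checking that no earlier coordinate increases. The plan is to treat Case~1 and Case~2 separately, reading off the effect of each construction on the three rank components directly from its definition.

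First I would handle \textbf{Case~1} (some plain place $h$ is unpumpable). Here the construction sets $\widehat\atomcounters := \atomcounters$, so the number of atom places is unchanged: $\size{\widehat\atomcounters} = \size{\atomcounters}$. The plain place $h$ is removed, so $\widehat\normalcounters = \normalcounters\setminus\set{h}$ gives $\size{\widehat\normalcounters} = \size{\normalcounters} - 1 < \size{\normalcounters}$. Since the first coordinate is unchanged and the second strictly decreases, we conclude
\[
\rank{\widehat\V} = (\size{\atomcounters}, \size{\normalcounters}-1, \orbsize{\widehat T}) < (\size{\atomcounters}, \size{\normalcounters}, \orbsize T) = \rank{\V},
\]
\emph{regardless} of the value of $\orbsize{\widehat T}$. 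This is the clean part: I do not need to control the number of orbits of $\widehat T$ at all, because the decrease already occurs in the second component, which dominates the third lexicographically. (The locations are blown up by the factor $\setto{B}$, but locations do not appear in the rank, so this is harmless.)

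Next I would handle \textbf{Case~2} (some atom place $p$ is unpumpable). Now an atom place is removed, $\widehat\atomcounters = \atomcounters\setminus\set p$, so $\size{\widehat\atomcounters} = \size{\atomcounters} - 1 < \size{\atomcounters}$, a strict decrease in the \emph{first} coordinate. Since the first coordinate dominates lexicographically, this immediately gives $\rank{\widehat\V} < \rank{\V}$ irrespective of what happens to $\size{\widehat\normalcounters}$ and $\orbsize{\widehat T}$. Here $\widehat\normalcounters = \normalcounters$ is unchanged, and although the construction adds $B$ new registers and may increase the number of transition orbits (each transition of $T$ is replaced by several transitions tracking $p$'s content via the registers), none of this matters: registers are absent from the rank, and any growth in $\orbsize{\widehat T}$ sits in the lowest-priority coordinate, which is irrelevant once a higher coordinate has strictly dropped.

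The only genuine point to verify — and the one I would state explicitly — is that in each case the reduction really does fall into the regime I claim, i.e.\ that by Lemma~\ref{lem:theta2} the violation of $\myTheta_2$ guarantees the existence of at least one unpumpable place, so that one of the two cases genuinely applies and an actual place (plain or atom) is available to remove. The \textbf{main (and only) subtlety} is purely bookkeeping: one must confirm that the auxiliary blow-ups introduced by the constructions — the $\setto{B}$ factor on locations in Case~1 and the $B$ extra registers in Case~2 — live entirely outside the rank triple, so that they cannot offset the strict decrease. Since the rank counts only atom places, plain places, and orbits of transitions, this is immediate, and the lemma follows in both cases.
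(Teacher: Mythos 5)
Your proof is correct and follows essentially the same route as the paper's: in Case 1 the second rank coordinate $\size{\normalcounters}$ strictly decreases while $\size{\atomcounters}$ is unchanged, and in Case 2 the first coordinate $\size{\atomcounters}$ strictly decreases, so the lexicographic comparison goes through irrespective of $\orbsize{\widehat T}$. Your explicit observations that locations and registers lie outside the rank triple, and that Lemma~\ref{lem:theta2} guarantees an unpumpable place exists, are just spelled-out versions of what the paper leaves implicit.
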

\begin{proof}
    In the former case $\widehat \atomcounters = \atomcounters$ and $\size {\widehat \normalcounters} < \size \normalcounters$, while in the latter case
    $\size{\widehat \atomcounters} < \size \atomcounters$. In each case, 
    $\rank {\widehat \V} = (\size {\widehat \atomcounters}, \size {\widehat \normalcounters}, \orbsize {\widehat T}) < 
    (\size {\atomcounters}, \size {\normalcounters}, \orbsize T) = \rank \V$.
\end{proof}

\begin{lemma}[Correctness]
    The configurations 
    $q(\vect{0})$, $q'(\vect{0})$ are bi-reachable
    in $\V$ if and only if $\widehat q(\vect{0})$, $\widehat q'(\vect{0})$  are bi-reachable
    in $\widehat \V$.
\end{lemma}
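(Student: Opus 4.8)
The plan is to handle both cases uniformly, the key point being that $\widehat\V$ is an exact encoding of $\V$ in which the removed place is tracked faithfully as long as its contents stay within the budget $B$. In Case~1 a $\widehat\V$-configuration $\pair\ell n(\vr w)$ corresponds to the $\V$-configuration $\ell(\vr v)$ with $\vr v(h)=n$ and $\vr v$ agreeing with $\vr w$ on all other places, and a transition $\widehat t$ is present precisely when the updated value $n+\vr v(h)$ lies in $\setto B$; in Case~2 the multiset of tokens on $p$ is recorded by the new registers through $\widehat\mu$, which represents a valid content exactly when $p$ carries at most $B$ tokens (cf.~\eqref{eq:size}). In either case this yields a two-way translation, matching the designated endpoints $\widehat q(\vr 0)\leftrightarrow q(\vr 0)$ and $\widehat q'(\vr 0)\leftrightarrow q'(\vr 0)$, between runs of $\widehat\V$ and those runs of $\V$ along which the removed place stays bounded by $B$. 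This translation gives the right-to-left implication directly, and reduces the left-to-right implication to the claim that bi-reachability of $q(\vr 0)$, $q'(\vr 0)$ in $\V$ can always be witnessed by runs keeping the removed place $\le B$.

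To prove that claim I would fix a witness consisting of runs $\pi : q(\vr 0)\runarrow q'(\vr 0)$ and $\rho : q'(\vr 0)\runarrow q(\vr 0)$, and bound the removed (hence unpumpable) place $c$ on every configuration $s(\vr v)$ occurring in $\pi$ or in $\rho$. Given such $s(\vr v)$, splitting the run containing it and recombining with the other run produces two cyclic runs passing through $s(\vr v)$: one of the form $q(\vr 0)\runarrow s(\vr v)\runarrow q(\vr 0)$ and one of the form $q'(\vr 0)\runarrow s(\vr v)\runarrow q'(\vr 0)$ (for instance, if $s(\vr v)$ lies on $\pi=\pi_1;\pi_2$, take $\pi_1;\pi_2;\rho$ and $\rho;\pi_1;\pi_2$). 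Reading $\vr\Delta=\vr 0$, each cyclic run through $q$ is simultaneously a forward pump $q(\vr 0)\runarrow q(\vr\Delta)$ and a backward pump $q(\vr\Delta)\runarrow q(\vr 0)$ from $q$, and likewise through $q'$.

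Since $c$ is unpumpable, it fails at least one of the four pumpability conditions, and for whichever it fails I would apply Lemma~\ref{lem:bounded} to the corresponding pump built above: this, together with the choice of $B$, forces $\vr v(c)\le B$. As $s(\vr v)$ was arbitrary, the whole witness keeps $c$ bounded by $B$, so translating $\pi$ and $\rho$ through the correspondence of the first paragraph produces runs $\widehat q(\vr 0)\runarrow \widehat q'(\vr 0)$ and $\widehat q'(\vr 0)\runarrow\widehat q(\vr 0)$ in $\widehat\V$, establishing bi-reachability there.

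I expect the main obstacle to be this pump-routing step: one must verify that every configuration of the witness genuinely lies on a pump of exactly the type (forward or backward, from $q$ or from $q'$) on which $c$ is already known to be bounded. This is precisely why one builds both the $q$-cyclic and the $q'$-cyclic runs and exploits that a single cyclic run is at once a forward and a backward pump, so that all four failure modes of pumpability are covered. The remaining content — that the encoding faithfully tracks the removed place within $[0,B]$ and respects the empty-register, zero-vector endpoints — is routine bookkeeping.
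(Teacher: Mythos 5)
Your proposal is correct and matches the paper's proof in all essentials: the right-to-left direction is the same immediate observation that $\widehat\V$-runs are $\V$-runs, and the left-to-right direction uses the identical key idea that the composed cyclic witness $\pi;\rho$ (resp.\ $\rho;\pi$) is itself a pump with $\vr\Delta=\vr 0$ of whichever type the removed place fails, so Lemma~\ref{lem:bounded} and the choice of $B$ bound that place on the entire witness, after which the encoding tracks it in locations or registers. The only difference is cosmetic: the paper compresses your explicit four-case ``pump-routing'' analysis into a single ``w.l.o.g.~not forward pumpable from $q$'' applied to $\pi;\pi'$.
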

\begin{proof}
    As $\widehat \V$ is obtained from $\V$ by restricting values in configurations on some places,
    each run in $\widehat \V$ is automatically a run in $\V$, and bi-reachability in $\widehat \V$ implies
    bi-reachability in $\V$.
    
    For the converse implication, suppose  $q(\vect{0})$, $q'(\vect{0})$ are bi-reachable
    in $\mathcal{V}$, and fix two arbitrary runs $\pi : q(\vect{0}) \runarrow q'(\vect{0})$ and
    $\pi' : q'(\vect{0}) \runarrow q(\vect{0})$.
    Consider any unpumpable place $c \in \normalcounters \cup \atomcounters$ and suppose, 
    w.l.o.g., that the place is not forward pumpable from $q$.
    By Lemma \ref{lem:bounded} the place is bounded by $B$ on all forward pumps from $q$.
    In particular, it is bounded by $B$ on the composed run $\pi; \pi' : q(\vect{0}) \runarrow q(\vect{0})$,
    i.e., on both $\pi$ and $\pi'$.
    Therefore 
    we know that in each configuration $(s, \vr v)$ in both runs,  $\vr v(c) \leq B$.
    If $c\in \normalcounters$ we keep track of $\vr v(c)$ in locations; and
    if $c\in \atomcounters$,
    we keep track of all tokens on $c$ by storing their atoms in the new registers $\{r_1, \dots, r_B\}$.
    Therefore, both runs are realisable in $\widehat \V$. 
\end{proof}

%\begin{lemma} \label{lem:equivcover}
%If states $s, s'\in Q$ are in the same orbit then $\coverset {s} {q(\vr 0)} = \coverset {s'} {q(\vr 0)}$.
%\end{lemma}
%
%\begin{proof}
%Choose an arbitrary automorphism $\pi \in \Aut \A$ so that $s' = \pi(s)$.
%Recall that $q = (\ell, \botval)$ and hence $\pi(q(\vr 0)) = q(\vr 0)$.
%Then $\coverset {s'} {q(\vr 0)} = \coverset {\pi(s)} {\pi(q(\vr 0))}$.
%The latter set is equal to $\pi\big(\coverset {s} {q(\vr 0)}\big)$, since the reachability relation is invariant
%under the action of $\Aut \A$, and further equal to
%$\coverset {s} {q(\vr 0)}$, since coverability sets are, by definition, closed under the action of $\Aut \A$.
%\end{proof}

\begin{lemma}[Effectiveness] \label{lem:eff}
The set of pumpable places and the universal bound $B$ are computable
  (and therefore the condition $\myTheta_2$ is decidable).
\end{lemma}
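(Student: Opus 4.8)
The plan is to reduce every pumpability question, and the computation of the bound $B$, to coverability-set computations available through Lemma~\ref{lem:cover}. Throughout I would use that $q$ and $q'$ are invariant under $\Aut\A$ (Claim~\ref{claim:invq}), so that a configuration is $\wqorel$-below a state-$q$ ideal only if its own state is $q$; this makes "state-$q$ ideals" well defined.

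First I would decide pumpability. I claim that a plain place $h$ is forward pumpable from $q$ if and only if the computed representation of $\coverset{q(\vr 0)}$ contains a simple $\omega$-configuration $q(f)$ at state $q$ with $f(h)=\omega$; for an atom place $p$ the condition is that some such $q(f)$ has $f(p,a)=\omega$ for some $a$ (equivalently, by simplicity, for almost all $a$). For the forward implication, if $h$ is forward pumpable via $q(\vr 0)\runarrow q(\vr\Delta)$ with $\vr\Delta(h)>0$, then, since forward pumps compose, $q(\vr 0)\runarrow q(n\vr\Delta)$ for every $n$, so reachable state-$q$ configurations have unbounded $h$-value and the representation must carry an $\omega$ on $h$ at state $q$; the atom-place case is analogous using the place size of \eqref{eq:size}. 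For the converse, a state-$q$ ideal with an $\omega$ on the relevant coordinate contains state-$q$ configurations of arbitrarily large value there, and each is $\wqorel$ some reachable configuration, which by invariance of $q$ again has state $q$ and at least as large a value; such a reachable configuration is the endpoint of a forward pump exhibiting pumpability. Backward pumpability from $q$ is the same statement for the reversed \dvass (transitions $\setof{(s',-\vr v,s)}{(s,\vr v,s')\in T}$), and pumpability from $q'$ is the same with $q'$ in place of $q$. As all four coverability sets are computable, pumpability of each place is decidable, the set of pumpable places is computable, and by Lemma~\ref{lem:theta2} the condition $\myTheta_2$ is decidable.

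It remains to compute the universal bound $B$. Fix an unpumpable place $c$ and, w.l.o.g., assume it is not forward pumpable from $q$; by Lemma~\ref{lem:bounded} it is bounded on forward pumps, but I must bound it at every intermediate configuration, not merely at the endpoints. Every configuration occurring on a forward pump $q(\vr 0)\runarrow q(\vr\Delta)$ lies in $F\cap H$, where $F=\coverset{q(\vr 0)}$ is the downward-closed forward coverability set and $H$ is the upward-closed set of configurations that can cover $q(\vr 0)$, i.e.\ from which some state-$q$ configuration is reachable. I would compute $F$ by Lemma~\ref{lem:cover}, obtaining ideals $\down{s_1(f_1)},\dots,\down{s_n(f_n)}$, and compute $H$ by the standard backward-coverability saturation, which terminates because $\wqorel$ is a \wqo (Lemma~\ref{lem:wqo}) and \dvass admit effective predecessor bases. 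I would then set $B_c$ to be the largest $c$-value (in the sense of \eqref{eq:size} for atom places) attained in any ideal $\down{s_i(f_i)}$ that meets $H$.

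The point is that $B_c$ is finite. If some ideal $\down{s_i(f_i)}$ with an $\omega$ on $c$ met $H$, then $F\cap H$ would contain configurations of unbounded $c$-value that can cover $q$; covering a configuration in $F$ yields a reachable configuration of at least as large $c$-value (the plain-place values and the atom-place size of \eqref{eq:size} are preserved by automorphisms), and covering $q$ from it yields a run back to a state-$q$ configuration. These are exactly runs $q(\vr 0)\runarrow r(\vr w)\runarrow q(\vr\Delta')$ with $\vr w(c)$ unbounded, so the contrapositive of Lemma~\ref{lem:bounded} would make $c$ forward pumpable, contradicting our assumption. Hence every ideal meeting $H$ has finite $c$-value and $B_c$ is a genuine natural number bounding $c$ on all forward pumps from $q$, since those configurations lie in $F\cap H$. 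Taking $B$ to be the maximum of the $B_c$ over all unpumpable places and all four pump directions (passing to the reversed \dvass and to $q'$ as appropriate) yields the required universal bound. The main obstacle is precisely this bound: detecting pumpability needs only the forward coverability set, whereas controlling intermediate configurations forces us to intersect forward reachability with the backward-reachability set $H$, so the computability of $H$ is the crux of the argument.
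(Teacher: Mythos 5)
Your proposal is correct in substance, but it takes a genuinely different route from the paper on the crux of the lemma, the bound $B$. For pumpability you test for $\omega$-entries at state-$q$ ideals, whereas the paper tests the weaker condition $s_i=q$ and $g_i(c)>0$; the two tests are equivalent precisely because pumps compose, so this difference is cosmetic (though your parenthetical ``equivalently, by simplicity, for almost all $a$'' is wrong: a simple $\omega$-configuration may have $f(p,a)=\omega$ on finitely many atoms and $0$ elsewhere --- harmless, since you only need \emph{some} $a$). For $B$, you intersect the forward ideals with the backward-computed upward-closed set $H$ of configurations from which state $q$ is reachable, and take the maximum $c$-value over ideals meeting $H$; the paper instead sets $B:=\max\setof{g_i(c)}{g_i(c)<\omega}$ over \emph{all} ideals of $\coverset{q(\vr 0)}$ and proves this suffices by a purely forward argument: if a pump configuration $s(\vr v)$ exceeded $B$ it would lie in an ideal with $\omega$ on $c$, and then directedness of the ideal plus invariance of $q$ under $\Aut\A$ let one re-route the pump's \emph{own} return leg $\rho: s(\vr v)\runarrow q(\vr w)$ through a dominating reachable configuration via an automorphism, manufacturing a pump with positive effect on $c$ --- contradiction. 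In other words, the paper never needs $H$: the existential fact ``this configuration can return to $q$'' is supplied by the pump itself, not by a precomputed backward set. Your approach buys a potentially tighter bound and a conceptually transparent characterisation ($F\cap H$ contains exactly the pump configurations), but it costs an extra effectiveness ingredient --- backward-coverability saturation with effective predecessor bases for \dvass, including orbit-finite representation of upward-closed sets and decidability of ideal-versus-basis intersection --- which you assert rather than prove; this is standard in the well-structured-systems-with-data literature but lies outside the toolset the paper develops (Lemma~\ref{lem:cover} is its only coverability tool), so in a self-contained write-up you would owe a separate lemma for it, whereas the paper's argument closes the gap with forward analysis alone.
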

\begin{proof}
For a simple $\omega$-configuration $s(f)$ and an atom place $p\in \atomcounters$ we define
$f(p) = \omega$ if $f(p,a)=\omega$ for some atom $a\in\A$, and otherwise
$
f(p) \ := \ \sum \setof{f(p,a)}{f(p,a)>0, \ a\in\A}.
$
%(under the proviso that $\omega+n=\omega+\omega = \omega$, and every infinite sum of non-zeros is $\omega$).
%
Relying on Lemma \ref{lem:theta2}, we test forward (resp.~backward) 
pumpability of every place from $q$ (and likewise from $q'$).
Specifically, to test if a place is forward pumpable from $q$, 
we apply Lemma \ref{lem:cover} and compute a simple-ideal representation of the coverability set
$\coverset {q(\vr 0)}$, given by simple $\omega$-configurations $G = \set{s_1(g_1), \ldots, s_n(g_n)}$.
A plain or atom place $c \in \normalcounters\cup\atomcounters$ is forward pumpable from $q$ if 
for some $i\in\setfromto 1 n$ we have $s_i = q$ and $g_i(c)>0$.
Likewise we test if a place is backward pumpable from $q$
(using the reverse \dvass), or forward (resp.~backward) pumpable from $q'$.
The set of pumpable places is thus computable.

Suppose there is some unpumpable place $c$, w.l.o.g.~say not forward pumpable from $q$.
By Lemma \ref{lem:bounded}, the place $c$ is bounded on all forward pumps from $q$, but may be a priori 
unbounded on some other run, and therefore
it may happen that $g_i(c)=\omega$ for some $i\in\setfromto 1 n$.
Nevertheless we claim the following bound on forward pumps:
\begin{align} \label{eq:B}
B \ := \ \max \setof{g_i(c)}{g_i(c) < \omega, \ i \in \setfromto 1 n}.
\end{align}
\vspace{-5mm}
%is large enough as a bound on forward pumps (thus completing the proof of Lemma \ref{lem:eff}):
%
\begin{claim}
The place $c$ is bounded on all forward pumps from $q$ by $B$.
\end{claim}
\begin{claimproof}
%In order to prove the claim, we 
Consider an atom place $p\in P$ (the argument for plain places is similar but simpler). 
Towards contradiction, suppose that some configuration $s(\vr v)$ on some forward pump from $q$ 
satisfies $\vr v(p) > B$.
We have thus two runs:
\[
\pi \ : \ q(\vr 0) \runarrow s(\vr v) \qquad\qquad \rho : s(\vr v) \runarrow q(\vr w)
\]
for some $s \in Q$ and nonnegative vector $\vr w$.
Therefore $s(\vr v) \in \coverset {q(\vr 0)}$, and hence
$s(\vr v) \in I = \down{s_j(g_j)}$ for some $j\in\setfromto 1 n$.
As $\vr v(p)$ is larger than all $g_i(p)  < \omega$ for  $i\in\setfromto 1 n$,
we deduce that $g_j(p) = \omega$.
Therefore the ideal $I$ contains configurations $s'(\vr v')$ with arbitrary large values of $\vr v'(p)$.
In particular, $I$ contains some configuration $\vr s'(\vr v')$ with $\vr v(p) < \vr v'(p)$.
As $I$ is directed, 
it contains a configuration $s''(\vr v'')$ such that
\[
s(\vr v) \wqorel s''(\vr v'') \qquad \qquad s'(\vr v') \wqorel s''(\vr v''),
\]
which implies $\vr v(p) < \vr v''(p)$.
Finally, as $I\subseteq \coverset {q(\vr 0)}$, it must contain
a configuration ${\overline s}(\overline{\vr v})$ reachable from $q(\vr 0)$, 
such that
$s''(\vr v'') \wqorel \overline s(\overline{\vr v})$. 
We thus have $s(\vr v) \wqorel \overline s(\overline{\vr v})$ and $\vr v(p) < \overline{\vr v}(p)$, which
%\[
%\overline \pi \ : \ q(\vr 0) \runarrow \overline s(\overline{\vr v}), \qquad\qquad
%s(\vr v) \wqorel \overline s(\overline{\vr v}), \qquad\qquad
%\vr v (p) < \overline{\vr v}(p).
%\]
means that
for some automorphism $\sigma\in\Aut A$ and vector $\overline{\vr w}$ we have
\[
\sigma(s) = \overline s
\qquad\qquad
\sigma(\vr v) + \overline {\vr w} = \overline{\vr v}
\qquad\qquad
\overline {\vr w}(p)  > 0.
\]
By composing runs $\overline\pi : q(\vr 0) \to \overline s(\overline{\vr v})$ and 
$\sigma(\rho) : \overline s(\sigma(\vr v)) \runarrow q(\sigma(\vr w))$ we get a run
\[
\overline\pi; \sigma(\rho) \ : \ q(\vr 0) \runarrow q(\sigma(\vr w) + \overline{\vr w}),
\]
%for some $\overline{\vr w}$ satisfying $\overline{\vr w}(p, a) > 0$, 
i.e., we deduce that $p$ is forward pumpable from $q$, which is a contradiction.
\end{claimproof}
The universal bound is computed as the maximum of \eqref{eq:B} for forward and backward pumps 
from $q$ and $q'$, for all unbounded places $c$.
\end{proof}

\section{Final remarks}

We show decidability of the bi-reachability problem for Petri nets with equality data.
The problem subsumes coverability, and reachability in the reversible subclass, and therefore
the result pushes further the decidability border towards the reachability problem.
The latter problem (which we believe to be decidable) is still beyond our reach, and
development of this paper is not sufficient.
For instance, the approach of proving of Lemma \ref{lem:suff} would fail for reachability, as we rely on the fact that 
bi-reachability implies a cycle.
Moreover, $\myTheta_1$ reduction step would fail as well, as it assumes that
a transition (orbit) is either unusable, or usable unboundedly, 
while in case of reachability a transition can be usable only \emph{boundedly}.

Our approach is specific to equality data, and thus we leave unresolved the status of bi-reachability 
in case of ordered data.
In case of ordered data domain the approach of proving Lemma \ref{lem:suff} would fail again,
as the trick  of applying all permutations of $S$ would be impossible. 
Moreover, it is not clear how to implement $\myTheta_2$ reduction step, as no procedure computing 
coverability sets is known.

%\newpage

\bibliography{bib,pn-bib}

\begin{thebibliography}{10}

\bibitem{Blondin23}
Michael Blondin and Fran{\c{c}}ois Ladouceur.
\newblock Population protocols with unordered data.
\newblock In Kousha Etessami, Uriel Feige, and Gabriele Puppis, editors, {\em Proc.~{ICALP} 2023}, volume 261 of {\em LIPIcs}, pages 115:1--115:20. Schloss Dagstuhl - Leibniz-Zentrum f{\"{u}}r Informatik, 2023.
\newblock URL: \url{https://doi.org/10.4230/LIPIcs.ICALP.2023.115}, \href {https://doi.org/10.4230/LIPICS.ICALP.2023.115} {\path{doi:10.4230/LIPICS.ICALP.2023.115}}.

\bibitem{atombook}
Miko{\l}aj Boja{\'n}czyk.
\newblock Slightly infinite sets.
\newblock 2019.
\newblock URL: \url{https://www.mimuw.edu.pl/~bojan/paper/atom-book}.

\bibitem{BKL11full}
Miko{\l}aj Boja{\'n}czyk, Bartek Klin, and S{\l}awomir Lasota.
\newblock Automata theory in nominal sets.
\newblock {\em Logical Methods in Computer Science}, 10(3:4):paper 4, 2014.

\bibitem{BKLT13}
Miko{\l}aj Boja{\'{n}}czyk, Bartek Klin, Slawomir Lasota, and Szymon Toru{\'{n}}czyk.
\newblock Turing machines with atoms.
\newblock In {\em LICS}, pages 183--192, 2013.

\bibitem{Cervesato99}
Iliano Cervesato, Nancy~A. Durgin, Patrick Lincoln, John~C. Mitchell, and Andre Scedrov.
\newblock A meta-notation for protocol analysis.
\newblock In {\em Proc.~{CSFW} 1999}, pages 55--69. {IEEE} Computer Society, 1999.
\newblock \href {https://doi.org/10.1109/CSFW.1999.779762} {\path{doi:10.1109/CSFW.1999.779762}}.

\bibitem{CDLMS99}
Iliano Cervesato, Nancy~A. Durgin, Patrick Lincoln, John~C. Mitchell, and Andre Scedrov.
\newblock A meta-notation for protocol analysis.
\newblock In {\em Proc. {CSFW} 1999}, pages 55--69, 1999.

\bibitem{D02a}
Giorgio Delzanno.
\newblock An overview of {MSR(C):} {A} {CLP}-based framework for the symbolic verification of parameterized concurrent systems.
\newblock {\em Electr. Notes Theor. Comput. Sci.}, 76:65--82, 2002.

\bibitem{D02b}
Giorgio Delzanno.
\newblock Constraint multiset rewriting.
\newblock Technical Report DISI-TR-05-08, DISI, Universit\'a di Genova, 2005.

\bibitem{Figueira17}
Diego Figueira, Ranko Lazic, J{\'{e}}r{\^{o}}me Leroux, Filip Mazowiecki, and Gr{\'{e}}goire Sutre.
\newblock Polynomial-space completeness of reachability for succinct branching {VASS} in dimension one.
\newblock In {\em Proc.~{ICALP} 2017}, volume~80 of {\em LIPIcs}, pages 119:1--119:14. Schloss Dagstuhl - Leibniz-Zentrum f{\"{u}}r Informatik, 2017.
\newblock URL: \url{https://doi.org/10.4230/LIPIcs.ICALP.2017.119}, \href {https://doi.org/10.4230/LIPICS.ICALP.2017.119} {\path{doi:10.4230/LIPICS.ICALP.2017.119}}.

\bibitem{FG09}
Alain Finkel and Jean Goubault{-}Larrecq.
\newblock Forward analysis for {WSTS}, part {I:} completions.
\newblock In Susanne Albers and Jean{-}Yves Marion, editors, {\em Proc.~{STACS} 2009}, volume~3 of {\em LIPIcs}, pages 433--444. Schloss Dagstuhl - Leibniz-Zentrum f{\"{u}}r Informatik, Germany, 2009.

\bibitem{FG12}
Alain Finkel and Jean Goubault{-}Larrecq.
\newblock Forward analysis for {WSTS}, part {II:} complete {WSTS}.
\newblock {\em Log. Methods Comput. Sci.}, 8(3), 2012.

\bibitem{GMPSZ22}
Moses Ganardi, Rupak Majumdar, Andreas Pavlogiannis, Lia Sch{\"{u}}tze, and Georg Zetzsche.
\newblock Reachability in bidirected pushdown {VASS}.
\newblock In {\em Proc.~{ICALP} 2022}, volume 229 of {\em LIPIcs}, pages 124:1--124:20. {Schloss Dagstuhl - Leibniz-Zentrum f{\"{u}}r Informatik}, 2022.

\bibitem{GL81}
Hartmann~J. Genrich and Kurt Lautenbach.
\newblock System modelling with high-level {Petri} nets.
\newblock {\em Theor. Comput. Sci.}, 13:109--136, 1981.

\bibitem{GHL23}
A.~Ghosh, P.~Hofman, and S.~Lasota.
\newblock Orbit-finite linear programming.
\newblock In {\em Proc.~LICS 2023}, pages 1--14, 2023.

\bibitem{GL24}
Arka Ghosh and Slawomir Lasota.
\newblock Equivariant ideals of polynomials.
\newblock In {\em Proc.~{LICS} 2024}, pages 38:1--38:14. {ACM}, 2024.
\newblock \href {https://doi.org/10.1145/3661814.3662074} {\path{doi:10.1145/3661814.3662074}}.

\bibitem{HLLLST16}
Piotr Hofman, Slawomir Lasota, Ranko Lazic, J{\'{e}}r{\^{o}}me Leroux, Sylvain Schmitz, and Patrick Totzke.
\newblock Coverability trees for petri nets with unordered data.
\newblock In Bart Jacobs and Christof L{\"{o}}ding, editors, {\em Proc.~{FOSSACS} 2016}, volume 9634 of {\em Lecture Notes in Computer Science}, pages 445--461. Springer, 2016.

\bibitem{J81}
Kurt Jensen.
\newblock Coloured {Petri} nets and the invariant-method.
\newblock {\em Theor. Comput. Sci.}, 14:317--336, 1981.

\bibitem{Kosaraju82}
S.~Rao Kosaraju.
\newblock Decidability of reachability in vector addition systems (preliminary version).
\newblock In {\em Proc.~{STOC} 1982}, pages 267--281, 1982.

\bibitem{Lambert92}
Jean{-}Luc Lambert.
\newblock A structure to decide reachability in {P}etri nets.
\newblock {\em Theor. Comput. Sci.}, 99(1):79--104, 1992.

\bibitem{Lasota16}
Slawomir Lasota.
\newblock Decidability border for {Petri} nets with data: {WQO} dichotomy conjecture.
\newblock In {\em Proc.~{Petri} {Nets} 2016}, volume 9698 of {\em Lecture Notes in Computer Science}, pages 20--36. Springer, 2016.

\bibitem{Las3steps}
Slawomir Lasota.
\newblock {VASS} reachability in three steps.
\newblock {\em CoRR}, abs/1812.11966, 2018.
\newblock URL: \url{http://arxiv.org/abs/1812.11966}, \href {https://arxiv.org/abs/1812.11966} {\path{arXiv:1812.11966}}.

\bibitem{LNORW07}
Ranko Lazic, Thomas~Christopher Newcomb, Jo{\"{e}}l Ouaknine, A.~W. Roscoe, and James Worrell.
\newblock Nets with tokens which carry data.
\newblock In {\em Proc.~{ICATPN} 2007}, volume 4546 of {\em Lecture Notes in Computer Science}, pages 301--320. Springer, 2007.

\bibitem{LT17}
Ranko Lazic and Patrick Totzke.
\newblock What makes {Petri} nets harder to verify: stack or data?
\newblock In {\em Concurrency, Security, and Puzzles - Essays Dedicated to Andrew William Roscoe on the Occasion of His 60th Birthday}, volume 10160 of {\em Lecture Notes in Computer Science}, pages 144--161. Springer, 2017.

\bibitem{L13}
J{\'{e}}r{\^{o}}me Leroux.
\newblock Vector addition system reversible reachability problem.
\newblock {\em Log. Methods Comput. Sci.}, 9(1), 2013.

\bibitem{LS15}
J{\'{e}}r{\^{o}}me Leroux and Sylvain Schmitz.
\newblock Demystifying reachability in vector addition systems.
\newblock In {\em Proc.~{LICS} 2015}, pages 56--67. {IEEE} Computer Society, 2015.

\bibitem{LS19}
J{\'{e}}r{\^{o}}me Leroux and Sylvain Schmitz.
\newblock Reachability in vector addition systems is primitive-recursive in fixed dimension.
\newblock In {\em Proc.~{LICS} 2019}, pages 1--13. {IEEE}, 2019.

\bibitem{Mayr81}
Ernst~W. Mayr.
\newblock An algorithm for the general {P}etri net reachability problem.
\newblock In {\em Proc.~{STOC} 1981}, pages 238--246, 1981.

\bibitem{RF11}
Fernando Rosa{-}Velardo and David de~Frutos{-}Escrig.
\newblock Decidability and complexity of {Petri} nets with unordered data.
\newblock {\em Theor. Comput. Sci.}, 412(34):4439--4451, 2011.

\bibitem{Verma05}
Kumar~Neeraj Verma and Jean Goubault{-}Larrecq.
\newblock {Karp-Miller} trees for a branching extension of {VASS}.
\newblock {\em Discret. Math. Theor. Comput. Sci.}, 7(1):217--230, 2005.
\newblock URL: \url{https://doi.org/10.46298/dmtcs.350}, \href {https://doi.org/10.46298/DMTCS.350} {\path{doi:10.46298/DMTCS.350}}.

\end{thebibliography}

\end{document}